\documentclass{article} %
\usepackage{iclr2027_conference,times}

\usepackage{amsmath,amsfonts,bm,amsthm}

\newtheorem{theorem}{Theorem}[section]

\newtheorem{proposition}{Proposition}[theorem]
\newtheorem{lemma}[theorem]{Lemma}
\newtheorem{definition}{Definition}[section]
\theoremstyle{remark}
\newtheorem{remark}[theorem]{Remark}

\newcommand{\ourmethod}{\textsc{RW-Flow}}
\newcommand{\stopgrad}{\texttt{stopgrad}}

\newcommand{\kmmd}{\textsc{kMMD}}
\newcommand{\mmd}{\textsc{MMD}}
\newcommand{\nna}{\textsc{1-NNA}}
\newcommand{\cov}{\textsc{COV}}

\newcommand{\Mfld}{\mathcal M}
\newcommand{\Exp}[1]{\exp_{#1}}
\newcommand{\Log}[1]{\log_{#1}}
\newcommand{\dg}{d_g}
\newcommand{\volg}{\mathrm{vol}_g}
\newcommand{\divg}{\operatorname{div}_g}
\newcommand{\gradg}{\nabla_{\!g}}
\newcommand{\Tang}[1]{T_{#1}\Mfld}
\newcommand{\Prob}{\mathcal{P}}
\newcommand{\OT}{\mathrm{OT}}
\newcommand{\Sink}{\mathcal{S}}

\newcommand{\dd}{\,\mathrm{d}}
\newcommand{\Meas}{\mathfrak{M}}

\newcommand{\erf}{\mathrm{erf}}
\newcommand{\Real}{\operatorname{Re}}

\definecolor{darkgreen}{RGB}{0,128,0}

\def\figref#1{figure~\ref{#1}}

\def\secref#1{section~\ref{#1}}

\def\eqref#1{equation~\ref{#1}}

\def\algref#1{algorithm~\ref{#1}}

\def\1{\bm{1}}

\DeclareMathAlphabet{\mathsfit}{\encodingdefault}{\sfdefault}{m}{sl}
\SetMathAlphabet{\mathsfit}{bold}{\encodingdefault}{\sfdefault}{bx}{n}

\def\gC{{\mathcal{C}}}
\def\gD{{\mathcal{D}}}

\def\gF{{\mathcal{F}}}

\def\gH{{\mathcal{H}}}

\def\gL{{\mathcal{L}}}

\newcommand{\KL}{D_{\mathrm{KL}}}

\usepackage{hyperref}
\usepackage{url}
\usepackage{microtype}
\usepackage{graphicx}
\usepackage{subcaption}
\usepackage{algorithm}
\usepackage{algpseudocode}
\usepackage{tabularx}
\usepackage{multicol, multirow}
\usepackage{makecell}
\usepackage{verbatim}
\usepackage{xcolor}
\usepackage{amssymb} %
\usepackage{pifont}  %
\usepackage[table]{xcolor}
\usepackage{enumitem}
\usepackage{array}
\usepackage{caption}
\usepackage{adjustbox}
\usepackage{booktabs} %
\usepackage{longtable}
\usepackage[most]{tcolorbox}
\usepackage{wrapfig}
\usepackage{amsmath, thmtools}
\usepackage[normalem]{ulem}
\useunder{\uline}{\ul}{}

\definecolor{metablue}{HTML}{0064E0}

\newcommand{\ie}[0]{\emph{i.e.,}}

\newcommand{\ourname}[0]{\textsc{RW-Flow}}
\renewcommand{\eqref}[1]{Eqn.~\ref{#1}}
\renewcommand{\secref}[1]{Sec.~\ref{#1}}
\renewcommand{\algref}[1]{Alg.~\ref{#1}}
\newcommand{\tabref}[1]{Tab.~\ref{#1}}
\renewcommand{\figref}[1]{Fig.~\ref{#1}}
\newcommand{\appref}[1]{App.~\ref{#1}}
\newcommand{\thmref}[1]{Thm.~\ref{#1}}
\newcommand{\propref}[1]{Prop.~\ref{#1}}
\newcommand{\lemref}[1]{Lem.~\ref{#1}}

\newcommand{\coref}[1]{Cor.~\ref{#1}}

\newcolumntype{Y}{>{\centering\arraybackslash}X}

\title{\ourname{}: One-Step Generation on Compact Manifolds via Wasserstein Gradient Flows}

\iclrfinalcopy

\author{Ualibyek Nurgulan \hspace{0.5em} Seungwoo Yoo\textsuperscript{$\ast$} \hspace{0.5em} Prin Phunyaphibarn\textsuperscript{$\ast$} \hspace{0.5em} Minhyuk Sung  \\
KAIST
\texttt{\{ualibek, dreamy1534, prin10517, mhsung\}@kaist.ac.kr} \\
}

\begin{document}

\maketitle
\begingroup
\renewcommand\thefootnote{}\footnotetext{\textsuperscript{$\ast$}Equal contribution.}
\endgroup
\lhead{Preprint}

\begin{figure}[h]
    \centering
    \includegraphics[width=1\columnwidth]{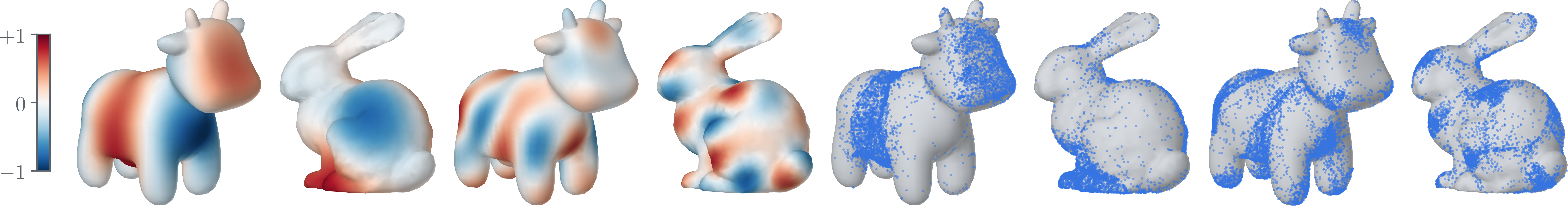}
    \caption{
    \textbf{One-step samples on general manifolds.}
    Left: target eigenfunctions on the Bunny and Spot meshes for
    different eigenfunction indices $k$.
    Right: samples generated by our method in a single function
    evaluation (NFE\,$=$\,1).
    Samples concentrate according to the geometry-induced target
    distributions on the underlying meshes.
    }
    \label{fig:mesh-teaser}
\end{figure}

\begin{abstract}
    Manifold-valued data, and consequently the distributions they induce, are prevalent across many domains, ranging from the locations of geospatial events, such as earthquakes, to biomolecular torsion angles that encode information about three-dimensional structure. While diffusion and flow-based generative models have been successfully extended to compact manifolds, sampling typically requires tens or hundreds of sequential network evaluations. We introduce~\ourmethod{}, a theoretically grounded framework for learning one-step generative models on compact manifolds via Wasserstein gradient flows. The main challenge is~\emph{identifiability}: driving the velocity field to zero should guarantee that the model distribution matches the target distribution. We establish a necessary and sufficient condition for identifiability on compact, connected Riemannian manifolds. We specifically show that, for a symmetric, Lipschitz-continuous cost function, the velocity field induced by the Sinkhorn divergence is identifiable if and only if the associated Gibbs kernel is nondegenerate. This characterization provides a general principle for designing identifiable costs on compact manifolds. It also reveals that the squared geodesic distance, the natural manifold analogue of the squared Euclidean distance, does not always guarantee identifiability. Across benchmarks involving geospatial events, protein side chain torsion angles, RNA backbone torsion angles, and general manifolds discretized as triangular meshes,~\ourmethod{} outperforms existing one-step methods in nearly all settings under fair comparison conditions.
\end{abstract}

\vspace{-1.5\baselineskip}
\section{Introduction}
\vspace{-0.5\baselineskip}

Diffusion and flow-based generative models have been successfully extended from Euclidean data to distributions supported on Riemannian manifolds~\citep{huang2022riemannian,de2022riemannian,chen2024rfm}, enabling applications to manifold-valued data such as geospatial events on spheres and molecular torsion angles on tori. However, sampling remains costly, typically requiring many network evaluations. Recent work has therefore begun adapting few-step acceleration techniques, including flow-map distillation, to Riemannian settings~\citep{davis2026gfm,woo2026riemannian,cheng2026riemannian}. These methods reduce sampling cost but are more difficult to optimize due to redundancy in their parameterizations: they learn maps across multiple pairs of timesteps rather than directly optimizing the one-step map used at inference.

In contrast,~\citet{han2026wflow} recently proposed W-Flow, a framework for learning one-step generative models based on Wasserstein gradient flows (WGFs). W-Flow directly learns a one-step pushforward map by regressing model outputs toward moving targets induced by the Wasserstein gradient of an energy functional that measures the discrepancy between the model and target distributions. A key requirement is \emph{identifiability}: the velocity field should vanish only when the model distribution matches the target distribution. \citet{han2026wflow} considered various divergences as the energy functional and showed that the Sinkhorn divergence~\citep{feydy2019interpolating} with the squared Euclidean distance cost performs best and satisfies the identifiability property in Euclidean spaces.

Given the strong empirical performance of W-Flow~\citep{han2026wflow} in Euclidean domains, a natural question is whether its one-step formulation can be extended to compact manifolds.
A key challenge is that characterizing cost functions that induce identifiable velocity fields is not straightforward, in contrast to the Euclidean setting examined by~\citet{han2026wflow}, where the squared Euclidean distance guarantees identifiability.
This leads to our central question: \emph{which cost functions induce identifiable velocity fields on compact manifolds?}

For compact, connected Riemannian manifolds, we show that there exist many cost functions which induce an identifiable velocity field, opening a new axis for designing one-step models by modifying the cost function. We characterize these cost functions by showing that
for a symmetric, Lipschitz cost function, the induced velocity field is identifiable if and only if the associated Gibbs kernel is nondegenerate. Based on this, we explore broad classes of costs that induce identifiable velocity fields, including spectral costs on general compact manifolds and distance-based costs under suitable geometric conditions. We also analyze the squared geodesic cost, the natural manifold analogue of the squared Euclidean cost, and show that it does not always guarantee identifiability.

On standard benchmarks, including Earth and climate datasets~\citep{noaa2020earthquake,noaa2020volcano,brakenridge2017flood,eosdis2020fire} on spherical domains and protein and RNA torsion-angle datasets~\citep{lovell2003structure,murray2003rna} on tori, we demonstrate that our approach is not only theoretically sound but also empirically effective, surpassing existing methods in nearly all settings under matched model capacity and training time. We further show that spectral costs that are applicable to~\emph{any} compact manifold by construction, enable accelerated sampling on manifolds represented as triangular meshes (\figref{fig:mesh-teaser}). In these settings, our method achieves performance comparable to that of a multi-step baseline while sampling two to three orders of magnitude faster. %

\vspace{-0.5\baselineskip}
\section{Background: W-Flow}\label{sec:background}
\vspace{-0.5\baselineskip}
W-Flow~\citep{han2026wflow} is a powerful framework for learning one-step generative models by evolving the model distribution toward
the target along a Wasserstein gradient flow (WGF) of
an energy functional that measures the discrepancy between the two. Given an energy functional $\mathcal{F}:\mathcal{P}(\mathbb{R}^n)\rightarrow\mathbb{R}$
on the space of probability distributions, its WGF is defined by
\begin{align}
    \label{eq:wgf_pde}
    \partial_t q_t = \nabla \cdot \left(q_t \nabla\frac{\delta\gF}{\delta q}(q_t)\right),
\end{align}
where $\frac{\delta \mathcal{F}}{\delta q}(q_t)$ denotes the first variation of $\mathcal{F}$ at $q_t$. This induces a time-dependent probability path $q_t$ in the Wasserstein space following the steepest descent direction of $\gF$. Equivalently,~\eqref{eq:wgf_pde} can be viewed as a continuity equation with velocity field $V_{q_t,p}=-\nabla\frac{\delta\gF}{\delta q}(q_t)$.

Choosing $\gF(q) = \gD(q, p)$ for a divergence $\gD$ and target distribution $p$ yields the velocity field $V_{q,p} = -\nabla \frac{\delta \gD(\cdot, p)}{\delta q}(q)$, which transports $q_t$ toward $p$.~\citet{han2026wflow} translate this flow into a model-training objective: they transport generated samples along $V_{q_\theta,p}$ and regress the generator outputs toward the transported samples.
\begin{align}
\label{eq:drifting-loss}
    \gL = \mathbb{E}_{z \sim q_0} \left[ {\| f_\theta(z) - \stopgrad(f_\theta(z) + \eta V_{q_\theta,p}(f_\theta(z))) \|^2} \right],
\end{align}
where $f_\theta$ is a one-step transport map, $q_\theta = (f_\theta)_\# q_0$ is the model distribution, and $\eta>0$ is the step size.
Note that~\eqref{eq:drifting-loss} is equivalent to $\eta^2\mathbb{E}_{x\sim q_\theta}\|V_{q_\theta,p}(x)\|^2$: it vanishes
exactly when the velocity field does. Driving the velocity field to zero, however, does not necessarily guarantee that the model converges to the target distribution (\ie $V_{q_\theta, p}\equiv 0$ implies $q_\theta = p$); such a guarantee requires the field to be~\emph{identifiable}.

For identifiability,~\citet{han2026wflow} instantiate $\gD$ with the~\emph{Sinkhorn divergence}~\citep{feydy2019interpolating, ramdas2017wasserstein, genevay2018learning, salimans2018improving},
\begin{align}
\label{eq:sinkhorn}
    \Sink_{\varepsilon,c}(q, p) = \OT_{\varepsilon,c}(q,p) - \tfrac{1}{2}\OT_{\varepsilon,c}(q,q) - \tfrac{1}{2}\OT_{\varepsilon,c}(p,p),
\end{align}
where $\OT_{\varepsilon, c}$ is an entropy-regularized optimal transport cost:
\begin{align}
\label{eq:eot}
    \OT_{\varepsilon,c}(q, p) = \min_{\pi \in \Pi(q,p)} \int c(x,y) \dd\pi(x,y) + \varepsilon \KL(\pi \,\|\, q\otimes p).
\end{align}

Here, $\Pi(q,p)$ is the set of couplings with marginals $q$ and $p$ and $c(x,y)$ specifies the cost of transporting mass from $x$ to $y$.
The minimizer $\pi^{qp}$ is unique and is typically computed in practice by Sinkhorn iterations~\citep{cuturi2013sinkhorn}.
With $c(x,y) = \tfrac{1}{2}\|x-y\|^2$, \citet{han2026wflow} show that the Sinkhorn
divergence is identifiable.

Although W-Flow has achieved state-of-the-art one-step performance on Euclidean domains, including image generation, its extension to compact manifolds remains unexplored. Moreover, while identifiability holds for many cost functions in Euclidean domains, it remains unclear whether any such cost function exists on compact manifolds, motivating the following question:~\emph{Which cost functions guarantee identifiability on compact manifolds, and how can we characterize them?}

\vspace{-0.5\baselineskip}
\section{\ourmethod{}}
\label{sec:method}
\vspace{-0.5\baselineskip}
We introduce \textsc{{\ul R}iemannian {\ul W-Flow}} (\ourname), which extends W-Flow~\citep{han2026wflow} to compact, connected Riemannian manifolds.
In the following, we first derive the WGF
of the Sinkhorn divergence defined with a general cost function on a Riemannian manifold (\secref{subsec:method_riemannian_drift}).
We then provide our main theorem, which characterizes cost functions by connecting identifiability to the nondegeneracy of the associated Gibbs kernels, and present two approaches for verifying this nondegeneracy (\secref{subsec:method_identifiability}). Finally, we explore a variety of cost functions by applying these criteria, including the squared geodesic cost and both spectral and distance-based costs (\secref{subsec:method_costs}).

\vspace{-0.5\baselineskip}
\subsection{Velocity Fields on Compact Manifolds}
\label{subsec:method_riemannian_drift}
\vspace{-0.5\baselineskip}
Let $\Mfld$ denote a compact, connected, smooth manifold equipped with a Riemannian metric $g$, so that $\Mfld$ is complete and the exponential map $\Exp{x}:\Tang{x}\to\Mfld$ is defined on the whole tangent space $\Tang{x}$. We write $\langle\cdot,\cdot\rangle_g$ and $\Vert\cdot\Vert_g$ for the metric and its norm on $\Tang{x}$, $\dg$ for the geodesic distance, $\Log{x}$ for the inverse of $\Exp{x}$ on the domain where it is defined, and $\gradg$, $\divg$ for the Riemannian gradient and divergence.
Given a base distribution $q_0\in\Prob(\Mfld)$ and a target distribution $p\in\Prob(\Mfld)$, our goal is to learn a transport map $f_\theta:\Mfld\to\Mfld$ such that $(f_\theta)_{\#}q_0=p$.

Building on this machinery, we first extend the W-Flow training objective~\citep{han2026wflow} in~\eqref{eq:drifting-loss} to Riemannian manifolds.
In particular,~\eqref{eq:drifting-loss} comprises three components specific to $\mathbb{R}^n$: (1) the additive update $f_\theta(z) + \eta V_{q_\theta,p}(f_\theta(z))$, (2) the squared Euclidean distance between the generator output and the target, and (3) the field $V$ itself.
On a Riemannian manifold $(\Mfld,g)$, the first two components admit natural intrinsic counterparts: a point $x$ is moved along a tangent vector $V(x)\in\Tang{x}$ by the exponential map $\Exp{x}(\eta V(x))$ with $\eta > 0$ denoting a step size, while the discrepancy between two points $x$ and $y$ is measured by the squared geodesic distance $d_g^2(x, y)$. Accordingly, for Riemannian manifolds,~\eqref{eq:drifting-loss} can be rewritten as
\begin{align}
    \gL = \mathbb{E}_{z \sim q_0} \left[ \dg^2\left(f_\theta(z),\ \stopgrad\left(\Exp{f_\theta(z)}(\eta\, V_{q_\theta,p}(f_\theta(z)))\right)\right) \right].
    \label{eq:manifold-loss}
\end{align}

For the exponential map in~\eqref{eq:manifold-loss} to be well defined, the field evaluated at $x=f_\theta(z)$ must lie in the tangent space at $f_\theta (z)$. This condition is readily
satisfied when $V$ is derived from a Wasserstein gradient flow on $\Mfld$. Specifically, the continuity equation in~\eqref{eq:wgf_pde} naturally extends to $\Mfld$ by replacing the Euclidean operators with their Riemannian counterparts~\citep{villani2008optimal}
\begin{equation}
    \label{eq:wgf_pde_manifold}
    \partial_t q_t = \divg \left(q_t \gradg\frac{\delta\gF}{\delta q}(q_t)\right).
\end{equation}

Analogous to W-Flow~\citep{han2026wflow}, we choose the debiased Sinkhorn divergence as our energy functional: $\mathcal{F}(q)=\Sink_{\varepsilon,c}(q,p)$. Here, $\OT_{\varepsilon,c}$ and $\Sink_{\varepsilon,c}$ are defined as in~\eqref{eq:eot} and~\eqref{eq:sinkhorn}, respectively.
Then, the resulting velocity field, obtained as the Wasserstein gradient of $\Sink_{\varepsilon,c}$, is given below.
\begin{restatable}{theorem}{thmmain}
\label{thm:main}
Let $\gF(q)=\Sink_{\varepsilon, c}(q,p)$ on a compact
$(\Mfld,g)$ and $c$ be a differentiable in its
first argument. Then the Wasserstein gradient flow velocity
$V=-\gradg\tfrac{\delta \gF}{\delta q}$ is
\begin{equation}
    V^{\varepsilon,c}_{q, p}(x)\;=\;
    -\int_\Mfld \gradg^{(1)}c(x,y)\;\pi^{qp}(\dd y\mid x)
    \;+\;
    \int_\Mfld \gradg^{(1)}c(x,z)\;\pi^{q q}(\dd z\mid x)
    \;\in\;\Tang{x},
    \label{eq:main}
\end{equation}
where $\pi^{qp}(\dd y\mid x)$ denotes the disintegration with respect to $x$ of the optimal coupling $\pi^{qp}$ of the entropy-regularized $\OT_{\varepsilon,c}(q, p)$ in~\eqref{eq:eot}, and $\gradg^{(1)}$ is the Riemannian gradient in the first argument.
\end{restatable}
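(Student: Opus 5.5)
The plan is to compute the first variation of $\gF(q)=\Sink_{\varepsilon,c}(q,p)$ through the dual (Schrödinger potential) form of entropic OT, and then take its Riemannian gradient. Write the dual problem
\begin{equation*}
\OT_{\varepsilon,c}(q,p)=\max_{f,g}\int f\dd q+\int g\dd p-\varepsilon\int\!\!\int\Big(e^{(f(x)+g(y)-c(x,y))/\varepsilon}-1\Big)\dd q(x)\dd p(y).
\end{equation*}
On a compact $\Mfld$ with $c$ continuous (hence bounded), this has maximizers $(f^{qp},g^{qp})$. They are unique up to an additive constant and satisfy the Schrödinger system
\begin{equation*}
f^{qp}(x)=-\varepsilon\log\int e^{(g^{qp}(y)-c(x,y))/\varepsilon}\dd p(y),
\end{equation*}
together with the symmetric equation for $g^{qp}$. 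The optimal coupling is $\dd\pi^{qp}=e^{(f^{qp}\oplus g^{qp}-c)/\varepsilon}\dd q\otimes p$. I will use the standard fact (Feydy et al.) that $q\mapsto\OT_{\varepsilon,c}(q,p)$ is differentiable along perturbations $q+t\chi$ with $\int\dd\chi=0$, with first variation $f^{qp}$. The justification is an envelope/Danskin argument. Compactness makes the potentials uniformly bounded and Lipschitz, hence equicontinuous. The Sinkhorn fixed-point map is a contraction in the Hilbert metric, which gives continuity of the potentials in $q$. The first-order optimality conditions then kill the terms coming from the variation of the potentials. This stability of the potentials is the step I expect to be the main technical obstacle, and I would import it rather than reprove it.

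For the symmetric term $\OT_{\varepsilon,c}(q,q)$, the same argument applied to both marginals gives first variation $f^{qq}+g^{qq}=2f^{qq}$, using the symmetric potential, since $c$ is symmetric, or else summing both contributions. The $p$-term is constant in $q$. Hence
\begin{equation*}
\tfrac{\delta\gF}{\delta q}=f^{qp}-f^{qq}.
\end{equation*}

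Next I differentiate the potentials in $x$. The integral formula for $f^{qp}$ extends $f^{qp}$ to all of $\Mfld$. Because $c$ is differentiable in its first argument, and I assume $\gradg^{(1)}c$ is bounded and continuous on the compact manifold, differentiation under the integral sign is legitimate. This gives
\begin{equation*}
\gradg f^{qp}(x)=\frac{\int\gradg^{(1)}c(x,y)\,e^{(g^{qp}(y)-c(x,y))/\varepsilon}\dd p(y)}{\int e^{(g^{qp}(y)-c(x,y))/\varepsilon}\dd p(y)}=\int\gradg^{(1)}c(x,y)\,\pi^{qp}(\dd y\mid x).
\end{equation*}
The last equality holds because the normalized density $e^{(f^{qp}(x)+g^{qp}(y)-c(x,y))/\varepsilon}\dd p(y)$ is exactly the disintegration of $\pi^{qp}$ with respect to its first marginal $q$. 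The same computation gives $\gradg f^{qq}(x)=\int\gradg^{(1)}c(x,z)\,\pi^{qq}(\dd z\mid x)$.

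Finally, I set $V=-\gradg(f^{qp}-f^{qq})$, which yields \eqref{eq:main}. Each integrand lies in $\Tang{x}$, and an integral over $y$ of vectors in the fixed vector space $\Tang{x}$ stays in $\Tang{x}$, so $V(x)\in\Tang{x}$. The additive-constant ambiguity of the potentials disappears under the gradient. For the result to be a genuine Wasserstein gradient I will also note that \eqref{eq:wgf_pde_manifold} only needs $\gradg\frac{\delta\gF}{\delta q}$ in $L^2(q)$, and boundedness ensures this.
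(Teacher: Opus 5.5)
Your proposal is correct and follows essentially the same route as the paper. Both use the dual (Schr\"odinger) formulation and an envelope argument to get $\frac{\delta\gF}{\delta q}=u-w$ (your $f^{qp}-f^{qq}$, where the factor $2$ from the symmetric term cancels the $\tfrac12$), then differentiate the softmin representation of the potentials under the integral and identify the result with the conditional of the entropic plan. One caveat: the formula genuinely relies on $c$ being symmetric, which is the paper's standing assumption. Your fallback of ``summing both contributions'' for a nonsymmetric cost would produce a term involving $\gradg^{(2)}c$ rather than the stated expression.
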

The derivation can be found in~\appref{app:proof-main}.
Intuitively, the first term shifts $x$ toward the data points to which it is coupled, whereas the second term induces repulsion among the generated samples and cancels the entropic bias, ensuring that the field vanishes when $q=p$.

An immediate consequence of~\thmref{thm:main} is that the cost function determines the form of the velocity field and, consequently, its identifiability. In the next subsection, we present a theoretical criterion for determining whether a given cost function induces an identifiable velocity field.

\vspace{-0.5\baselineskip}
\subsection{Identifiability via Nondegeneracy}
\label{subsec:method_identifiability}
\vspace{-0.5\baselineskip}
Recall from~\secref{sec:background} that a velocity field $V_{q_\theta,p}$ is~\emph{identifiable} if $V_{q_\theta,p}\equiv 0$ implies $q_\theta=p$. Since the training objective in~\eqref{eq:manifold-loss} drives the norm of $V_{q_\theta,p}$ to zero,
identifiability ensures that the generator does not reach a spurious fixed point even when $q_\theta\neq p$.
While the squared Euclidean cost is sufficient to ensure identifiability in Euclidean spaces~\citep{han2026wflow}, we extend this result by characterizing a broader class of cost functions that preserve identifiability.

Our main result, presented below, shows that for a symmetric, Lipschitz continuous cost function $c$, identifiability of the velocity field induced by the Sinkhorn divergence is determined by the~\emph{nondegeneracy} of the corresponding Gibbs kernel $k=e^{-c/\varepsilon}$, defined below.

\begin{definition}
    Let $(k*\sigma)(x):=\int k(x,y)\,\dd\sigma(y)$ be an integral operator for $\sigma\in\Meas_b(\Mfld).$
    We call $k$ nondegenerate
    if $\sigma\mapsto k*\sigma$ is injective on
    finite signed measures $\Meas_b(\Mfld)$. Otherwise, it is degenerate.
\end{definition}

With this notion in place, we state our main theorem, which provides a criterion for determining which cost functions induce identifiable velocity fields and which do not.

\begin{restatable}{theorem}{thmidentifiable}
\label{thm:identifiable}
Let $(\Mfld, g)$ be a compact, connected Riemannian manifold
with a symmetric, Lipschitz cost function
$c: \Mfld\times\Mfld\to \mathbb{R}$. Denote its Gibbs
kernel $k:=e^{-c/\varepsilon}.$
Then $V^{\varepsilon,c}_{q, p}$ is identifiable if and only if $k$ is nondegenerate.
\end{restatable}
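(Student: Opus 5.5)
The plan is to work entirely with the Schrödinger (dual) potentials of entropic OT and reduce both directions to linear identities of the form $k*\mu$ for positive measures $\mu$.

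\emph{Setup.} For $\OT_{\varepsilon,c}(q,p)$ there are potentials $(f_{qp},g_{qp})$, unique up to an additive constant, such that $\dd\pi^{qp}=e^{(f_{qp}(x)+g_{qp}(y)-c(x,y))/\varepsilon}\dd q\dd p$. They satisfy the Schrödinger system
\begin{equation*}
e^{-f_{qp}(x)/\varepsilon}=\int_\Mfld k(x,y)e^{g_{qp}(y)/\varepsilon}\dd p(y),\qquad
e^{-g_{qp}(y)/\varepsilon}=\int_\Mfld k(x,y)e^{f_{qp}(x)/\varepsilon}\dd q(x).
\end{equation*}
These equations are used to extend the potentials to all of $\Mfld$. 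Since $c$ is Lipschitz and $\Mfld$ is compact, the extended potentials are Lipschitz. For $\OT_{\varepsilon,c}(q,q)$ the symmetric cost gives a symmetric potential $h:=f_{qq}=g_{qq}$.

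As in the derivation of \thmref{thm:main}, the first variation of $\Sink_{\varepsilon,c}(\cdot,p)$ is $f_{qp}-f_{qq}$. Differentiating under the integral recovers \eqref{eq:main}, so that $V^{\varepsilon,c}_{q,p}=-\gradg(f_{qp}-f_{qq})$.

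\emph{Sufficiency (nondegenerate $\Rightarrow$ identifiable).} Assume $V\equiv 0$ on $\Mfld$.
\begin{itemize}
\item $f_{qp}-f_{qq}$ is Lipschitz with vanishing gradient a.e. Because $\Mfld$ is connected, it is a constant, say $a$.
\item Comparing the two Schrödinger equations for $e^{-f/\varepsilon}$ gives $k*\bigl(e^{g_{qp}/\varepsilon}p-e^{-a/\varepsilon}e^{h/\varepsilon}q\bigr)=0$. Nondegeneracy then yields $e^{g_{qp}/\varepsilon}p=e^{-a/\varepsilon}e^{h/\varepsilon}q$.
\item Substitute $f_{qp}=h+a$ into the second Schrödinger equation and use the symmetry of $k$. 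This gives $e^{-g_{qp}/\varepsilon}=e^{a/\varepsilon}\,k*(e^{h/\varepsilon}q)=e^{a/\varepsilon}e^{-h/\varepsilon}$, so $g_{qp}=h-a$.
\item Plugging this back gives $e^{h/\varepsilon}p=e^{h/\varepsilon}q$. Hence $p=q$, because $e^{h/\varepsilon}>0$.
\end{itemize}
The delicate point is the first step: I need to justify that the extended potentials are genuinely Lipschitz and that the velocity vanishes on all of $\Mfld$, not only on $\operatorname{supp}q$. This is where the Lipschitz assumption and connectedness enter.

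\emph{Necessity (degenerate $\Rightarrow$ not identifiable).} Take a signed $\sigma\neq0$ with $k*\sigma=0$. The idea is to build $q\neq p$ from positive measures $\mu\neq\mu'$ with $k*\mu=k*\mu'$.
\begin{itemize}
\item Set $\mu_0:=|\sigma|+\volg$ and $\mu_0':=\mu_0+t\sigma$ for some $0<t<1$. Then $\mu_0'$ is positive, $\mu_0'\neq\mu_0$, and $k*\mu_0'=k*\mu_0$.
\item The key observation is the self-consistency trick: for any positive $\mu$, the measure $q:=(k*\mu)\,\mu$ has potential $h=-\varepsilon\log(k*\mu)$. Indeed, with $f=g=h$ one has $\int k\,e^{h/\varepsilon}\dd q=k*\mu=e^{-h/\varepsilon}$.
\item Scale by $s>0$, taking $\mu:=s\mu_0$ and $\mu':=s\mu_0'$. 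Since total mass scales like $s^2$, $s$ can be chosen so that $q:=(k*\mu)\mu$ is a probability measure. Define $p:=(k*\mu')\mu'=(k*\mu)\mu'$.
\item $p$ has the same mass as $q$: $\int(k*\mu)\dd\mu'=\int(k*\mu)\dd\mu+t s\int k*\sigma\,\dd\mu=1$, using symmetry of $k$ to move the operator onto $\sigma$.
\item The pair $f=g=h$ solves the Schrödinger system for $(q,p)$, because $\int k\,e^{h/\varepsilon}\dd p=k*\mu'=k*\mu=e^{-h/\varepsilon}$. By uniqueness of potentials, $f_{qp}=h+\text{const}=f_{qq}+\text{const}$, so $V\equiv0$.
\item Finally $p\neq q$, since $k*\mu>0$ is continuous and $\mu'\neq\mu$.
\end{itemize}
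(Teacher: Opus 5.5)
Your proof is correct and follows the paper's argument. Sufficiency is exactly the paper's kernel-equation lemma ($V\equiv 0$ forces the potentials to agree up to a constant, which yields $k*\bigl(e^{w/\varepsilon}(p-q)\bigr)=0$) combined with nondegeneracy, and your necessity construction gives the same perturbation $p-q=ts\,e^{-h/\varepsilon}\sigma$ as the paper's $p=q+te^{-w/\varepsilon}\sigma$. The only difference is that you choose $q\propto(k*\mu_0)\mu_0$ instead of $q\propto|\sigma|+\volg$, which makes the symmetric potential $h=-\varepsilon\log(k*\mu)$ explicit and removes the need to bound $e^{-w/\varepsilon}$ when choosing $t$.
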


\thmref{thm:identifiable} implies a broader class of admissible cost functions. Specifically, if a cost function is 
symmetric and Lipschitz continuous and its associated Gibbs kernel is nondegenerate, then
the corresponding velocity field is identifiable, making~\eqref{eq:manifold-loss} suitable for learning the generator.
Motivated by this result, we next explore a broad range of cost functions in~\secref{subsec:method_costs}. Here, as a preliminary step, we introduce two approaches to verify the nondegeneracy of their associated Gibbs kernels:~\emph{spectral analysis} and~\emph{universality}.

\vspace{-0.5\baselineskip}
\paragraph{Verifying Nondegeneracy via Spectral Analysis.}
For kernels that are diagonalizable in the Laplace-Beltrami eigenbasis, 
their nondegeneracy can be determined based on their spectra.

\begin{restatable}{lemma}{lemspectral}
\label{lem:spectral}
    Let $\Mfld$ be compact and $k$ a continuous symmetric kernel on $\Mfld$.
    Let $\{\phi_n\}_{n\ge 0}$ be
    Laplace-Beltrami eigenfunctions such that
    they form a basis for $L^2(\Mfld)$ and
    \begin{equation}
        \int k(x,y)\,\phi_n(y)\,\dd\volg(y) = \hat k_n\,\phi_n(x)
        \qquad \text{for all } n\ge 0.
    \end{equation}
    Then $k$ is nondegenerate if and only if $\hat k_n\neq 0$ for all $n\ge 0$.
\end{restatable}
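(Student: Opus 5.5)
The plan is to prove both directions by testing $k*\sigma$ against, or expanding it in, the eigenbasis $\{\phi_n\}$. Since $\Mfld$ is compact, the eigenfunctions are smooth, hence bounded and continuous. So for any $\sigma\in\Meas_b(\Mfld)$ the coefficients $\hat\sigma_n:=\int\phi_n\,\dd\sigma$ are well defined. We also take $\{\phi_n\}$ to be orthonormal in $L^2(\volg)$, which is possible because they are eigenfunctions of a self-adjoint operator.

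\emph{Necessity.} Suppose $\hat k_{m}=0$ for some $m$. Take the signed measure $\sigma:=\phi_m\,\dd\volg$. It is finite because $\phi_m$ is bounded on compact $\Mfld$, and it is nonzero because $\|\phi_m\|_{L^2}=1$. The hypothesis gives $k*\sigma=\hat k_m\phi_m\equiv0$, so $k$ is degenerate.

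\emph{Sufficiency.} Assume every $\hat k_n\neq0$ and let $k*\sigma\equiv0$. Because $k$ is continuous on the compact set $\Mfld\times\Mfld$, Fubini applies with finite measures. For each $n$ we get
\begin{equation*}
0=\int (k*\sigma)(x)\,\phi_n(x)\,\dd\volg(x)=\int\!\Big(\int k(y,x)\phi_n(x)\,\dd\volg(x)\Big)\dd\sigma(y)=\hat k_n\,\hat\sigma_n ,
\end{equation*}
where symmetry of $k$ turns the inner integral into $\hat k_n\phi_n(y)$. Hence $\hat\sigma_n=0$ for all $n$.

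The remaining step, and the main obstacle, is to conclude that a finite signed measure with all eigen-coefficients zero must vanish. Completeness in $L^2$ alone does not give this, since $\sigma$ need not be absolutely continuous. I would handle it by density in $C(\Mfld)$. Let $f\in C^\infty(\Mfld)$. Its eigen-expansion $\sum_n\langle f,\phi_n\rangle\phi_n$ converges uniformly, because the coefficients decay faster than any power of $\lambda_n$ (integrate $\Delta^s$ by parts). Combining this with the sup-norm bound $\|\phi_n\|_\infty\lesssim(1+\lambda_n)^{d/4}$ or Sobolev embedding, and Weyl's law, gives absolute convergence. Passing the uniform limit through $\sigma$ gives $\int f\,\dd\sigma=\sum_n\langle f,\phi_n\rangle\hat\sigma_n=0$.

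Alternatively, and more cheaply, one can avoid the series. The finite span of eigenfunctions is dense in $C^\infty$ in the Sobolev $H^s$ norm, and $H^s\hookrightarrow C^0$ for $s>d/2$, so this span is uniformly dense in $C^\infty(\Mfld)$. $C^\infty(\Mfld)$ is in turn dense in $C(\Mfld)$. Thus $\int f\,\dd\sigma=0$ for all continuous $f$, and the Riesz representation theorem gives $\sigma=0$, so $k$ is nondegenerate.
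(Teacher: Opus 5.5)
Your proposal is correct and follows the paper's proof essentially step for step: you use the test measure $\phi_m\,\dd\volg$ for necessity, Fubini plus symmetry of $k$ to get $\hat k_n\int\phi_n\,\dd\sigma=0$, and then density of the eigenfunction span in $C(\Mfld)$ together with the Riesz representation theorem to force $\sigma=0$. The only difference is that you justify the density in the sup norm yourself (via Sobolev embedding, or via uniform convergence of eigen-expansions of smooth functions), whereas the paper handles that point by citation.
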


The proof is deferred to \appref{app:proofs}. Notably, the condition in~\lemref{lem:spectral} applies to every spectral kernel on any compact manifold (\secref{sec:spectral-cost}), as well as to every kernel that depends only on the geodesic distance on $\mathbb{S}^d$ or $\mathbb{T}^d$. Importantly,~\lemref{lem:spectral} does not require $k$ to be positive definite: the coefficients $\hat k_n$ may be negative, provided that none of them vanishes.
\vspace{-0.5\baselineskip}
\paragraph{Verifying Nondegeneracy via Universality.}
When the spectrum is not available in closed form but the kernel is positive definite, nondegeneracy can instead be verified through \emph{universality}
(refer to \appref{app:other-energy-functionals} for definition), following~\citet{JMLR:v7:micchelli06a,JMLR:v12:sriperumbudur11a}.
\begin{proposition}[\cite{JMLR:v7:micchelli06a}]\label{prop:uni_nondeg}
    Let $\Mfld$ be compact and $k$ a continuous positive definite kernel
    on $\Mfld$. Then $k$ is nondegenerate if and only if it is universal.
\end{proposition}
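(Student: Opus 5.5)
The plan is to pass through duality in $C(\Mfld)$. Recall that $k$ is \emph{universal} if its reproducing kernel Hilbert space $\gH_k$, viewed as a space of continuous functions on $\Mfld$, is dense in $(C(\Mfld),\|\cdot\|_\infty)$. Since $\Mfld$ is compact, the Riesz--Markov theorem identifies the dual $C(\Mfld)^*$ with $\Meas_b(\Mfld)$. Hahn--Banach then says that a linear subspace is dense if and only if its annihilator is trivial. So universality is equivalent to the following: the only $\sigma\in\Meas_b(\Mfld)$ with $\int f\dd\sigma=0$ for all $f\in\gH_k$ is $\sigma=0$. The goal is to show that $\sigma$ annihilates $\gH_k$ exactly when $k*\sigma\equiv 0$. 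This gives ``universal $\iff$ nondegenerate'' directly.

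\textbf{Step 1: preliminaries on $\gH_k$.} Because $k$ is continuous on the compact set $\Mfld\times\Mfld$, it is bounded, say $\kappa:=\sup_x k(x,x)^{1/2}<\infty$. The feature map $x\mapsto k(\cdot,x)\in\gH_k$ is continuous, since $\|k(\cdot,x)-k(\cdot,x')\|_{\gH_k}^2=k(x,x)-2k(x,x')+k(x',x')$. The reproducing property gives $\|f\|_\infty\le\kappa\|f\|_{\gH_k}$. Hence every $f\in\gH_k$ is continuous, and $\gH_k\subset C(\Mfld)$ makes sense.

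\textbf{Step 2: kernel mean embedding.} For $\sigma\in\Meas_b(\Mfld)$, I would define $\mu_\sigma:=\int k(\cdot,y)\dd\sigma(y)\in\gH_k$ as a Bochner integral. It is well defined because the integrand is continuous and bounded in norm by $\kappa$, and $|\sigma|$ is finite. Continuity of evaluation functionals lets them pass inside the integral, so $\mu_\sigma(x)=\langle\mu_\sigma,k(\cdot,x)\rangle_{\gH_k}=(k*\sigma)(x)$ for every $x$. More generally, for any $f\in\gH_k$,
\begin{equation*}
\int f\dd\sigma=\int\langle f,k(\cdot,y)\rangle_{\gH_k}\dd\sigma(y)=\langle f,\mu_\sigma\rangle_{\gH_k}.
\end{equation*}

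\textbf{Step 3: conclusion.} By Step 2, $\sigma$ annihilates $\gH_k$ if and only if $\mu_\sigma=0$ in $\gH_k$. This holds if and only if the function $k*\sigma$ vanishes identically, since an element of an RKHS is zero exactly when all its point evaluations vanish. Therefore the annihilator of $\gH_k$ is trivial if and only if $\sigma\mapsto k*\sigma$ is injective on $\Meas_b(\Mfld)$, i.e.\ $k$ is nondegenerate. Combined with the duality remark, this proves the proposition in both directions.

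The step needing the most care is Step 2. One must justify that the Bochner integral exists for a signed measure, which can be handled via the Jordan decomposition. One must also justify interchanging the inner product with the integral. An alternative that avoids Bochner integrals is to check annihilation only on the dense subspace $\mathrm{span}\{k(\cdot,x)\}$, where $\int k(\cdot,x)\dd\sigma=(k*\sigma)(x)$ is immediate. Annihilation then extends to all of $\gH_k$ because $\gH_k$-convergence implies uniform convergence by Step 1, and $f\mapsto\int f\dd\sigma$ is continuous in the sup norm.
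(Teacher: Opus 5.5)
Your proof is correct. The Riesz--Markov/Hahn--Banach step reduces density of $\mathcal{H}_k$ in $C(\mathcal{M})$ to triviality of its annihilator, and the identities $\int f\,d\sigma=\langle f,\mu_\sigma\rangle_{\mathcal{H}_k}$ and $\mu_\sigma=k*\sigma$ show that this annihilator is exactly the null space of $\sigma\mapsto k*\sigma$; your alternative through the dense span of $\{k(\cdot,x)\}$, using symmetry of $k$ and $\|f\|_\infty\le\kappa\|f\|_{\mathcal{H}_k}$, also works and avoids Bochner integrals. The paper gives no proof of its own and only cites Micchelli et al.\ (and Sriperumbudur et al.), whose argument is essentially this same duality argument.
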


Hence, when the Gibbs kernel associated with a cost function is positive definite, it suffices to verify its universality to establish nondegeneracy. Together with~\thmref{thm:identifiable}, the spectral and universality approaches provide complementary criteria for determining whether a given cost function induces an identifiable velocity field. With these results, we explore various cost functions in the following.

\vspace{-0.5\baselineskip}
\subsection{Cost Functions on Compact Manifolds}
\label{subsec:method_costs}
\vspace{-0.5\baselineskip}
In the Euclidean case, the squared Euclidean cost guarantees identifiability~\citep{han2026wflow}. It is therefore natural to ask whether its manifold analogue, the squared geodesic cost $c(x,y)=\frac{1}{2}d_g^2(x,y)$, also induces an identifiable velocity field. This cost satisfies the assumptions of~\thmref{thm:identifiable}; consequently, it suffices to determine whether the associated Gibbs kernel
$\exp\bigl(-\frac{1}{2\varepsilon}d_g^2\bigr)$
is nondegenerate.

\begin{restatable}{proposition}{propgaussnondeg}
    \label{prop:gaussian_kernels_almost_nondegenerate}
    For the geodesic Gaussian kernel $k_\varepsilon = \exp\big(-\tfrac{1}{2\varepsilon} d_g^2\big)$ on $\mathbb{S}^d$ or $\mathbb{T}^d$, the set
    \begin{equation}
        \Gamma = \{\varepsilon > 0 \mid k_\varepsilon \text{ is degenerate}\}
    \end{equation}
    is at most countable. Particularly, for $\mathbb{T}^d$, $\Gamma$ is countably infinite.
\end{restatable}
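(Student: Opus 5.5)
By \lemref{lem:spectral}, it suffices to study the Laplace--Beltrami spectrum of $k_\varepsilon$. On $\mathbb{S}^d$ and $\mathbb{T}^d$ any kernel depending only on $d_g$ is diagonalized by spherical harmonics, respectively Fourier modes. So $k_\varepsilon$ is degenerate iff some coefficient $\hat k_n(\varepsilon)$ vanishes, and
\[
\Gamma=\bigcup_n Z_n,\qquad Z_n:=\{\varepsilon>0:\hat k_n(\varepsilon)=0\}.
\]
Since there are countably many $n$, countability of $\Gamma$ follows once each $Z_n$ is countable. I will get this from real-analyticity in $\varepsilon$.

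\textbf{Countability.} Substitute $s=1/\varepsilon$ and write each coefficient as an explicit integral:
\[
\hat k_n = \int_0^\pi e^{-s\theta^2/2}\,P_n(\cos\theta)\,w(\theta)\,\dd\theta \quad\text{on } \mathbb{S}^d,
\]
where $P_n$ is the Gegenbauer polynomial and $w(\theta)=\sin^{d-1}\theta$. On $\mathbb{T}^d$, with $\mathbb{T}=[-\pi,\pi)$,
\[
\hat k_{\mathbf m}=\int_{[-\pi,\pi)^d} e^{-s|x|^2/2} e^{-i\mathbf m\cdot x}\,\dd x .
\]
Here $d_g$ is the flat distance of the representative in $[-\pi,\pi)^d$, so the integrand factorizes into one-dimensional factors $\int_{-\pi}^{\pi} e^{-sx^2/2}\cos(mx)\,\dd x$. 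Each integral is over a compact domain against an entire function of $s$, so $\hat k_n$ extends to an entire function of $s\in\mathbb{C}$. It is also not identically zero: as $s\to0^+$ it tends to $\int P_n w$, which vanishes for $n\ge1$, so instead look at $s\to\infty$. Laplace's method near $\theta=0$ gives $\hat k_n(s)\sim C\,s^{-d/2}P_n(1)\neq0$. Hence the zeros of $\hat k_n$ in $(0,\infty)$ are isolated and $Z_n$ is countable.

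\textbf{Infinitude on $\mathbb{T}^d$.} Because the coefficients factorize, it suffices to consider $d=1$. There
\[
a_m(s)=\int_{-\pi}^{\pi} e^{-sx^2/2}\cos(mx)\,\dd x .
\]
I will show that for suitable $m$ this has a zero in $s>0$, and that infinitely many distinct zeros arise as $m$ ranges. Integrating by parts, or completing the square, gives
\[
a_m(s)=\sqrt{2\pi/s}\,e^{-m^2/(2s)}-\text{(boundary tail)},
\]
where the tail comes from truncating the Gaussian integral at $\pm\pi$. It is expressible through $\Real\,\erf\!\big((\pi s+ i m)/\sqrt{2s}\big)$ and is of size roughly $e^{-s\pi^2/2}\cos(m\pi)\,\cdot\,\frac{2s\pi}{s^2\pi^2+m^2}$.

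For odd $m$ the boundary term has sign opposite to what is needed for $a_m>0$. Take $s=m/\pi$, so that $m^2/(2s)=s\pi^2/2$ and the two terms are of the same exponential order. At this point $a_m$ has sign determined by a comparison of prefactors: $\sqrt{2\pi/s}$ versus a multiple of $1/(s\pi)$, and the boundary term wins for large $m$. Meanwhile $a_m(s)\to\int_{-\pi}^{\pi}\cos(mx)\,\dd x=0^{\pm}$ with controlled sign as $s\to0$, and $a_m>0$ as $s\to\infty$. So the intermediate value theorem gives a zero $s_m$ near $m/\pi$.

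Since $s_m\to\infty$, these zeros are pairwise distinct for infinitely many $m$, and the corresponding values $\varepsilon_m=1/s_m$ form a countably infinite subset of $\Gamma$. Combined with the first part, $\Gamma$ is countably infinite.

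The main obstacle is this sign analysis. It requires a careful asymptotic expansion of the truncated Gaussian Fourier coefficient, i.e.\ of the complex $\erf$ at the boundary, with uniform error terms. A simpler route I would try first is to evaluate $a_m(s)$ directly at $s=m/\pi$ and at $s=2m/\pi$ and exhibit opposite signs.
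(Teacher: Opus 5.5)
Your countability argument is correct. It differs from the paper only in how you show $\hat k_n\not\equiv 0$: you use Laplace asymptotics as $s\to\infty$, whereas the paper applies Lerch's theorem to a Laplace transform in $\theta^2$. The genuine gap is in the infinitude part on $\mathbb{T}^d$, where both of your sign claims are reversed. Your own decomposition gives $a_m(s)\approx\sqrt{2\pi/s}\,e^{-m^2/(2s)}-2(-1)^m e^{-s\pi^2/2}\frac{s\pi}{s^2\pi^2+m^2}$.
\begin{itemize}
\item For odd $m$ the boundary term \emph{adds} to positivity. In fact $a_m>0$ for every $s>0$ when $m$ is odd. The paper shows this by writing $a_m$ through $H_m=\Real\erf(z)$ and proving $H_m=1+J_m>1$.
\item At $s=m/\pi$ the two exponentials coincide, but the prefactors are $\sqrt{2\pi/s}=\pi\sqrt{2/m}$ versus $\frac{2s\pi}{s^2\pi^2+m^2}=1/m$. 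So the Gaussian term wins, not the boundary term, and $a_m(m/\pi)>0$ for all large $m$ of either parity.
\end{itemize}
Hence your intermediate value argument at $s=m/\pi$ has no negative value to start from. Your fallback, evaluating at $m/\pi$ and $2m/\pi$, also returns two positive numbers. For even $m$ the true zero lies strictly below $m/\pi$, at roughly $m/\pi-\frac{\log(2\pi^2 m)}{2\pi^2}$.

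The fix is to use even $m$ and look at the small-$s$ side. Expanding gives $a_m(s)=-\frac{s}{2}\int_{-\pi}^{\pi}x^2\cos(mx)\dd x+O(s^2)=-\frac{2\pi(-1)^m}{m^2}\,s+O(s^2)$. This is negative near $0$ for even $m$, while $a_m(s)>0$ for large $s$, so the intermediate value theorem gives a zero. You still need infinitely many \emph{distinct} zeros, and there are two ways to get them.
\begin{itemize}
\item Follow the paper. With $\beta=s/2$ one has $\frac{d}{d\beta}\Real\erf(z)=(-1)^m\sqrt{\pi/\beta}\,e^{m^2/(4\beta)-\pi^2\beta}$, so $H_m$ is monotone. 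This yields exactly one zero for each even $m$ and none for odd $m$. Since $J_m$ is increasing in $m$, these zeros are pairwise distinct.
\item Integrate by parts twice at fixed $s$ to get $a_m(s)=-\frac{2\pi s(-1)^m}{m^2}e^{-s\pi^2/2}+O_s(m^{-4})$. Then for any $S>0$ and all large even $m$ you have $a_m(S)<0$, which forces a zero in $(S,\infty)$. The zeros are therefore unbounded, hence infinitely many, which is what your ``$s_m\to\infty$'' step needs.
\end{itemize}
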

The proof is deferred to~\appref{app:proofs}. Since Gaussian kernels built on the geodesic distance are
not positive definite for any $\varepsilon > 0$
on many manifolds \citep{feragen2015geodesic, da2023gaussian, da2025invariant, li2024gaussian},
\propref{prop:uni_nondeg} does not apply directly; we instead rely
on~\lemref{lem:spectral}. On both $\mathbb{S}^d$ and $\mathbb{T}^d$, any kernel that depends only on the
geodesic distance is diagonalizable as in~\lemref{lem:spectral}, and the spectral coefficient associated with each mode vanishes for at most countably many values of $\varepsilon$. Therefore, on $\mathbb{S}^d$ and $\mathbb{T}^d$, the squared geodesic cost induces an identifiable velocity field for almost every $\varepsilon>0$. As we show in \secref{sec:geodesic-cost}, dropping the square resolves this on $\mathbb{S}^d$.

However, our argument does not extend to general compact manifolds, motivating us to seek cost functions whose Gibbs kernels are provably nondegenerate for every $\varepsilon > 0$. Through the lens of~\thmref{thm:identifiable}, we explore two such classes. The first, which we call~\emph{spectral costs}, is defined using the Laplace-Beltrami eigenfunctions (\secref{sec:spectral-cost}). The second, which we call~\emph{distance-based costs}, includes the chordal and geodesic costs whose guarantees hold under conditions on the manifold (\secref{sec:nonspectral-cost}). In the following, we introduce these cost functions and prove the identifiability of their induced velocity fields using~\thmref{thm:identifiable}, together with the spectral and universality criteria developed previously. We organize the cost families along with their guarantees and formulas in \tabref{tab:cost-summary}.

\vspace{-0.5\baselineskip}
\subsubsection[Spectral Costs]{Spectral Costs}
\label{sec:spectral-cost}
\vspace{-0.5\baselineskip}
A \emph{spectral cost} is a cost function induced by a spectral kernel $k_\rho: \Mfld \times \Mfld \rightarrow \mathbb{R}$, defined on any compact Riemannian manifold $\Mfld$. We begin by defining spectral kernels below.

\begin{definition}[Spectral Kernels]
    \label{def:spectral_kernel}
    Let $\{(\lambda_n, \phi_n)\}_{n=0}^\infty$ denote the eigenpairs of
    the Laplace-Beltrami operator on a compact Riemannian manifold
    $(\Mfld, g)$, with $\{\phi_n\}$ orthonormal in $L^2(\Mfld)$.
    Spectral kernels are defined by
    \begin{equation}
        \label{eq:spectral_kernel}
        k_\rho(x, y) = \sum_{n=0}^\infty \rho(\lambda_n)\phi_n(x)\phi_n(y),
    \end{equation}
    where the spectral density $\rho\colon [0,\infty)\to[0,\infty)$
    decays fast enough that the series converges uniformly.
\end{definition}

A key property of spectral kernels is that their nondegeneracy is characterized by their spectral density $\rho$, as stated in the following proposition.

\begin{restatable}[Nondegeneracy of Spectral Kernels]{proposition}{propspec}
    \label{prop:spec_posuniversal}
    A spectral kernel $k_\rho$ is nondegenerate if and only if
    $\rho(\lambda_n) \neq 0$ for all $\lambda_n$.
\end{restatable}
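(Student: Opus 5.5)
The plan is to reduce the statement to \lemref{lem:spectral}. By \eqref{eq:spectral_kernel}, $k_\rho$ is built from the orthonormal Laplace--Beltrami eigenbasis $\{\phi_n\}$. So the eigenvalue relation required by \lemref{lem:spectral} should come out directly, with $\hat k_n=\rho(\lambda_n)$.

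\textbf{Checking the hypotheses.} First I check that $k_\rho$ satisfies the assumptions of \lemref{lem:spectral}. Symmetry in $(x,y)$ is immediate from the form $\phi_n(x)\phi_n(y)$. Each eigenfunction $\phi_n$ is smooth on the compact manifold $\Mfld$, so every partial sum is continuous. Since \defref{def:spectral_kernel} assumes the series converges uniformly, the limit $k_\rho$ is continuous on $\Mfld\times\Mfld$. The family $\{\phi_n\}$ is an orthonormal basis of $L^2(\Mfld)$ by the spectral theorem for the Laplace--Beltrami operator on a compact manifold, and this is exactly the basis \lemref{lem:spectral} requires.

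\textbf{Diagonalization.} Next I compute $\int k_\rho(x,y)\phi_m(y)\dd\volg(y)$. Uniform convergence on the finite-volume space $\Mfld$ lets me swap the sum and the integral, since the partial sums converge uniformly and $\phi_m$ is bounded. Orthonormality then kills every term except $n=m$, so
\begin{equation*}
\int k_\rho(x,y)\phi_m(y)\dd\volg(y)=\rho(\lambda_m)\phi_m(x).
\end{equation*}
This gives $\hat k_m=\rho(\lambda_m)$ for every $m\ge 0$.

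\textbf{Conclusion.} \lemref{lem:spectral} now says that $k_\rho$ is nondegenerate if and only if $\rho(\lambda_n)\neq 0$ for all $n$, which is the claimed equivalence. If an eigenvalue is repeated, the condition is applied separately to each index $n$ with the same value $\rho(\lambda_n)$, so multiplicity causes no issue.

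\textbf{Main point of care.} The interchange of sum and integral is the only step needing justification, and the uniform-convergence assumption covers it. An alternative route, which I will not need, would go through \propref{prop:uni_nondeg} using the nonnegativity of $\rho$. That route would handle only the case $\rho>0$ and would not give the converse direction as directly.
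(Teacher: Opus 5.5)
Your proposal is correct and follows the paper's proof exactly: compute $\int k_\rho(x,y)\phi_n(y)\dd\volg(y)=\rho(\lambda_n)\phi_n(x)$, read off $\hat k_n=\rho(\lambda_n)$, and invoke \lemref{lem:spectral}. The paper states the diagonalization without comment, so your checks of continuity, symmetry, and the sum--integral interchange (justified by the assumed uniform convergence) fill in details it leaves implicit.
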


\begin{proof}
    Let $\{{\lambda_n, \phi_n}\}_{n\ge 0}$ be the eigenpairs of the Laplace-Beltrami operator
    on $\Mfld.$ Consider
    \begin{equation}
        k * (\phi_n \volg) = \int k_\rho (x, y ) \phi_n(y) \dd\volg(y) = \rho(\lambda_n)\phi_n(x)
    \end{equation}
    Now by \lemref{lem:spectral}, $k_\rho$ is nondegenerate if and only if
    $\rho(\lambda_n)\neq 0$ for all $\lambda_n.$
\end{proof}
Using this property, we can show that nondegenerate spectral kernels induce cost functions that,
when used to define the Sinkhorn divergence in~\eqref{eq:eot}, yield identifiable velocity fields, as stated in the following result.

\begin{restatable}{corollary}{corspecident}
    \label{cor:spec_ident}
    Let $k_\rho$ be a spectral kernel defined on a compact,
    connected manifold $\Mfld$ with a nonvanishing
    spectral density $\rho$ with $k_\rho > 0$ and $\log k_\rho$ is Lipschitz. Then,
    the velocity $V$ in~\eqref{eq:main} with the cost
    $c_{k_\rho}=-\varepsilon\log k_\rho$ is identifiable.
\end{restatable}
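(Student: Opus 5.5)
The plan is to apply \thmref{thm:identifiable} directly to the cost $c_{k_\rho}=-\varepsilon\log k_\rho$. That requires three checks: $c_{k_\rho}$ is a well-defined real-valued function on $\Mfld\times\Mfld$; it is symmetric and Lipschitz; and its Gibbs kernel is nondegenerate. The key observation is that the Gibbs kernel of this cost is exactly the spectral kernel:
\begin{equation*}
e^{-c_{k_\rho}/\varepsilon}=e^{\log k_\rho}=k_\rho .
\end{equation*}
The regularization parameter $\varepsilon$ in the Sinkhorn divergence is the same one used to define the cost, so it cancels. Nondegeneracy of the Gibbs kernel therefore reduces to nondegeneracy of $k_\rho$ itself.

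Next I would verify the hypotheses of \thmref{thm:identifiable}.

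\emph{Well-definedness.} Since $k_\rho>0$ by assumption, $\log k_\rho$ is a real-valued function.

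\emph{Symmetry.} From \eqref{eq:spectral_kernel}, each term $\rho(\lambda_n)\phi_n(x)\phi_n(y)$ is symmetric in $(x,y)$. Hence $k_\rho(x,y)=k_\rho(y,x)$, and so $c_{k_\rho}$ is symmetric.

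\emph{Lipschitz continuity.} The Lipschitz property of $c_{k_\rho}$ is $\varepsilon$ times the assumed Lipschitz property of $\log k_\rho$. I will take it with respect to the product geodesic metric on $\Mfld\times\Mfld$, matching whatever convention \thmref{thm:identifiable} uses.

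\emph{Continuity.} The series in \eqref{eq:spectral_kernel} converges uniformly and each $\phi_n$ is continuous, so $k_\rho$ is continuous. This is needed to invoke \lemref{lem:spectral} through \propref{prop:spec_posuniversal}.

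Compactness and connectedness of $\Mfld$ are given.

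For nondegeneracy, I apply \propref{prop:spec_posuniversal}: $k_\rho$ is nondegenerate if and only if $\rho(\lambda_n)\neq 0$ for every $n$. This holds because $\rho$ is assumed nonvanishing. \thmref{thm:identifiable} then gives that $V^{\varepsilon,c_{k_\rho}}_{q,p}$ is identifiable.

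The step needing the most care is the appeal to \propref{prop:spec_posuniversal}, via \lemref{lem:spectral}, which requires the eigenfunctions $\{\phi_n\}$ to form a basis of $L^2(\Mfld)$ and to diagonalize $k_\rho$. The basis property is the standard spectral theorem for the Laplace--Beltrami operator on a compact manifold. The diagonalization follows from orthonormality together with the uniform convergence of \eqref{eq:spectral_kernel}, which justifies exchanging sum and integral. Otherwise the argument is a direct composition of earlier results.

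A minor point: positivity of $k_\rho$ does not follow from $\rho\ge 0$ alone, which is why it is imposed as a hypothesis. No positive definiteness or universality argument is needed, since the spectral criterion suffices.
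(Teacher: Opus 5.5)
Your proposal is correct and follows the same route the paper intends (the corollary has no separate proof; it is obtained by combining \propref{prop:spec_posuniversal} with \thmref{thm:identifiable}): the Gibbs kernel of $c_{k_\rho}=-\varepsilon\log k_\rho$ is $k_\rho$ itself, which is nondegenerate because $\rho(\lambda_n)\neq 0$, and the symmetry and Lipschitz hypotheses then let you apply \thmref{thm:identifiable} directly. Your extra checks are details the paper leaves unstated but which fit this argument: continuity of $k_\rho$ via uniform convergence, and the interchange of sum and integral that gives the diagonalization.
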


We identify several kernels that satisfy the condition in~\coref{cor:spec_ident} and summarize them in~\tabref{tab:spectral-densities}. Accordingly, the Riemannian gradient $\gradg^{(1)}c_{k_\rho}$ is obtained by differentiating the truncated sum. Furthermore, the eigenfunctions are spherical harmonics on the sphere and Fourier modes on flat tori, both of which yield closed-form expressions.

\begin{table}[t!]
\caption[Summary of cost functions]{\textbf{Overview of cost functions considered.} $P_x$ denotes the orthogonal projection onto $T_x\Mfld$ for $\Mfld\subset\mathbb{R}^D$. The last column states for which $\varepsilon$ and on which manifolds the Gibbs kernel is nondegenerate, so that the velocity field is identifiable by \thmref{thm:identifiable}; ``a.e.'' means all but countably many. On manifolds not listed, no guarantee is established.}
\centering
\footnotesize
\setlength{\tabcolsep}{3pt}
\begin{tabularx}{\linewidth}{c|lll>{\raggedright\arraybackslash}X}
\toprule
Cost & $c(x,y)$ & Gibbs kernel $e^{-c/\varepsilon}$ & $\gradg^{(1)}c(x,y)$ & Identifiable for \\
\midrule
Squared Geodesic & $\tfrac12\dg(x,y)^2$ & $e^{-\dg(x,y)^2/2\varepsilon}$ & $-\Log{x}(y)$ & a.e.\ $\varepsilon$ on $\mathbb{S}^d$, $\mathbb{T}^d$  \\
Spectral & $-\varepsilon\log k_\rho(x,y)$ & $\sum_n \rho(\lambda_n)\,\phi_n(x)\phi_n(y)$ & $-\varepsilon\,\gradg^{(1)}\!\log k_\rho(x,y)$ & all $\varepsilon$, any $\Mfld$ \\
Chordal & $\|x-y\|^2$ & $e^{-\|x-y\|^2/\varepsilon}$ & $2P_x(x-y)$ & all $\varepsilon$, any $\Mfld\subset\mathbb{R}^D$  \\
Geodesic & $\dg(x,y)$ & $e^{-\dg(x,y)/\varepsilon}$ & $-\Log{x}(y)/\dg(x,y)$ & all $\varepsilon$ on $\mathbb{S}^d$  \\
\bottomrule
\end{tabularx}
\label{tab:cost-summary}
\vspace{-\baselineskip}
\end{table}

\begin{wraptable}{r}{0.52\textwidth}
    \vspace{-\baselineskip}
    \caption[Spectral densities of common kernels]{%
    \textbf{Spectral densities of nondegenerate spectral kernels.}
    When used as costs in the Sinkhorn divergence~\eqref{eq:eot},
    these kernels yield identifiable velocity fields.
    See \appref{app:spectral-kernels} for details.}
    \centering
    \footnotesize
    \label{tab:spectral-densities}
    \renewcommand{\arraystretch}{1.35}
    \begin{tabular}{@{}l c l@{}}
        \toprule
        Kernel & $\rho(\lambda_n)$ & Parameters \\
        \midrule
        Mat\'ern
            & $\sigma^2\bigl(\tfrac{2\nu}{\kappa^2}+\lambda_n\bigr)^{-\nu-d/2}$
            & $\nu>\tfrac12,\ \kappa,\sigma^2>0$ \\
        Heat
            & $e^{-t\lambda_n}$
            & $t>0$ \\
        Sub.\ Heat
            & $e^{-t\lambda_n^{\alpha}}$
            & $t>0,\ \alpha\in(0,1]$ \\
        \bottomrule
    \end{tabular}
    \vspace{-1.5\baselineskip}
\end{wraptable}

\vspace{-0.5\baselineskip}
\subsubsection{Distance-Based Costs}
\label{sec:nonspectral-cost}
\vspace{-0.5\baselineskip}
Distance-based costs also fit within our framework: under suitable conditions on the manifold, they too guarantee identifiability. We discuss two such examples: the chordal cost and the geodesic cost.

\vspace{-0.5\baselineskip}
\paragraph{Chordal Cost.}
\label{sec:chordal-cost}
When $\Mfld$ is embedded in $\mathbb{R}^D$, the chordal kernels $\exp{(-\lambda\|x-y\|^2)}$~\citep{steinwart2001influence}
are known to be positive definite and universal for any compact subset of $\mathbb{R}^D$. In particular, they are
nondegenerate by \propref{prop:uni_nondeg}.
Using the relation between the cost and its
Gibbs kernel in~\thmref{thm:identifiable}, we identify the induced cost as the squared chordal
distance $\|x-y\|^2$ restricted to $\Mfld$, which is Lipschitz on a compact manifold and symmetric, making it identifiable.

\vspace{-0.5\baselineskip}
\paragraph{Geodesic Cost.}
\label{sec:geodesic-cost}
The Laplacian kernel $\exp(-\lambda d_g(x,y))$ is universal on the sphere ($\mathbb{S}^d$) for every $\lambda>0$~\citep{qidimension2024}.
By \propref{prop:uni_nondeg}, it is nondegenerate.
With $\lambda=1/\varepsilon$, it is the Gibbs kernel of the geodesic distance itself,
\begin{equation}
c(x,y) = d_g(x,y),
\label{eq:geodesic-cost}
\end{equation}
which is a symmetric, Lipschitz cost on $\mathbb{S}^d$. Therefore, identifiability follows from~\thmref{thm:identifiable},
in contrast to the squared geodesic cost.

\vspace{-0.5\baselineskip}
\section{Related Work}

\vspace{-0.5\baselineskip}
\paragraph{Generative Modeling on Manifolds.}
Early work extended normalizing flows to non-Euclidean domains, including hyperbolic spaces, spheres, and tori \citep{bose2020latent,rezende2020normalizing,mathieu2020riemannian}. More recent work adapted diffusion and flow-based models to Riemannian manifolds~\citep{huang2022riemannian,de2022riemannian,chen2024rfm}. However, as with their Euclidean counterparts, sampling from these models remains computationally expensive because it requires multiple sequential steps to integrate an SDE or ODE.

\vspace{-0.5\baselineskip}
\paragraph{Few-Step Generative Models.}
To reduce the sampling cost of diffusion and flow models, several methods distill pretrained multi-step models into few-step generators~\citep{salimans2022progressive,yin2024dmd,zhou2024sid}. However, these approaches require an additional distillation stage on top of teacher pretraining. Other methods instead train few-step generators from scratch by incorporating the underlying SDE or ODE dynamics directly into the training objective~\citep{song2024icm,boffi2026flowmap,geng2026meanflow}. Drifting models~\citep{deng2026driftingmodel} follow this paradigm to learn one-step generators from scratch. During training, they evolve the model distribution along a velocity field that characterizes its discrepancy from the target distribution. W-Flow~\citep{han2026wflow} further introduces an identifiable velocity field, improving empirical performance while providing stronger guarantees that the learned distribution converges to the target.

\vspace{-0.5\baselineskip}
\paragraph{Few-step Generative Modeling on Manifolds.}
Several recent works accelerate generation on manifolds by adapting few-step techniques originally developed for Euclidean settings.~\citet{woo2026riemannian} and~\citet{davis2026gfm} learn shortcut mappings that avoid repeatedly integrating an ODE or SDE, while~\citet{cheng2026riemannian} extend consistency models~\citep{song2023consistency} to manifolds.
Concurrently,~\citet{esteban2026kernel} generalize drifting models~\citep{deng2026driftingmodel} by representing the drifting field as a weighted average of kernel gradients, extending them to arbitrary manifolds. They show that this field corresponds to the WGF of the KL divergence between smoothed probability densities. In contrast, our work adopts the Sinkhorn divergence as the primary energy functional and additionally investigates the squared Maximum Mean Discrepancy (MMD) in~\appref{app:other-energy-functionals}.

\vspace{-0.5\baselineskip}
\section{Experiments}
\label{sec:results}
\vspace{-0.5\baselineskip}

\subsection{Experiment Setup}
\label{sec:experiment_setup}

\vspace{-0.5\baselineskip}
\paragraph{Benchmarks.} To demonstrate the effectiveness of our method, we conduct extensive experiments on standard benchmarks spanning several manifolds. Specifically, we evaluate~\ourmethod{} on the task of modeling distributions over a sphere ($\mathbb{S}^2$), using geospatial event data over the Earth~\citep{mathieu2020riemannian}. We also consider distributions over torus geometry, with representative examples being protein side chain ($\mathbb{T}^2$)~\citep{lovell2003structure}, and RNA backbone torsion angles ($\mathbb{T}^7$)~\citep{murray2003rna}. Furthermore, we consider modeling distributions on general manifolds using mesh-based examples and distributions from~\citet{chen2024rfm}.

\vspace{-\baselineskip}
\paragraph{Evaluation Metrics.} Across all benchmarks, we assess sample quality using the following metrics:~\kmmd{} (Maximum Mean Discrepancy),~\mmd{} (Minimum Matching Distance),~\nna{} (1-Nearest-Neighbor Accuracy), and~\cov{} (Coverage). Each metric is computed between the test data and the generated samples. 

\vspace{-\baselineskip}
\paragraph{Baselines.} We compare our method against state-of-the-art one- and few-step generative models designed for distributions on manifolds. Specifically, we consider Riemannian Flow Matching (RFM)~\citep{chen2024rfm}, Generalised Flow Map (GFM)~\citep{davis2026gfm}, Riemannian MeanFlow (RMF)~\citep{woo2026riemannian}, and Riemannian Consistency Model (RCM)~\citep{cheng2026riemannian}. We additionally include the concurrent work Kernel-Gradient Drifting Models (KGD)~\citep{esteban2026kernel} as a baseline. To ensure a fair comparison, we match the number of model parameters, training wall-clock time, and optimizer configurations across all methods. We use the official implementations of each baseline, making only minor modifications to integrate them with our standalone evaluation script. Full experimental details are provided in~\appref{app:experiment-details}.

\newcommand{\ph}{0.xxx}
\newcommand{\phdset}{\ph & \ph & \ph & \ph}                             %
\newcommand{\phrow}{\phdset & \phdset & \phdset & \phdset}               %
\newcommand{\phpair}{\ph & \ph}                                         %
\newcommand{\phtorus}{\phpair & \phpair & \phpair & \phpair & \phpair}   %

\vspace{-0.5\baselineskip}
\subsection{Sphere -- $\mathbb{S}^2$}
\vspace{-0.5\baselineskip}
We evaluate~\ourmethod{} on the task of modeling distributions on spheres. 
Following prior work~\citep{de2022riemannian,chen2024rfm,davis2026gfm}, we use the geospatial datasets introduced by~\citet{mathieu2020riemannian}, collected from a variety of sources, including~\citet{noaa2020earthquake, noaa2020volcano},~\citet{brakenridge2017flood}, and~\citet{eosdis2020fire}. The main quantitative results are reported in~\tabref{tab:sphere-results}. As shown, our method outperforms current state-of-the-art methods across nearly all benchmarks in the one-step setting, with the geodesic and subordinated heat cost variants achieving the strongest overall performance. Notably, both costs induce identifiable velocity fields on the sphere.

Alongside the quantitative evaluation,~\figref{fig:sphere_qualitative} presents a qualitative comparison between~\ourmethod{} and  GFM~\citep{davis2026gfm}, the strongest baseline. The visual results reflect the gap observed in the quantitative metrics: on benchmarks such as `Wildfire' and `Flood', GFM produces overly diffuse samples, whereas~\ourmethod{} closely captures the reference distributions.

\begin{figure}
    \centering
    \includegraphics[width=1\linewidth]{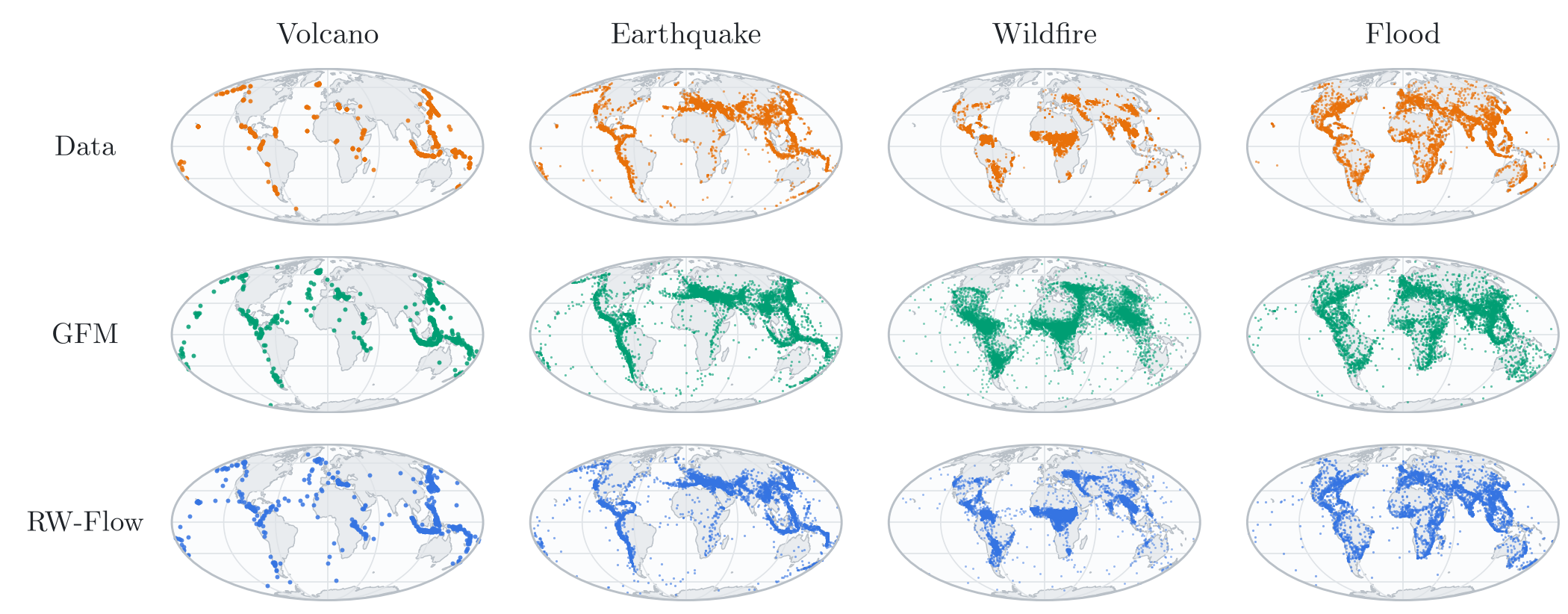}
    \caption{\textbf{Generated samples on geospatial event data benchmarks at 1 NFE.} Each column visualizes the dataset (top) and samples generated by GFM~\citep{davis2026gfm} (middle) and~\ourmethod{}-S (bottom).~\ourmethod{}-S more accurately captures the underlying data distributions, whereas GFM produces diffuse samples that deviate substantially from the ground-truth distributions.}
    \label{fig:sphere_qualitative}
    \vspace{-\baselineskip}
\end{figure}

\newcommand{\rothdr}[1]{\rotatebox{90}{#1}}
\newcommand{\hdrsphere}{\rothdr{\kmmd$\downarrow$} & \rothdr{\mmd$\downarrow$} & \rothdr{\cov$\uparrow$} & \rothdr{\nna}}
\providecommand{\hdrpaper}{\rothdr{\kmmd$\downarrow$} & \rothdr{\cov$\uparrow$} & \rothdr{\nna}}
\begin{table*}[h]
\scriptsize
\setlength{\tabcolsep}{2pt}
\caption[Results on geospatial event data benchmarks]{\textbf{Results on geospatial event data benchmarks.} \kmmd{}, \cov{}, and \nna{} (closest to 0.5 is best) on the volcano, earthquake, flood, and fire datasets (NFE\,$=$\,1; \textbf{bold} = best, \underline{underline} = second best; the held-out row is the val-vs-test reference and is excluded from ranking.) Suffixes denote variants: CT (Consistency Training)
sCT (simplified Consistency Training) L (Lagrangian), E (Eulerian), S (semigroup), MF (mean flow); GG (geodesic Gaussian), GL (geodesic Laplacian), M (Mat\'ern); $v$ denotes
$v$-prediction; \ourmethod{} costs: SG (squared geodesic), C (chordal), G (geodesic), H (heat), M (Mat\'ern), S (subordinated). The metrics are averaged over 3 runs. Full results with standard deviations are in~\tabref{tab:sphere-results-full}.}
\begin{tabularx}{\linewidth}{l*{12}{Y}}
\toprule
 & \multicolumn{3}{c}{Volcano} & \multicolumn{3}{c}{Earthquake} & \multicolumn{3}{c}{Flood} & \multicolumn{3}{c}{Fire} \\
\cmidrule(lr){2-4} \cmidrule(lr){5-7} \cmidrule(lr){8-10} \cmidrule(lr){11-13}
Method
 & \hdrpaper
 & \hdrpaper
 & \hdrpaper
 & \hdrpaper \\
\midrule
\textit{Held-out} & $0.153$ & $0.451$ & $0.555$ & $0.026$ & $0.570$ & $0.471$ & $0.025$ & $0.548$ & $0.509$ & $0.024$ & $0.543$ & $0.504$ \\
\midrule
RFM           & $0.292$ & $0.431$ & $0.835$ & $0.291$ & $0.278$ & $0.845$ & $0.307$ & $0.345$ & $0.790$ & $0.357$ & $0.176$ & $0.913$ \\
\midrule
GFM-L         & $\underline{0.099}$ & $0.516$ & $0.553$ & $0.040$ & $\mathbf{0.570}$ & $0.562$ & $0.062$ & $\underline{0.569}$ & $0.547$ & $0.029$ & $0.520$ & $0.643$ \\
GFM-E         & $0.199$ & $0.431$ & $0.823$ & $0.174$ & $0.339$ & $0.806$ & $0.191$ & $0.350$ & $0.795$ & $0.190$ & $0.201$ & $0.900$ \\
GFM-S         & $\mathbf{0.094}$ & $0.512$ & $0.677$ & $0.042$ & $0.478$ & $0.693$ & $0.060$ & $0.498$ & $0.651$ & $0.031$ & $0.392$ & $0.768$ \\
GFM-MF        & $0.119$ & $0.512$ & $0.711$ & $0.043$ & $0.510$ & $0.658$ & $0.059$ & $0.515$ & $0.642$ & $0.033$ & $0.408$ & $0.751$ \\
\midrule
RMF-L$v$      & $0.119$ & $0.504$ & $0.663$ & $0.041$ & $0.535$ & $0.640$ & $0.047$ & $0.545$ & $0.613$ & $0.034$ & $0.453$ & $0.715$ \\
RMF-E$v$      & $0.142$ & $0.496$ & $0.689$ & $0.038$ & $0.502$ & $0.653$ & $0.045$ & $0.494$ & $0.661$ & $0.032$ & $0.428$ & $0.740$ \\
RMF-S$v$      & $0.122$ & $0.512$ & $0.687$ & $0.040$ & $0.464$ & $0.703$ & $0.045$ & $0.498$ & $0.661$ & $0.033$ & $0.380$ & $0.770$ \\
\midrule
KGD-GG        & $0.237$ & $0.447$ & $0.799$ & $0.353$ & $0.252$ & $0.866$ & $0.163$ & $0.453$ & $0.700$ & $0.398$ & $0.154$ & $0.926$ \\
KGD-GL        & $0.254$ & $0.435$ & $0.732$ & $0.252$ & $0.353$ & $0.772$ & $0.158$ & $0.460$ & $0.657$ & $0.272$ & $0.291$ & $0.816$ \\
KGD-M         & $0.131$ & $0.488$ & $0.699$ & $0.051$ & $0.485$ & $0.689$ & $0.059$ & $0.521$ & $0.631$ & $0.138$ & $0.332$ & $0.801$ \\
\midrule
RCM-CT           & $0.147$ & $0.496$ & $0.691$ & $0.055$ & $0.520$ & $0.655$ & $0.052$ & $0.511$ & $0.659$ & $0.045$ & $0.405$ & $0.744$ \\
RCM-scT          & $0.142$ & $0.496$ & $0.811$ & $0.092$ & $0.376$ & $0.796$ & $0.115$ & $0.417$ & $0.751$ & $0.109$ & $0.236$ & $0.871$ \\
\midrule
\ourmethod-SG & $0.110$ & $\underline{0.533}$ & $0.563$ & $\mathbf{0.031}$ & $0.552$ & $0.565$ & $0.061$ & $0.558$ & $\underline{0.533}$ & $\mathbf{0.024}$ & $0.530$ & $0.643$ \\
\ourmethod-C  & $0.112$ & $\mathbf{0.545}$ & $0.612$ & $0.036$ & $0.538$ & $0.594$ & $0.053$ & $0.553$ & $0.580$ & $0.031$ & $0.475$ & $0.688$ \\
\ourmethod-G  & $0.103$ & $0.484$ & $\underline{0.510}$ & $0.047$ & $0.556$ & $\mathbf{0.520}$ & $\underline{0.044}$ & $0.551$ & $\mathbf{0.530}$ & $0.027$ & $\mathbf{0.554}$ & $\mathbf{0.583}$ \\
\ourmethod-H  & $0.114$ & $0.516$ & $0.587$ & $0.040$ & $0.527$ & $0.618$ & $0.052$ & $\mathbf{0.587}$ & $0.591$ & $\underline{0.025}$ & $0.426$ & $0.730$ \\
\ourmethod-M  & $0.122$ & $0.524$ & $0.638$ & $\underline{0.033}$ & $0.529$ & $0.624$ & $0.054$ & $0.522$ & $0.639$ & $\mathbf{0.024}$ & $0.439$ & $0.738$ \\
\ourmethod-S  & $\mathbf{0.094}$ & $0.512$ & $\mathbf{0.496}$ & $0.040$ & $\underline{0.568}$ & $\underline{0.537}$ & $\mathbf{0.043}$ & $0.563$ & $0.535$ & $0.028$ & $\underline{0.553}$ & $\underline{0.592}$ \\
\bottomrule
\end{tabularx}
\label{tab:sphere-results}
\vspace{-1.5\baselineskip}
\end{table*}

\subsection{Flat Torus -- $\mathbb{T}^d$}
\vspace{-0.5\baselineskip}
We also consider distributions on flat tori, the natural state spaces of torsion angles in protein and RNA that encode their structures. Specifically, we use protein side-chain torsion angles represented as points on $\mathbb{T}^2$~\citep{lovell2003structure}, and RNA backbone torsion angles on $\mathbb{T}^7$~\citep{murray2003rna}. The quantitative results are summarized in~\tabref{tab:torus-results}. Across nearly all settings, variants of~\ourmethod{} outperform both the Lagrangian and other self-distillation variants of GFM.
Interestingly, \ourmethod{}-SG performs comparably to \ourmethod{}-H, even though our framework
does not guarantee identifiability of the squared geodesic cost for every $\varepsilon > 0$.
This is consistent with \propref{prop:gaussian_kernels_almost_nondegenerate}: on $\mathbb{T}^d$ the
squared geodesic cost is identifiable for all but countably many $\varepsilon$, so a generic
choice of $\varepsilon$ behaves like an identifiable cost in practice.

\providecommand{\hdrtorus}{\rothdr{\kmmd$\downarrow$} & \rothdr{\cov$\uparrow$} & \rothdr{\nna}}

\begin{table*}[h]
\scriptsize
\setlength{\tabcolsep}{2pt}
\caption[Results on protein and RNA torsion-angle datasets]{\textbf{Results on protein and RNA torsion-angle datasets.} \kmmd{}, \cov{}, and \nna{} (closest to 0.5 is best) on four protein and RNA torsion-angle datasets (NFE\,$=$\,1; \textbf{bold} = best, \underline{underline} = second best; the held-out row is the val-vs-test reference and is excluded from ranking.) Variant suffixes as in~\tabref{tab:sphere-results}. 
The metrics are averaged over 3 runs. Full results with standard deviations are in~\tabref{tab:torus-results-full}.} 
\begin{tabularx}{\linewidth}{l*{15}{Y}}
\toprule
 & \multicolumn{3}{c}{General} & \multicolumn{3}{c}{Glycine} & \multicolumn{3}{c}{Proline} & \multicolumn{3}{c}{Prepro} & \multicolumn{3}{c}{RNA} \\
\cmidrule(lr){2-4} \cmidrule(lr){5-7} \cmidrule(lr){8-10} \cmidrule(lr){11-13} \cmidrule(lr){14-16}
Method
 & \hdrpaper
 & \hdrpaper
 & \hdrpaper
 & \hdrpaper
 & \hdrpaper \\
\midrule
\textit{Held-out} & $0.007$ & $0.585$ & $0.499$ & $0.023$ & $0.590$ & $0.491$ & $0.020$ & $0.602$ & $0.489$ & $0.027$ & $0.572$ & $0.514$ & $0.036$ & $0.588$ & $0.504$ \\
\midrule
RFM           & $0.469$ & $0.191$ & $0.831$ & $0.281$ & $0.314$ & $0.741$ & $0.478$ & $0.160$ & $0.880$ & $0.486$ & $0.225$ & $0.820$ & $0.559$ & $0.205$ & $0.942$ \\
\midrule
GFM-L         & $\mathbf{0.018}$ & $0.515$ & $0.561$ & $\underline{0.030}$ & $0.562$ & $\underline{0.523}$ & $0.026$ & $0.547$ & $0.545$ & $0.043$ & $0.551$ & $0.531$ & $0.102$ & $\underline{0.571}$ & $0.572$ \\
GFM-E         & $0.327$ & $0.257$ & $0.777$ & $0.134$ & $0.389$ & $0.681$ & $0.388$ & $0.162$ & $0.878$ & $0.324$ & $0.288$ & $0.794$ & $0.540$ & $0.313$ & $0.888$ \\
GFM-S         & $0.067$ & $0.362$ & $0.690$ & $0.056$ & $0.450$ & $0.633$ & $0.059$ & $0.378$ & $0.699$ & $0.080$ & $0.423$ & $0.672$ & $0.183$ & $0.473$ & $0.756$ \\
GFM-MF        & $0.094$ & $0.405$ & $0.654$ & $0.044$ & $0.500$ & $0.581$ & $0.094$ & $0.421$ & $0.659$ & $0.122$ & $0.469$ & $0.624$ & $0.558$ & $0.248$ & $0.926$ \\
\midrule
RMF-L$v$      & $0.059$ & $0.453$ & $0.610$ & $0.049$ & $0.494$ & $0.592$ & $0.063$ & $0.498$ & $0.586$ & $0.080$ & $0.507$ & $0.594$ & $0.195$ & $0.545$ & $0.669$ \\
RMF-E$v$      & $0.105$ & $0.386$ & $0.672$ & $0.047$ & $0.489$ & $0.604$ & $0.091$ & $0.401$ & $0.676$ & $0.106$ & $0.453$ & $0.643$ & $0.336$ & $0.449$ & $0.790$ \\
RMF-S$v$      & $0.037$ & $0.415$ & $0.643$ & $0.047$ & $0.474$ & $0.611$ & $0.031$ & $0.369$ & $0.701$ & $0.049$ & $0.511$ & $0.584$ & $0.094$ & $0.498$ & $0.644$ \\
\midrule
KGD-GG        & $0.181$ & $0.398$ & $0.656$ & $0.119$ & $0.459$ & $0.619$ & $0.029$ & $0.540$ & $0.554$ & $0.071$ & $0.552$ & $0.540$ & $0.560$ & $0.204$ & $0.948$ \\
KGD-GL        & $0.337$ & $0.283$ & $0.754$ & $0.213$ & $0.371$ & $0.693$ & $0.026$ & $\mathbf{0.576}$ & $0.526$ & $0.380$ & $0.294$ & $0.768$ & $0.559$ & $0.213$ & $0.940$ \\
\midrule
RCM-CT           & $0.155$ & $0.342$ & $0.707$ & $0.267$ & $0.284$ & $0.768$ & $0.138$ & $0.377$ & $0.715$ & $0.332$ & $0.287$ & $0.791$ & $0.416$ & $0.407$ & $0.809$ \\
RCM-sCT          & $0.402$ & $0.210$ & $0.819$ & $0.331$ & $0.243$ & $0.807$ & $0.464$ & $0.139$ & $0.904$ & $0.351$ & $0.283$ & $0.776$ & $0.544$ & $0.311$ & $0.889$ \\
\midrule
\ourmethod-SG & $0.021$ & $\underline{0.547}$ & $\underline{0.532}$ & $0.032$ & $\underline{0.570}$ & $\underline{0.523}$ & $\underline{0.023}$ & $0.571$ & $\mathbf{0.508}$ & $\mathbf{0.027}$ & $\mathbf{0.580}$ & $\mathbf{0.510}$ & $0.105$ & $0.551$ & $\underline{0.519}$ \\
\ourmethod-C  & $0.086$ & $0.483$ & $0.582$ & $0.038$ & $0.567$ & $0.535$ & $\mathbf{0.018}$ & $0.563$ & $\underline{0.524}$ & $\underline{0.042}$ & $\underline{0.568}$ & $\underline{0.516}$ & $\mathbf{0.046}$ & $0.565$ & $0.532$ \\
\ourmethod-G  & $0.022$ & $\mathbf{0.562}$ & $\mathbf{0.519}$ & $0.117$ & $0.472$ & $0.598$ & $0.029$ & $\underline{0.574}$ & $0.536$ & $0.110$ & $0.526$ & $0.562$ & $\underline{0.062}$ & $\mathbf{0.580}$ & $\mathbf{0.508}$ \\
\ourmethod-H  & $\underline{0.020}$ & $0.542$ & $0.533$ & $\mathbf{0.028}$ & $\mathbf{0.574}$ & $\mathbf{0.517}$ & $\underline{0.023}$ & $0.567$ & $0.532$ & $0.062$ & $0.562$ & $0.518$ & $0.104$ & $0.552$ & $0.520$ \\
\bottomrule
\end{tabularx}
\label{tab:torus-results}
\end{table*}
\vspace{-0.5\baselineskip}

\subsection{General Manifolds}
\vspace{-0.5\baselineskip}
The spectral costs discussed in~\secref{sec:spectral-cost} offer a key advantage over alternative choices: they are applicable to arbitrary compact manifolds. This allows~\ourmethod{} to extend naturally beyond specific manifold geometries. To demonstrate this capability, we consider the task of learning distributions on triangular meshes, following the setup of~\citet{chen2024rfm}. As shown in~\tabref{tab:mesh-results},~\ourmethod{} matches the target distributions comparably to RFM~\citep{chen2024rfm} using only a single step. In contrast, RFM requires 1,000 Euler integration steps (RFM$_{1000}$), resulting in a sampling time that is two orders of magnitude longer. Qualitative results from~\ourmethod{} are provided in~\figref{fig:mesh-teaser}.

\newcommand{\hdrmesh}{%
  \rothdr{\kmmd$\downarrow$} &
  \rothdr{\cov$\uparrow$} &
  \rothdr{\nna}}

\providecommand{\hdrkmmdcovnnatime}{\rothdr{\kmmd$\downarrow$} & \rothdr{\cov$\uparrow$} & \rothdr{\nna} & \rothdr{Time\,(s)$\downarrow$}}

\begin{table*}[h]
\scriptsize
\setlength{\tabcolsep}{2pt}
\caption[Results on mesh datasets]{\textbf{Results on mesh datasets.}~\kmmd{},~\cov{},~\nna{} (closest to 0.5 is best), and sampling time on the Bunny and Spot datasets (\textbf{bold} = best, \underline{underline} = second best; the held-out row is the val-vs-test reference and is excluded from ranking).
RFM is sampled with 1{,}000 steps of its projected Euler integrator (RFM$_{1000}$);
\ourmethod-H uses the heat cost and is sampled with NFE\,$=$\,1.}
\begin{tabularx}{\linewidth}{l*{16}{Y}}
\toprule
 & \multicolumn{4}{c}{Bunny ($k=10$)} & \multicolumn{4}{c}{Bunny ($k=50$)} & \multicolumn{4}{c}{Spot ($k=10$)} & \multicolumn{4}{c}{Spot ($k=50$)} \\
\cmidrule(lr){2-5} \cmidrule(lr){6-9} \cmidrule(lr){10-13} \cmidrule(lr){14-17}
Method
 & \hdrkmmdcovnnatime
 & \hdrkmmdcovnnatime
 & \hdrkmmdcovnnatime
 & \hdrkmmdcovnnatime \\
\midrule
\textit{Held-out} & $0.020$ & $0.583$ & $0.493$ & --- & $0.025$ & $0.583$ & $0.498$ & --- & $0.012$ & $0.587$ & $0.500$ & --- & $0.014$ & $0.584$ & $0.515$ & --- \\
\midrule
RFM$_{1000}$ & $\underline{0.048}$ & $\mathbf{0.569}$ & $\mathbf{0.514}$ & $\underline{49.5}$ & $\mathbf{0.028}$ & $\mathbf{0.565}$ & $\mathbf{0.516}$ & $\underline{48.7}$ & $\mathbf{0.016}$ & $\mathbf{0.561}$ & $\mathbf{0.522}$ & $\underline{54.8}$ & $\underline{0.023}$ & $\underline{0.540}$ & $\mathbf{0.549}$ & $\underline{54.7}$ \\
\ourmethod-H & $\mathbf{0.030}$ & $\underline{0.558}$ & $\underline{0.537}$ & $\mathbf{0.16}$ & $\underline{0.034}$ & $\underline{0.544}$ & $\underline{0.550}$ & $\mathbf{0.13}$ & $\underline{0.018}$ & $\underline{0.493}$ & $\underline{0.601}$ & $\mathbf{0.24}$ & $\mathbf{0.015}$ & $\mathbf{0.549}$ & $\underline{0.559}$ & $\mathbf{0.24}$ \\
\bottomrule
\end{tabularx}
\label{tab:mesh-results}
\end{table*}
\vspace{-0.5\baselineskip}

\vspace{-0.5\baselineskip}
\section{Conclusion}
\vspace{-0.5\baselineskip}

We propose~\ourmethod{}, a one-step generative model guided by the Wasserstein gradient flow of Sinkhorn divergence for learning distributions on compact Riemannian manifolds.~\ourmethod{} builds on a rigorous theoretical foundation for characterizing cost functions that guarantee identifiability of the induced velocity fields. Our main theoretical result establishes a broad class of such cost functions, providing principled choices for the training objective. Experiments further show that~\ourmethod{} surpasses recent few-step baselines under matched model capacity and training cost, demonstrating strong empirical performance alongside its theoretical guarantees.

\clearpage

\subsection*{AI use statement}

In this work, AI tools were used to help the authors in
proving mathematical claims, implementing methods, supporting
qualitative and thematic analysis, and providing feedback on
the
research methodology or experiments.
We have not used generative AI tools to assist in translation
and interpretation
of the results. The generation of synthetic datasets and the cleaning or reformatting of datasets were not applicable to this work.
We have reviewed all AI-assisted work. The authors have manually verified
and refined the code, mathematical claims, as well as parts of the manuscript that are written or edited using AI tools. We take responsibility for the final content of this work, including text, claims or artifacts produced with the aid of generative AI.

\subsection*{Ethics Statement}
All authors have read and adhere to the ICLR Code of Ethics. This work develops generative modeling methodology and its theory. It involves no human subjects, crowdsourcing, or personally identifiable information. All experiments use publicly available datasets. 

\subsection*{Reproducibility Statement}
To support reproducibility, we will release our code publicly upon acceptance, together with instructions for running it locally and the exact configuration files used for every experiment.

\bibliography{iclr2027_conference}
\bibliographystyle{iclr2027_conference}

\newpage

\appendix
\section*{Appendix}
This appendix is organized as follows. \appref{app:notation} summarizes the notation used throughout the paper, and \appref{app:proofs} contains the proofs deferred from the main text. \appref{app:other-energy-functionals} discusses alternative energy functionals, such as the squared Maximum Mean Discrepancy (MMD) and the KL divergence over smoothed densities as in \citet{esteban2026kernel}. \appref{app:spectral-kernels} examines the spectral kernels used in this work (Heat, Mat\'ern, and Subordinated Heat) in greater depth. Finally, \appref{app:experiment-details} details our experiments, including the experimental setup, training and inference algorithms, 
and full results with standard deviations.

\section{Notation}
\label{app:notation}

\tabref{tab:notation} collects the notation used throughout the paper.
For an in-depth introduction to Riemannian Geometry, refer to \citet{lee2018introduction} or
the appendix of \citet{woo2026riemannian}.
For optimal transport and Wasserstein gradient flows on Riemannian manifolds we refer to \cite{villani2008optimal}, and for the entropy-regularized transport and the Sinkhorn divergence to \citet{feydy2019interpolating}.

\footnotesize
\renewcommand{\arraystretch}{1.05}
\begin{longtable}{@{}l p{0.82\linewidth}@{}}
\caption{\textbf{Notations used in the paper}.}
\label{tab:notation} \\
\toprule
Symbol & Meaning \\
\midrule
\endfirsthead
\toprule
Symbol & Meaning \\
\midrule
\endhead
\bottomrule
\endlastfoot
\multicolumn{2}{@{}l}{\textbf{Manifold and geometry}} \\
$(\Mfld, g)$ & compact, connected, smooth Riemannian manifold with metric $g$; $\mathbb{S}^d$ and $\mathbb{T}^d$ denote the unit sphere and the flat torus of dimension $d$ \\
$\mathbb{R}^D$ & ambient Euclidean space in which $\Mfld$ is embedded for the network parametrization \\
$\Tang{x}$ & tangent space of $\Mfld$ at $x$; $\langle\cdot,\cdot\rangle_g$ and $\|\cdot\|_g$ are the metric and its norm on $\Tang{x}$ \\
$\Exp{x}$, $\Log{x}$ & exponential map $\Tang{x}\to\Mfld$ at $x$ and its inverse on the domain where it is defined \\
$\dg(x,y)$ & geodesic distance on $(\Mfld,g)$ \\
$\gradg$, $\divg$ & Riemannian gradient and divergence; $\gradg^{(1)}$ is the gradient with respect to the first argument of a function on $\Mfld\times\Mfld$ \\
$\volg$ & Riemannian volume measure \\
$P_x$ & orthogonal projection $\mathbb{R}^D\to\Tang{x}$ \\
$(\lambda_n,\phi_n)$ & eigenpairs of the Laplace--Beltrami operator, with $\{\phi_n\}$ orthonormal in $L^2(\Mfld)$ \\
$\hat k_j$ & spectral coefficient of a kernel $k$ diagonalized by $\{\phi_j\}$ (\lemref{lem:spectral}) \\
\midrule
\multicolumn{2}{@{}l}{\textbf{Measures and transport}} \\
$\Prob(\Mfld)$ & probability measures on $\Mfld$ \\
$\Meas_b(\Mfld)$ & finite signed measures on $\Mfld$; $|\sigma|$ is the total-variation measure of $\sigma$ \\
$p$, $q_0$, $q_\theta$ & target distribution, base distribution, and model distribution $q_\theta=(f_\theta)_{\#}q_0$, where $\#$ denotes the pushforward \\
$c(x,y)$ & cost function on $\Mfld\times\Mfld$, assumed symmetric and Lipschitz\\
$\varepsilon$ & entropic regularization strength\\
$k=e^{-c/\varepsilon}$ & Gibbs kernel of the cost $c$ \\
$k*\sigma$ & kernel convolution operator, $(k*\sigma)(x)=\int k(x,y)\,\dd\sigma(y)$ \\
$\Pi(q,p)$ & couplings with marginals $q$ and $p$ \\
$\OT_{\varepsilon,c}$, $\Sink_{\varepsilon,c}$ & entropy-regularized optimal transport cost \eqref{eq:eot} and Sinkhorn divergence \eqref{eq:sinkhorn} \\
$\pi^{qp}$ & optimal coupling of $\OT_{\varepsilon,c}(q,p)$; $\pi^{qp}(\dd y\mid x)$ is its disintegration with respect to $x$ \\
$u$, $v$, $w$ & dual potentials of $\OT_{\varepsilon,c}(q,p)$ in the first and second argument, and the symmetric dual potential of $\OT_{\varepsilon,c}(q,q)$ \\
$\KL$ & Kullback--Leibler divergence \\
$\gF$, $\frac{\delta\gF}{\delta q}$ & energy functional on $\Prob(\Mfld)$ and its first variation \\
$V_{q,p}$, $V^{\varepsilon,c}_{q,p}$ & velocity field of the Wasserstein gradient flow of $\gF$\\
$k_\rho$, $\rho$ & spectral kernel \eqref{eq:spectral_kernel} and its spectral density; $c_{k_\rho}=-\varepsilon\log k_\rho$ is the induced spectral cost \\
$\Gamma$ & set of $\varepsilon$ for which the geodesic Gaussian kernel is degenerate (\propref{prop:gaussian_kernels_almost_nondegenerate}) \\
\midrule
\multicolumn{2}{@{}l}{\textbf{Training}} \\
$f_\theta$, $v_\theta$ & one-step generator $\Mfld\to\Mfld$ and the velocity network $\Mfld\to\mathbb{R}^D$, with parameters $\theta$ \\
$z$ & latent sample, $z\sim q_0$ \\
$\eta$ & step size of the update $\Exp{x}(\eta V(x))$ \\
$\gL$ & training loss \eqref{eq:manifold-loss} \\
$\stopgrad$, $\mathrm{sg}$ & stop-gradient operator \\
$x_i$, $x'_l$, $y_j$ & generated samples, a second independent batch of generated samples, and data samples \\
$\hat q_\theta$, $\hat q'_\theta$, $\hat p$ & empirical measures of the three batches, with weights $a_i$, $a'_l$, $b_j$ \\
\midrule
\multicolumn{2}{@{}l}{\textbf{\appref{app:other-energy-functionals}}} \\
$\gH_k$ & reproducing kernel Hilbert space of a positive definite kernel $k$ \\
$m_\mu$ & kernel mean embedding of a measure $\mu$ \\
$\tilde p_k$ & kernel-smoothed density of $p$ \\
\end{longtable}
\normalsize

\section{Proofs}
\label{app:proofs}

Unless otherwise stated, we assume $\Mfld$ to be a compact, connected Riemannian manifold, $c$ to be symmetric and Lipschitz continuous, and set $k:=e^{-c/\varepsilon}$.

\subsection{Proof of Theorem~\ref{thm:main}}
\label{app:proof-main}
Here we derive the main formula following well-established results. The general cost function and the manifold setting barely change the computation from the Euclidean case.

\thmmain*

\begin{proof}
    Start by considering the dual problem of the entropy-regularized OT \citep{10.1561/2200000073},
    \begin{equation}
        \OT_{\varepsilon, c}(q,p)\;=\;\max_{u,v\in C(\Mfld)}\;
        \underbrace{\int u\dd q+\int v\dd p
        -\varepsilon\iint\Big(e^{(u\oplus v-c)/\varepsilon}-1\Big)\dd q\dd p}_{=:D(u,v;q,p)} .
        \label{eq:dual}
    \end{equation}
    where $(u \oplus v)(x, y) = u(x) + v(y)$, and the dual potentials $u$ and $v$ satisfy
    \begin{equation}
        u(x)=-\varepsilon\log\int_\Mfld e^{(v(y)-c(x,y))/\varepsilon}\dd p(y),
        \qquad
        v(y)=-\varepsilon\log\int_\Mfld e^{(u(x)-c(x,y))/\varepsilon}\dd q(x).
        \label{eq:softmin}
    \end{equation}
    Furthermore, the optimal transport plan solving \eqref{eq:eot} is given by
    $\pi = \exp \big(
    \frac1\varepsilon (u\oplus v-c)\big)\cdot (q\otimes p)
    $
    and that $\int_\Mfld \exp \big(
    \frac1\varepsilon (u\oplus v-c)\big)\dd p(y)=1$
    for $p$-a.e. $x.$
    Then the envelope theorem lets us differentiate at the optimum holding
    $(u, v)$ fixed:
    \begin{align}
        \int\frac{\delta\OT_{\varepsilon, c}(\cdot, p)}{\delta q}\dd\chi
        &= \frac{\dd}{\dd s}\Big|_{0}D(u,v;q+s\chi,p)\\
        &=\int u\dd\chi-\varepsilon\int\Big[\int e^{(u\oplus v-c)/\varepsilon}\dd p(y)-1\Big]\dd\chi(x)\\
        &=\int u\dd\chi
    \end{align}
    for a signed measure $\chi$ with $\chi(\Mfld) = 0.$
    Thus, the first variation of the Sinkhorn divergence is
    \begin{equation}
        \frac{\delta \Sink_{\varepsilon, c}(\cdot, p)}{\delta q}
        = u - w,
    \end{equation}
    where $w$ is the symmetric dual potential of $\OT_{\varepsilon, c}.$
    Now taking the Riemannian gradient, we have
    \begin{align}
        \gradg u(x)
        &=-\varepsilon\cdot
        \frac{\displaystyle\int e^{(v(y)-c(x,y))/\varepsilon}\Big(-\tfrac1\varepsilon\gradg^{(1)}c(x,y)\Big)\dd p(y)}
             {\displaystyle\int e^{(v(y)-c(x,y))/\varepsilon}\dd p(y)}\\
        &=\frac{\displaystyle\int e^{(v(y)-c(x,y))/\varepsilon}\,\gradg^{(1)}c(x,y)\dd p(y)}
             {\displaystyle\int e^{(v(y)-c(x,y))/\varepsilon}\dd p(y)}
        = \int \gradg^{(1)} c(x, y) \pi (\dd y \mid x)
    \end{align}
    where the third equality comes from multiplying the numerator and denominator by \(e^{u(x)/\varepsilon}\)
    as
    the denominator becomes 1, whereas the numerator
    becomes an integral against \(e^{(u(x)+v(y)-c(x, y))/\varepsilon}\dd p(y)
    =\pi(\dd y\,|\, \dd x)\). Hence the final result,
    \begin{equation}
        \gradg \frac{\delta \Sink_{\varepsilon, c}(\cdot, p)}{\delta q}
        = \gradg u - \gradg w
        =\int \gradg^{(1)}c(x,y)\;\pi^{qp}(\dd y\mid x)
        -
        \int \gradg^{(1)}c(x,z)\;\pi^{q q}(\dd z\mid x).
    \end{equation}
\end{proof}

\subsection{Proof of Theorem \ref{thm:identifiable}}

We prove the characterization of identifiable
velocity fields under mild conditions.

\thmidentifiable*

We first prove the following useful lemma.

\begin{lemma}[Zero velocity is a kernel equation]
\label{lem:kernel_eq}
For $p,q\in\mathcal{P}(\Mfld)$,
\[
V^{\varepsilon,c}_{q,p}=0 \ \text{ on } \Mfld
\iff
k*\big(e^{w/\varepsilon}(p-q)\big)=0 \ \text{on } \Mfld .
\]
\end{lemma}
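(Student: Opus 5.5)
The plan is to translate $V^{\varepsilon,c}_{q,p}=0$ into a statement about the dual potentials and then read the kernel equation off the Schr\"odinger (softmin) system \eqref{eq:softmin}. By the proof of \thmref{thm:main},
\[
V^{\varepsilon,c}_{q,p}=-\gradg(u-w).
\]
Here $(u,v)$ are the dual potentials of $\OT_{\varepsilon,c}(q,p)$ and $w$ is the symmetric potential of $\OT_{\varepsilon,c}(q,q)$. All three are taken in their canonical extension to all of $\Mfld$ through the softmin formulas, so each is defined everywhere. Since $c$ is Lipschitz, the softmin formulas show $u,v,w$ are Lipschitz, with the same constant as $c$. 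Because $\Mfld$ is connected, $\gradg(u-w)\equiv0$ is equivalent to $u=w+C$ for some constant $C$.

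To do this honestly I would use the differentiability of $c$ in the first argument assumed in \thmref{thm:main}. Differentiating under the integral in \eqref{eq:softmin} shows $u$ and $w$ are $C^1$, so a vanishing gradient on a connected manifold forces a constant. This regularity is the one technical point I expect to need care: if only Lipschitz regularity is available, I would argue instead with a.e.\ gradients along absolutely continuous curves joining any two points.

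($\Rightarrow$) Assume $u=w+C$. Plugging this into the second softmin equation gives, for every $y\in\Mfld$,
\[
v(y)=-\varepsilon\log\int e^{(w(x)+C-c(x,y))/\varepsilon}\,\dd q(x)=w(y)-C.
\]
The last equality uses the symmetric equation $w=-\varepsilon\log\int e^{(w-c)/\varepsilon}\dd q$ for $w$. Substituting $v=w-C$ into the first softmin equation and using $u=w+C$, the constants cancel:
\[
e^{-w(x)/\varepsilon}=\int k(x,y)\,e^{w(y)/\varepsilon}\,\dd p(y)=k*\big(e^{w/\varepsilon}p\big)(x).
\]
The symmetric equation itself says $e^{-w/\varepsilon}=k*\big(e^{w/\varepsilon}q\big)$. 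Subtracting the two identities gives $k*\big(e^{w/\varepsilon}(p-q)\big)=0$ on $\Mfld$.

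($\Leftarrow$) Assume $k*\big(e^{w/\varepsilon}p\big)=k*\big(e^{w/\varepsilon}q\big)=e^{-w/\varepsilon}$. Then the pair $(w,w)$ satisfies both softmin equations for the problem $(q,p)$. The $v$-equation is the symmetric equation for $w$ against $q$, and the $u$-equation is exactly the hypothesis. I would then invoke uniqueness of Schr\"odinger potentials for entropic OT: the solution of \eqref{eq:softmin} is unique up to the shift $(u+C,v-C)$, by strict concavity of the dual \eqref{eq:dual} on the supports, after which the softmin formulas determine the extensions everywhere. Hence $u=w+C$ on all of $\Mfld$, and so $V^{\varepsilon,c}_{q,p}=-\gradg(u-w)=0$.

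The main obstacle is the bookkeeping about extensions. Uniqueness of the potentials holds only $q$- and $p$-a.e., so every identity must be stated for the softmin-extended potentials in order to hold pointwise on all of $\Mfld$. I would handle this by always defining $u,v,w$ through \eqref{eq:softmin}, which makes each identity above automatically hold everywhere.
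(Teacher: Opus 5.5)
Your proposal is correct and follows essentially the same route as the paper. Connectedness reduces $V^{\varepsilon,c}_{q,p}=0$ to $u-w$ being constant. The forward direction substitutes this into the softmin system to get $v=w$ (modulo the constant) and subtracts the $p$- and $q$-equations for $e^{-w/\varepsilon}$. The reverse direction observes that $(w,w)$ solves the Schr\"odinger system for $(q,p)$ and invokes uniqueness of the potentials up to a shift; the paper cites Prop.~11 of \citet{feydy2019interpolating} for this.

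The differences are only presentational. You carry the constant $C$ explicitly instead of shifting $(u,v)\mapsto(u+K,v-K)$. You also spell out the regularity and softmin-extension bookkeeping, and you use the symmetry of $c$ in the step $v=w-C$ without stating it; the paper makes that symmetry step explicit but leaves the regularity and extension points tacit.
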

\begin{proof}
    By the proof of \thmref{thm:main}, $\frac{\delta \Sink (\cdot, p)}{\delta q} = u-w$, where $u$ and $w$ are the first-argument dual potentials of $\OT_{\varepsilon,c}(q,p)$ and $\OT_{\varepsilon,c}(q,q)$, respectively, and
    \begin{equation}
        V^{\varepsilon, c}_{q, p} = -\gradg \frac{\delta \Sink (\cdot, p)}{\delta q} = \gradg (w - u)=0
    \end{equation}
    if and only if $w - u = K$ for some constant $K\in\mathbb{R}$ since $\Mfld$ is connected. 
    
    \medskip
    ($\Rightarrow$) As the dual potentials are unique up to constants \citep{feydy2019interpolating} we can consider the shifted potentials
    $(u+K, v-K)$ and assume $w-u=0.$ The dual potentials satisfy (\eqref{eq:softmin})
    \begin{align}
        e^{-u(x)/\varepsilon} &=  \int e^{(v(y)-c(x, y))/\varepsilon}\dd p(y)\\
        &=\int e^{-c(x, y)/\varepsilon}e^{v(y)/\varepsilon}\dd p(y) = (k * (e^{v/\varepsilon} p))(x).
    \end{align}
    Analogously, since $c(x, y)$,
    and hence $k(x, y)$, is symmetric,
    \begin{align}
        e^{-v(y)/\varepsilon}
        &=\int e^{(u(x) - c(x, y))/\varepsilon}\dd q(x)
        =\int k(x, y) e^{u(x)/\varepsilon}\dd q(x)\\
        &=\int k(y, x) e^{u(x)/\varepsilon}\dd q(x)
        = (k * (e^{u/\varepsilon}q))(y)\\
        e^{-w(x)/\varepsilon} &= (k * (e^{w/\varepsilon} q))(x)
    \end{align}
    Furthermore, $u=w$ implies
    \begin{equation}
        \label{eq:kernel_null}
        e^{-u/\varepsilon} - e^{-w/\varepsilon} = k * (e^{v/\varepsilon} p) - k * (e^{w/\varepsilon} q) 
        = k * (e^{v/\varepsilon} p - e^{w/\varepsilon} q) = 0,
    \end{equation}
    as $*$ is a linear operation. Moreover, we have 
    \begin{align}
        e^{-v(y)/\varepsilon} &= (k * (e^{u/\varepsilon}q))(y)
        = (k * (e^{w/\varepsilon}q))(y)\\
        &= \int k(y, x) e^{w(x)/\varepsilon}\dd q(x)
        = e^{-w(y)/\varepsilon}\quad
    \end{align}
    Hence, we have $v = w$ as well. This makes \eqref{eq:kernel_null}
    \begin{equation}
        k * (e^{w/\varepsilon}(p-q)) = 0.
    \end{equation}

    \medskip
    ($\Leftarrow$) Suppose $k*\big(e^{w/\varepsilon}(p-q)\big)=0.$ The dual $w$ satisfies
    \begin{equation}
        e^{-w/\varepsilon}=k*(e^{w/\varepsilon}q)=k*(e^{w/\varepsilon}p).
    \end{equation}
    By Prop. 11 of \citet{feydy2019interpolating}, $(w, w)$ solves the $\OT_{\varepsilon, c}(q, p)$
    and $w = u +K$ for some constant $K\in\mathbb{R}.$
\end{proof}

\begin{proof}[Proof of Theorem \ref{thm:identifiable}]
    ($\Leftarrow$) Suppose $k$ is nondegenerate. By \lemref{lem:kernel_eq} and the fact that $k$ is nondegenerate, 
    \begin{equation}
        k*\big(e^{w/\varepsilon}(p-q)\big)=0 \Rightarrow e^{w/\varepsilon}(p-q)=0\Rightarrow p = q
    \end{equation}
    as $e^{w/\varepsilon} > 0.$
    
    \medskip
    ($\Rightarrow$) To prove the contrapositive, suppose $k$ is degenerate. There exists
    a finite signed measure $\sigma\neq 0$ such that $k*\sigma=0$. Let $q = (|\sigma| +\mathrm{vol}) / Z\in\Prob(\Mfld)$ where
    $Z = \mathrm{vol}(\Mfld) + |\sigma| (\Mfld).$ Then define $p:= q + te^{-w/\varepsilon}\sigma$ and notice
    \begin{align}
        \int e^{-w/\varepsilon}d\sigma &= \int k * (e ^{w/\varepsilon}q)\dd\sigma
        =\iint k(x,y)e^{w(y)/\varepsilon}\dd q(y) \dd\sigma (x)\\
        &
        = \iint k(x,y)\dd \sigma(x) e^{w(y)/\varepsilon}\dd q(y)
        = \int (k * \sigma) e^{w/\varepsilon}\dd q(y) = 0.
    \end{align}
    Since $e^{-w/\varepsilon}$ is continuous on a compact $\Mfld,$ it is bounded, so
    we can choose small $t > 0$ such that $t\sup e^{-w/\varepsilon} < \frac1Z$ and $p = q + t e^{-w/\varepsilon}\sigma
    =\frac1Z|\sigma| + \frac1Z\mathrm{vol} + te^{-w/\varepsilon}\sigma>\frac{1}{Z}\mathrm{vol}.$ %
    Hence, we have constructed $p, q\in \Prob(\Mfld)$ such that $k * \big(e^{w/\varepsilon}(p - q)\big)=0.$
    By \lemref{lem:kernel_eq}, $V_{q, p}^{\varepsilon, c}=0$ but $q\neq p$, so $V_{q, p}^{\varepsilon, c}$ is not identifiable.
\end{proof}

\subsection{Proof of Lemma~\ref{lem:spectral}}

\lemspectral*

\begin{proof}
    ($\Rightarrow$) Suppose $\hat k_j = 0$ for some $j.$ Then, 
    \begin{equation}
        k * (\phi_j\volg )=\int k(x, y) \phi_j(y)\dd \volg(y)=\hat k_j \phi_j = 0.
    \end{equation}
    $k$ is degenerate.

    \medskip
    ($\Leftarrow$) Suppose $\hat k_j \neq 0$ for all $j.$ Let $\sigma\in \Meas_b(\Mfld)$ be a finite signed measure such that $k * \sigma = 0.$ By Fubini and the symmetry of $k,$
    \begin{align}
        0&=\int (k * \sigma) \phi_j \dd \volg= \iint k(x, y)\dd \sigma(x) \phi_j(y)\dd \volg(y)\\
        &= \iint k(y, x) \phi_j(y)\dd\volg(y)\dd\sigma(x)=\int \hat k_j \phi_j\dd\sigma = \hat k_j \int \phi_j\dd\sigma
    \end{align}
    Thus $\int \phi_j\dd\sigma=0$ for every eigenfunction $\phi_j.$ The
    span of eigenfunctions is dense in $C^\infty(\Mfld)$ \citep{berger1971spectre}, which is
    dense in $C(\Mfld)$. Now Hahn-Banach density theorem and Riesz Representation theorem implies $\sigma =0,$ so $k$ must be nondegenerate.
\end{proof}

\subsection{Proof of Proposition~\ref{prop:gaussian_kernels_almost_nondegenerate}}
\label{app:proof-sd-td}

\propgaussnondeg*

We prove the above for $\mathbb{S}^d$
($d\ge 2$) and $\mathbb{T}^d$ ($d\ge 1$); the case
$\mathbb{S}^1$ will be covered by the second proposition.

\begin{proposition}[Sphere]
\label{prop:sphere}
Let $d\ge2$ and $k_\varepsilon = e^{-d_g^2/2\varepsilon}$ on $\mathbb{S}^d$. 
The set $\Gamma$ of $\varepsilon>0$ for which $k_\varepsilon$ is degenerate is at most countable.
\end{proposition}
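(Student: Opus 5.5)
The approach is to combine \lemref{lem:spectral} with the Funk--Hecke theorem. Since $k_\varepsilon(x,y)=f_\varepsilon(\langle x,y\rangle)$ with $f_\varepsilon(t)=\exp\bigl(-\arccos(t)^2/2\varepsilon\bigr)$ is a continuous zonal kernel, every spherical harmonic $Y_n$ of degree $n$ is an eigenfunction. The eigenvalue is
\begin{equation}
\hat k_n(\varepsilon)=|\mathbb{S}^{d-1}|\int_{-1}^{1} f_\varepsilon(t)\,\frac{P_n^{(d)}(t)}{P_n^{(d)}(1)}\,(1-t^2)^{\frac{d-3}{2}}\,\dd t
=|\mathbb{S}^{d-1}|\int_0^\pi e^{-\theta^2/2\varepsilon}\,G_n(\cos\theta)\,\sin^{d-1}\theta\,\dd\theta,
\end{equation}
where $G_n$ is the normalized Gegenbauer polynomial with $\lambda=(d-1)/2$. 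Spherical harmonics form an orthonormal basis of $L^2(\mathbb{S}^d)$, so by \lemref{lem:spectral} we have $\Gamma=\bigcup_{n\ge0}\{\varepsilon>0:\hat k_n(\varepsilon)=0\}$. It therefore suffices to show that each zero set $Z_n$ is countable.

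For the per-mode zero sets, substitute $s=1/(2\varepsilon)\in(0,\infty)$ and set $F_n(s)=\int_0^\pi e^{-s\theta^2}h_n(\theta)\,\dd\theta$, where $h_n(\theta)=G_n(\cos\theta)\sin^{d-1}\theta$ is continuous and bounded on $[0,\pi]$. Because the domain of integration is compact, $F_n$ extends to an entire function of $s\in\mathbb{C}$: differentiation under the integral sign is justified by the dominating bound $|e^{-s\theta^2}|\le e^{|s|\pi^2}$. An entire function that is not identically zero has isolated zeros, so its zeros in $(0,\infty)$ form a countable set (finite on each compact subinterval). The map $s\mapsto 1/(2s)$ is a bijection, so $Z_n$ is countable, and hence $\Gamma$ is a countable union of countable sets, which is countable.

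The main step is showing $F_n\not\equiv0$. I would first try expanding in powers of $s$. We have $F_n^{(m)}(0)=(-1)^m\int_0^\pi\theta^{2m}h_n(\theta)\,\dd\theta$, so $F_n\equiv0$ would force every even moment $\int_0^\pi \theta^{2m}h_n\,\dd\theta$ to vanish. By Müntz--Szász in the variable $u=\theta^2$ on $[0,\pi^2]$, polynomials in $u$ are dense in $C[0,\pi^2]$. Writing $\theta^{2m}h_n\,\dd\theta=u^m h_n(\sqrt u)\,\dd u/(2\sqrt u)$, vanishing moments would imply that $h_n(\sqrt u)/\sqrt u$ vanishes as a measure. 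That forces $h_n\equiv0$ on $(0,\pi)$, which is false since $G_n(1)=1$ and $\sin\theta>0$ there. Alternatively, large-$s$ Laplace asymptotics give the same conclusion directly: $F_n(s)\sim c\, s^{-d/2}$ with $c=\tfrac12\Gamma(d/2)G_n(1)\neq0$, coming from the contribution near $\theta=0$ where $h_n(\theta)\sim\theta^{d-1}$. This shows $F_n\not\equiv0$ with less machinery, and it is the version I would write down.

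The only genuinely delicate points are the validity of Funk--Hecke for $d\ge2$, which is standard for continuous zonal kernels, and the care needed at the endpoint $\theta=\pi$, where $d_g$ is not smooth. Neither matters, because only continuity of the integrand is used.
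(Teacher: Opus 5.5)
Your proof is correct and follows the paper's argument essentially step for step: Funk--Hecke and \lemref{lem:spectral} reduce degeneracy to the vanishing of some per-degree eigenvalue; each eigenvalue is entire in $\beta=1/(2\varepsilon)$; and the identity theorem plus a countable union finishes. The only difference is the non-vanishing step. The paper invokes Lerch's theorem for the Laplace transform of $g_\ell(\sqrt{u})/(2\sqrt{u})\,\mathbf{1}_{[0,\pi^2]}(u)$, which is exactly your moment argument and needs only Weierstrass, not M\"untz--Sz\'asz. Your preferred asymptotic $F_n(s)\sim\tfrac12\Gamma(d/2)\,s^{-d/2}$ as $s\to\infty$ is a slightly more elementary, self-contained substitute.

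One harmless slip: in the $t$-form of Funk--Hecke on $\mathbb{S}^d$ the weight should be $(1-t^2)^{(d-2)/2}$, not $(1-t^2)^{(d-3)/2}$. Your $\theta$-form with $\sin^{d-1}\theta$ is correct, and it is the only form you actually use.
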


\begin{proof}
    Set $\beta := 1/(2\varepsilon)$ and $k_\beta := e^{-\beta d_g^2}$. 

    Since $d_g(x,y)=\arccos\langle x,y\rangle$, the Funk-Hecke formula \citep{andrews1999special} gives, for every 
    spherical harmonic $Y_\ell$ of degree $\ell$ and with $t=\cos\theta$,
    \begin{align}
        \label{eq:funk-hecke}
        k_\beta * (Y_\ell \volg) = \gamma_\ell(\beta)\, &Y_\ell, \qquad
        \gamma_\ell(\beta) = |\mathbb{S}^{d-1}| \int_0^{\pi} e^{-\beta\theta^2}\, g_\ell(\theta)\,\dd\theta,\\
        g_\ell(\theta) :&= P_{\ell,d+1}(\cos\theta)\,\sin^{d-1}\theta,
    \end{align}
    where $P_{\ell, d+1}$ is the Legendre polynomial of degree $\ell$ in dimension $d+1$.
    Spherical harmonics are continuous, 
    so by~\lemref{lem:spectral}, $k_\beta$ is nondegenerate if and only if $\gamma_\ell(\beta)\neq0$ for all $\ell\ge0$.

    Since the integrand is entire in $\beta$
    and jointly continuous on $\mathbb{C}\times [0, \pi]$,
    $\gamma_\ell$ is entire.

    Notice that $\gamma_\ell$ is the Laplace-transform of the function
    $G_\ell(s):=\frac{g_\ell(\sqrt s)}{2\sqrt s}1_{[0, \pi^2]}(s)$ with $s=\theta^2.$
    By Lerch's theorem, $\gamma_\ell\not\equiv 0$ since $G_\ell(s)$ is not an identically
    zero function.

    Since $\gamma_\ell\not\equiv 0$, by the identity theorem, $Z_\ell:=\{\beta>0:\gamma_\ell(\beta)=0\}$ is 
    at most countable. By \eqref{eq:funk-hecke} and \lemref{lem:spectral}, $k_\beta$ is degenerate exactly for 
    $\beta\in\bigcup_{\ell\ge0}Z_\ell$, which is countable.
\end{proof}

\begin{proposition}[Torus]
\label{prop:torus}
Let $d\ge1$ and $k_\varepsilon = e^{-d_g^2/2\varepsilon}$ on $\mathbb{T}^d=(\mathbb{R}/2\pi\mathbb{Z})^d$. 
The set $\Gamma$ of $\varepsilon>0$ for which $k_\varepsilon$ is degenerate is countably infinite.
\end{proposition}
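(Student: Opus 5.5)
The kernel $k_\varepsilon(x,y)=e^{-d_g(x-y)^2/2\varepsilon}$ is translation invariant on $\mathbb{T}^d=(\mathbb{R}/2\pi\mathbb{Z})^d$, so Fourier modes diagonalize it and \lemref{lem:spectral} reduces nondegeneracy to nonvanishing of Fourier coefficients. With the flat metric, $d_g(x,y)^2=\sum_{i=1}^d \delta(x_i-y_i)^2$, where $\delta(s)\in[0,\pi]$ is the circular distance. Hence the kernel factorizes as $k_\varepsilon(x,y)=\prod_i h_\varepsilon(x_i-y_i)$ with $h_\varepsilon(s)=e^{-\delta(s)^2/2\varepsilon}$. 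For $n\in\mathbb{Z}^d$, the modes $e^{i\langle n,x\rangle}$, or equivalently their real cos/sin versions, form a Laplace--Beltrami eigenbasis. They satisfy
\[
k_\varepsilon*(e^{i\langle n,\cdot\rangle}\volg)=\hat k_n(\varepsilon)\,e^{i\langle n,\cdot\rangle},\qquad \hat k_n(\varepsilon)=\prod_{i=1}^d a_{n_i}(\varepsilon),
\]
where
\[
a_m(\varepsilon)=\int_{-\pi}^{\pi}e^{-\theta^2/2\varepsilon}\cos(m\theta)\dd\theta = 2\int_0^\pi e^{-\theta^2/2\varepsilon}\cos(m\theta)\dd\theta
\]
(the sine part vanishes because $h_\varepsilon$ is even). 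By \lemref{lem:spectral}, $k_\varepsilon$ is degenerate if and only if $a_m(\varepsilon)=0$ for some $m\ge 0$. Therefore $\Gamma=\bigcup_{m\ge0}\{\varepsilon>0: a_m(\varepsilon)=0\}$, independently of $d$, and it suffices to treat $d=1$.

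\emph{Countability.} I will follow the sphere argument. Set $\beta=1/(2\varepsilon)$. Then $\gamma_m(\beta)=2\int_0^\pi e^{-\beta\theta^2}\cos(m\theta)\dd\theta$ is entire in $\beta$. It is the Laplace transform of $s\mapsto \cos(m\sqrt s)\mathbf 1_{[0,\pi^2]}(s)/\sqrt s$, which is not identically zero, so $\gamma_m\not\equiv0$ by Lerch's theorem. Its positive zeros are therefore isolated, so each zero set is countable, and so is the union.

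\emph{Infinitude.} This is the main step. I would show that for each $m\ge1$ the function $a_m$ has at least one zero on $(0,\infty)$, and that infinitely many distinct zeros arise. Consider the limits:
\begin{itemize}
\item As $\varepsilon\to0^+$, $a_m(\varepsilon)\sim\sqrt{2\pi\varepsilon}\,e^{-m^2\varepsilon/2}>0$, since the Gaussian concentrates at $0$ and the boundary error is $O(e^{-\pi^2/2\varepsilon})$.
\item As $\varepsilon\to\infty$, $a_m(\varepsilon)\to\int_{-\pi}^{\pi}\cos(m\theta)\dd\theta=0$, so the sign must be taken from the next order. Expanding $e^{-\theta^2/2\varepsilon}=1-\theta^2/2\varepsilon+O(\varepsilon^{-2})$ gives $a_m(\varepsilon)=-\frac{1}{2\varepsilon}\int_{-\pi}^{\pi}\theta^2\cos(m\theta)\dd\theta+O(\varepsilon^{-2})$. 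Integrating by parts, $\int_{-\pi}^{\pi}\theta^2\cos(m\theta)\dd\theta=4\pi(-1)^m/m^2$, so $a_m(\varepsilon)\approx-2\pi(-1)^m/(m^2\varepsilon)$. This is negative for even $m$.
\end{itemize}
By the intermediate value theorem, every even $m\ge2$ then yields some $\varepsilon_m\in\Gamma$.

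It remains to check that these zeros are not all the same point. Pick a window $[\varepsilon_0,\infty)$ on which the above asymptotics hold; there $a_m<0$ for all even $m$. On any fixed compact $[\eta,\varepsilon_0]$, the Riemann--Lebesgue lemma gives $a_m(\varepsilon)\to0$ uniformly as $m\to\infty$. For a cleaner bound I would integrate by parts once: $a_m(\varepsilon)=\frac{2}{m}e^{-\pi^2/2\varepsilon}\sin(m\pi)+\frac{2}{m\varepsilon}\int_0^\pi\theta e^{-\theta^2/2\varepsilon}\sin(m\theta)\dd\theta$, and the first term vanishes. A second integration by parts then yields the leading term $\frac{2\pi(-1)^{m+1}}{m^2\varepsilon}e^{-\pi^2/2\varepsilon}$ plus $O(m^{-3})$, uniformly on compacts in $\varepsilon$. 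For even $m$ this leading term is negative for every fixed $\varepsilon$, while $a_m(\varepsilon)>0$ for $\varepsilon<c/m^2$ by the small-$\varepsilon$ regime, made uniform via an explicit error bound. Hence the sign change of $a_m$ (for $m$ even) occurs at some $\varepsilon_m\lesssim 1/m^2$, so $\varepsilon_m\to0$. This forces infinitely many distinct points in $\Gamma$.

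The delicate part is making these two regimes uniform in $m$, namely positivity for $\varepsilon\le c/m^2$ and negativity for $\varepsilon\ge C/m^2$. If that proves messy, a fallback is to write $a_m(\varepsilon)=\sqrt{2\pi\varepsilon}\,e^{-m^2\varepsilon/2}-2\int_\pi^\infty e^{-\theta^2/2\varepsilon}\cos(m\theta)\dd\theta$ and compare the two terms directly at $\varepsilon=\kappa/m^2$ for suitable $\kappa$.
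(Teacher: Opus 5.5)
Your reduction to the one-dimensional coefficients $a_m(\varepsilon)$ is sound, and so is the countability argument (entire in $\beta$, not identically zero, hence isolated zeros). The sign change of $a_m$ for each fixed even $m$ is also correct. The distinctness step, however, rests on a false quantitative claim. The zero of $a_m$ is not at scale $1/m^2$, and the statement you single out as the delicate part, that $a_m<0$ for $\varepsilon\ge C/m^2$ uniformly in $m$, is false.

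To see this, take your own fallback decomposition $a_m(\varepsilon)=\sqrt{2\pi\varepsilon}\,e^{-m^2\varepsilon/2}-2\int_\pi^\infty e^{-\theta^2/2\varepsilon}\cos(m\theta)\dd\theta$ at $\varepsilon=\kappa/m^2$. The first term equals $\sqrt{2\pi\kappa}\,e^{-\kappa/2}/m$. The tail is at most $\tfrac{2\varepsilon}{\pi}e^{-\pi^2/2\varepsilon}=O\bigl(e^{-\pi^2m^2/2\kappa}\bigr)$. So $a_m(\kappa/m^2)>0$ for all large $m$, whatever $\kappa$ is, and a comparison at that scale can only ever return positivity. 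The exponents $m^2\varepsilon/2$ and $\pi^2/2\varepsilon$ balance only at $\varepsilon\approx\pi/m$; in fact $\varepsilon_m=\tfrac{\pi}{m}\bigl(1+O(\tfrac{\log m}{m})\bigr)$.

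Likewise, your large-$\varepsilon$ Taylor expansion does not give a window $[\varepsilon_0,\infty)$ valid for all even $m$. Its remainder is bounded by $\pi^5/(20\varepsilon^2)$ uniformly, but the main term is only $2\pi/(m^2\varepsilon)$. That yields negativity only for $\varepsilon\gtrsim m^2$.

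You do not need any uniformity in $m$. Your second integration by parts already gives, for even $m$, $a_m(\varepsilon)=-\tfrac{2\pi}{m^2\varepsilon}e^{-\pi^2/2\varepsilon}+O\bigl(\tfrac{1}{m^3\varepsilon}\bigr)$. The remainder is even uniform in $\varepsilon>0$, because $(1-\theta^2/\varepsilon)e^{-\theta^2/2\varepsilon}$ has total variation on $[0,\pi]$ bounded independently of $\varepsilon$. Fix any $\eta>0$. Then $a_m(\eta)<0$ for all large even $m$, while at each such fixed $m$ you have $a_m(\varepsilon)>0$ for small $\varepsilon$. The IVT puts a zero in $(0,\eta)$. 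Hence $\Gamma\cap(0,\eta)\neq\emptyset$ for every $\eta>0$, and since $\Gamma\subset(0,\infty)$, $\Gamma$ is infinite. With this repair your proof is a valid, more elementary alternative to the paper's.

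The paper instead completes the square, with $\beta=1/2\varepsilon$, to get $a_m=\sqrt{\pi/\beta}\,e^{-m^2/4\beta}\,\Real\,\mathrm{erf}\bigl(\pi\sqrt\beta+\tfrac{im}{2\sqrt\beta}\bigr)$. Differentiating in $\beta$ shows $\Real\,\mathrm{erf}\bigl(\pi\sqrt\beta+\tfrac{im}{2\sqrt\beta}\bigr)=1-(-1)^mJ_m(\beta)$. Here $J_m>0$ is strictly decreasing in $\beta$, diverges as $\beta\to0^+$, and is strictly increasing in $m$. This gives no zeros for odd $m$, exactly one zero for each even $m$, and roots strictly ordered in $m$. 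So countability comes without analyticity or Lerch's theorem, distinctness is immediate, and your window claim is recovered a posteriori. Your route avoids complex error functions, but it only gives at least one zero per even $m$. It therefore needs the separate countability argument and the accumulation-at-$0$ argument.
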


\begin{proof}
    Set $\beta:=1/(2\varepsilon)$ and $k_\beta:=e^{-\beta d_g^2}$ as before.

    Here $d_g^2(x,y)=\sum_{i=1}^d|x_i-y_i|_{\mathrm{w}}^2$, with $|\cdot|_{\mathrm{w}}$ the difference 
    wrapped to $[-\pi,\pi]$, so $k_\beta(x,y)=\prod_i e^{-\beta|x_i-y_i|_{\mathrm{w}}^2}$. For $n\in\mathbb{Z}^d$, 
    Fubini's theorem and the substitution $y_i=x_i-\theta_i$ (after centering $x-\pi \le y\le x+\pi$) give
    \begin{align}
        \int_{\mathbb{T}^d}k_\beta(x,y)\,e^{in\cdot y}\,\dd y
        &= e^{in\cdot x}\prod_{i=1}^d\int_{-\pi}^{\pi}e^{-\beta\theta^2}e^{-in_i\theta}\,\dd\theta
        = e^{in\cdot x}\prod_{i=1}^d h_{n_i}(\beta),\\
        h_m(\beta):&=\int_{-\pi}^{\pi}e^{-\beta\theta^2}e^{im\theta}\,\dd\theta.
    \end{align}
    Taking real and imaginary parts, $\cos(n\cdot x)$ and $\sin(n\cdot x)$ are 
    eigenfunctions of $k_\beta*$ with eigenvalue $\prod_i h_{n_i}(\beta)$. These functions are continuous. Since $h_0>0$ and $h_{-m}=h_m$, 
    Lemma~\ref{lem:spectral} shows that $k_\beta$ is nondegenerate if and only if $h_m(\beta)\neq0$ for all $m\ge1$.

    Substitute $s= \theta\sqrt \beta + \frac{mi}{2\sqrt \beta}$, and complete the square of the integrand exponent,
    \begin{equation}
        h_m(\beta)=\frac{e^{-\frac{m^2}{4\beta}}}{\sqrt\beta}\int_{-\bar z}^z e^{-s^2} \dd s= \frac12\sqrt \frac{\pi}{\beta} e^{-\frac{m^2}{4\beta}} \big(\erf(z) - \erf(-\bar z)\big) = \sqrt\frac{\pi}{\beta} e^{-\frac{m^2}{4\beta}} \Real \erf (z),
    \end{equation}
    where $z = \pi\sqrt\beta + \frac{mi}{2\sqrt\beta}.$ Let $H_m(\beta):=\Real \erf (z(\beta))$, and since $z^2=\pi^2\beta-\frac{m^2}{4\beta}+i\pi m$,
    \begin{equation}
        H_m'(\beta)=\sqrt{\pi\over\beta}(-1)^m e^{\frac{m^2}{4\beta}-\pi^2\beta}.
    \end{equation}
    Therefore, $H_m(\beta)$ is a monotone function and $H_m(\beta)\to1$ as $\beta\to\infty$,
    \begin{equation}
        H_m(\beta)=1 - (-1)^m\int^{\infty}_{\beta} \sqrt{\pi\over s}e^{m^2/4s - \pi^2s}\dd s =: 1 - (-1)^m J_m(\beta)
    \end{equation}
    When $m$ is odd, $H_m(\beta) > 1$. When $m$ is even, $H_m$ increases strictly from 
    $-\infty$ to $1$ due to $J_m(\beta)\to\infty$
    as $\beta \to0^+.$ Hence, $H_m(\beta)$ has exactly one zero for even $m$, which in turn leaves
    $h_m(\beta)$ with exactly one zero.
    
    This implies $|Z_m|:=|\{\beta>0:h_m(\beta)=0\}|=0 \text{ or }1$ depending on
    the parity of $m.$ Moreover, for different $m > m'$, $J_m(\beta) > J_{m'}(\beta)$, so each $h_m$
    has unique $\beta_m$ as their root.

    $k_\beta$ is degenerate exactly for $\beta\in\bigcup_{m\ge1}Z_m$, which is countably infinite set.
\end{proof}

\section{Other Energy Functionals}
\label{app:other-energy-functionals}

Although our analysis has focused on the Sinkhorn divergence,
other energy functionals can also serve as the guiding discrepancy field. 
In this section, we describe how the framework changes under alternative 
choices of energy functional and compare their resulting forms and
identifiability conditions. 
\tabref{tab:energy-functionals} summarizes the energy functionals, their velocity fields, and their identifiability conditions.

We begin by fixing the relevant definitions.

\begin{definition}[Reproducing Kernel Hilbert Space (RKHS)]
    Let $\Mfld$ be a compact Riemannian manifold.
    Let $k:\Mfld\times\Mfld\to\mathbb R$ be a symmetric positive definite kernel.
    By the Moore-Aronszajn theorem \citep{aronszajn1950theory}, there exists a
    unique Hilbert space $\mathcal H_k$ of functions on $\Mfld$, namely the completion of
    $\operatorname{span}\{k(x,\cdot) : x\in\Mfld\}$, such that $k(x,\cdot)\in\mathcal H_k$
    for all $x\in\Mfld$ and the reproducing property holds:
    \begin{equation}
        \langle f, k(x,\cdot)\rangle_{\mathcal H_k} = f(x)
        \qquad \forall f\in\mathcal H_k,\; x\in\Mfld.
    \end{equation}
\end{definition}

\begin{definition}[Universal kernel]
    \label{def:universality}
    Let $\Mfld$ be a compact Riemannian manifold. A continuous positive definite kernel $k$
    is \emph{universal} if $\mathcal H_k$ is dense in $\gC(\Mfld)$ with respect to
    $\|\cdot\|_\infty$ \citep{steinwart2001influence}.
\end{definition}

\begin{definition}[Characteristic kernel]
    \label{def:charcteristic}
    A measurable positive definite kernel $k$ is \emph{characteristic} if the
    kernel mean embedding
    \begin{equation}
        \mu \mapsto m_\mu := \int_{\Mfld} k(\cdot, y)\,\dd\mu(y)
    \end{equation}
    is injective on $\mathcal P(\Mfld)$, where
    $\int \sqrt{k(y,y)}\,\dd\mu(y) < \infty$ is assumed so that
    $m_\mu \in \mathcal H_k$ \citep{smola2007hilbert}.
\end{definition}

\begin{remark}
    The definition of characteristic kernel is similar to the definition of nondegeneracy, but contrary to characteristic
    kernels,
    nondegenerate kernels need not be positive definite. Moreover, characteristic kernels operate on
    $\Prob(\Mfld)$, whereas nondegeneracy is defined on $\Meas_b(\Mfld).$
\end{remark}

\subsection{Squared Maximum Mean Discrepancy (MMD)}

\begin{definition}
    The squared Maximum Mean Discrepancy (MMD) between two distributions $p$ and $q$ is given by
    $\tfrac12\,\mathrm{MMD}^2(q, p) = \frac12\|m_p - m_q\|_{\gH}^2= \frac12\iint k(x, y)\dd r(x) \dd r(y)$ for $r = p - q$ and $p, q\in \Prob(\Mfld).$
\end{definition}

\begin{proposition}
    The velocity field $V_{q, p}$ induced by $\tfrac12\,\mathrm{MMD}^2$ is identifiable if
    and only if $k$ is characteristic.
\end{proposition}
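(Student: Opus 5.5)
First compute the velocity field. The energy is $\gF(q)=\tfrac12\mathrm{MMD}^2(q,p)=\tfrac12\iint k\,\dd(p-q)\,\dd(p-q)$. Perturb $q$ by a mass-zero signed measure $\chi$ and use the symmetry of $k$. This gives the first variation
\[
\frac{\delta\gF}{\delta q}(q)(x)=-\int k(x,y)\,\dd(p-q)(y)=-(k*(p-q))(x)=(m_q-m_p)(x).
\]
We then have $V_{q,p}=-\gradg\frac{\delta\gF}{\delta q}=\gradg (m_p-m_q)$, where we assume (as in the Sinkhorn case) that $k$ is differentiable in its first argument. This mirrors the computation in the proof of \thmref{thm:main}, but there is no dual potential or envelope argument: the functional is a quadratic form.

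Next, reduce $V\equiv 0$ to a kernel equation, analogously to \lemref{lem:kernel_eq}. Because $\Mfld$ is connected, $V_{q,p}\equiv0$ holds if and only if $k*(p-q)\equiv K$ for some constant $K\in\mathbb R$.

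The main step is to show that $K$ is forced to be $0$. We would try the RKHS structure. The function $k*(p-q)=m_p-m_q$ lies in $\gH_k$. Integrate it against $p-q$. By Fubini and the reproducing property,
\[
\int (m_p-m_q)\,\dd(p-q)=\|m_p-m_q\|_{\gH_k}^2 .
\]
But the integral equals $\int K\,\dd(p-q)=K\,(1-1)=0$, since $p$ and $q$ are probability measures. Hence $\|m_p-m_q\|_{\gH_k}=0$, so $m_p=m_q$ and in particular $K=0$. This is where positive definiteness is essential, and it is the only nontrivial point. For Sinkhorn we had to work with $\Meas_b$ and nondegeneracy. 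Here the mass-zero structure of $p-q$ together with the Hilbert norm disposes of the constant. We therefore obtain
\[
V_{q,p}\equiv0 \iff m_p=m_q .
\]

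Both directions of the proposition follow. ($\Leftarrow$) If $k$ is characteristic, then $m_p=m_q$ implies $p=q$, so $V$ is identifiable. ($\Rightarrow$) If $k$ is not characteristic, there exist $p\neq q$ in $\Prob(\Mfld)$ with $m_p=m_q$; compactness and continuity give the integrability condition $\int\sqrt{k(y,y)}\,\dd\mu<\infty$ automatically. For this pair, $\frac{\delta\gF}{\delta q}\equiv0$, so $V_{q,p}\equiv0$ while $q\neq p$, and identifiability fails.

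The only care needed is in the regularity used to justify differentiating under the integral and applying Fubini. Both are immediate on a compact $\Mfld$ with $k$ continuous and $C^1$ in its first argument.
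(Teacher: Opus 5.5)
Your proposal is correct and follows the paper's own argument. Like the paper, you compute $\frac{\delta\gF}{\delta q}=m_q-m_p$ and use connectedness to reduce $V\equiv0$ to $m_q-m_p$ being a constant $K$. You then remove $K$ by integrating against the mass-zero measure $p-q$, which gives $\|m_p-m_q\|_{\gH_k}^2=0$, and both directions follow from the definition of a characteristic kernel. Your step $\int K\,\dd(p-q)=K(1-1)=0$ is a cleaner version of the paper's $\int m_r\,\dd r=m_r\int\dd r$.
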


\begin{proof}
    Let $\gF_{\mathrm{MMD}}(q) = \frac{1}{2}\mathrm{MMD}^2(q, p).$
    It can be shown \citep{han2026wflow} that
    \begin{equation}
        \frac{\delta \gF_{\mathrm{MMD}}}{\delta q}(q) = m_q - m_p
    \end{equation}
    Hence, $V_{q, p}$ has the form 
    \begin{equation}
        V_{q, p} = -\gradg (m_q - m_p) = \int\gradg k(x, y) \dd p(y) -\int \gradg k(x, y) \dd q(y). %
    \end{equation}

($\Leftarrow$) Suppose $k$ is characteristic. We have $V_{qp}\equiv 0 \Rightarrow m_q - m_p \equiv \text{const.}$
Letting $r = q - p,$
\begin{equation}
    \|m_q - m_p\|^2_\gH = \|m_r\|^2_\gH =  \iint k(x, y) \dd r(x) \dd r(y) = \int m_r \dd r(y) = m_r\int \dd r(y) = 0
\end{equation}
since $r$ has a zero mass on the space being the difference of the two probability measures. This gives
$m_p = m_q$ and as the mean embedding is injective, we must have $p=q.$

($\Rightarrow$) Suppose identifiability and $m_q=m_p$ for some $q,p\in\Prob(\Mfld).$ We further have
\begin{equation}
    V_{q,p}=-\gradg (m_q-m_p) \equiv 0 \;\Rightarrow\; q=p.
\end{equation}
This establishes $k$ is characteristic.

\end{proof}

\subsection{Smoothed KL Divergence}

This case is treated by \citet{esteban2026kernel}, but for the sake of completeness, we include
the details here.

\begin{definition}[Smoothing]
    Let $k$ be a positive normalized kernel such that $\int k(x, y)\dd\volg(x)=1.$ Denote the kernel-smoothed density by
    \begin{equation}
        \Tilde{p}(x) = \int k(x, y)\dd p(y).
    \end{equation}
\end{definition}

\begin{remark}
    As a function, 
    $\Tilde{p}$ is the same as the mean embedding of $p$ to RKHS of $k$. However, here
    we treat $\Tilde{p}(x)$ as a density of a probability measure with respect
    to the base measure $\dd\volg$, which is true since
    the kernel is normalized.
\end{remark}

In contrast with our framework, \citet{esteban2026kernel}
considers a Wasserstein Gradient Flow from $\Tilde q$. In particular, 
the Wasserstein Gradient Flow corresponding to their setting is
\begin{equation}
    \label{eq:kgd_wgf}
    \partial_t r_t = \divg \left(r_t \gradg\frac{\delta\gF^{KL}}{\delta r}(r_t)\right),\quad r_0:=\Tilde q
\end{equation}
where $\gF^{KL}(q):=\KL(q \parallel \Tilde p)$, i.e., their
guiding functional is the regular KL divergence between the smoothed densities $\Tilde q$
and $\Tilde p.$

\begin{proposition}
    Let $k$ be a normalized kernel such that $k > 0$.
    The velocity field $V_{\Tilde q,\Tilde p}$ induced by $\gF^{KL}$ is identifiable if and only if
    $k$ is characteristic.
\end{proposition}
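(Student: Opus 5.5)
Compute the velocity field first, then reduce identifiability to the mean-embedding question, mirroring the MMD argument above. The first variation of $\gF^{KL}(r)=\KL(r\,\|\,\Tilde p)$ is $\log r - \log\Tilde p + 1$, so along the flow in \eqref{eq:kgd_wgf} started at $r_0=\Tilde q$ the velocity is
\begin{equation*}
V_{\Tilde q,\Tilde p}(x) = -\gradg\big(\log\Tilde q(x)-\log\Tilde p(x)\big) = \frac{\int\gradg^{(1)} k(x,y)\dd p(y)}{\Tilde p(x)} - \frac{\int \gradg^{(1)}k(x,y)\dd q(y)}{\Tilde q(x)}.
\end{equation*}
This is well defined because $k>0$ is continuous on the compact $\Mfld$. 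Hence $\Tilde p,\Tilde q$ are continuous and bounded below by $\min k>0$, so both logarithms are finite and differentiable wherever $k$ is differentiable in its first argument. I assume this differentiability, as the paper does implicitly.

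For ($\Leftarrow$), suppose $k$ is characteristic and $V_{\Tilde q,\Tilde p}\equiv 0$. Since $\Mfld$ is connected, $\log\Tilde q-\log\Tilde p$ equals a constant $K$, so $\Tilde q = e^{K}\Tilde p$. Integrate both sides against $\volg$. Normalization and Fubini give $\int\Tilde q\dd\volg=\int\!\!\int k(x,y)\dd\volg(x)\dd q(y)=1$, and likewise for $\Tilde p$. Hence $e^K=1$ and $\Tilde q=\Tilde p$. As functions these are exactly the mean embeddings $m_q, m_p$, noting that normalization $\int k(x,y)\dd\volg(x)=1$ together with symmetry lets us identify $\Tilde p(x)=\int k(x,y)\dd p(y)$ with $m_p$. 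Injectivity of the mean embedding then gives $q=p$.

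For ($\Rightarrow$), suppose the field is identifiable and $m_q=m_p$ for some $q,p\in\Prob(\Mfld)$. Then $\Tilde q=\Tilde p$, so $\log\Tilde q-\log\Tilde p\equiv 0$ and $V_{\Tilde q,\Tilde p}\equiv0$. Identifiability forces $q=p$, which is exactly injectivity of $\mu\mapsto m_\mu$ on $\Prob(\Mfld)$. So $k$ is characteristic.

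The main obstacle is interpretive rather than technical. First, identifiability must be read as "$V\equiv 0$ implies $q=p$" for the underlying measures, not merely $\Tilde q=\Tilde p$. Second, the definition of characteristic presupposes positive definiteness, and the proposition only assumes $k>0$ normalized. I would state explicitly that $k$ is taken positive definite (and symmetric) so that \emph{characteristic} makes sense. Under that convention the argument needs no RKHS structure beyond the injectivity of $\mu\mapsto\int k(\cdot,y)\dd\mu(y)$.

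The only genuinely new step compared with the MMD case is pinning down the constant via mass normalization. In the MMD case, the zero total mass of $q-p$ played that role.
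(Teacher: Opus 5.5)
Your proposal is correct and follows the paper's proof. You write $V_{\tilde q,\tilde p}=\gradg\log(\tilde p/\tilde q)$, deduce $\tilde q=\tilde p$ from a vanishing field so that injectivity of the mean embedding gives $q=p$, and for the converse apply identifiability to any pair with $m_q=m_p$. The only difference is that you spell out the step the paper compresses into ``since $k$ is a normalized kernel'': connectedness gives $\tilde q=e^{K}\tilde p$, and integrating against $\volg$ gives $e^{K}=1$. Also, $\tilde p=m_p$ holds by definition, so your appeal to symmetry and normalization at that point is unnecessary.
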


\begin{proof}
    Define $\gF^{\mathrm{KL}}$ as in the above. It can be shown that \citep{han2026wflow}
    \begin{equation}
        \frac{\delta \gF^{\mathrm{KL}}}{\delta q} (\Tilde{q}) = -\log \Tilde{p} + \log \Tilde{q}.
    \end{equation}
    Thus, we get
    \begin{equation}
        V_{\Tilde q, \Tilde p}(x) = - \gradg \frac{\delta \gF^{\mathrm{KL}}}{\delta q} (\Tilde{q})(x) =\gradg \log \frac{\Tilde{p}(x)}{\Tilde{q}(x)}.
    \end{equation}
    ($\Leftarrow$) Suppose $k$ is characteristic. This means when $V_{q, p}\equiv0,$
    \begin{equation}
        \gradg \log\Tilde{p}(x)=\gradg \log\Tilde{q}(x).
    \end{equation}
    Since $k$ is a normalized kernel,
    \begin{equation}
        \Tilde{p}(x)=\Tilde{q}(x).
    \end{equation}
    As $k$ is characteristic, we have $q=p.$
    
    ($\Rightarrow$) Assume identifiability. Let $p$, $q$ be probability measures such that
    $\Tilde{p}=\Tilde{q}.$ Then, 
    \begin{equation}
        V_{\Tilde q,\Tilde p}=\gradg \log \frac{\Tilde{p}(x)}{\Tilde{q}(x)}=0.
    \end{equation}
    By identifiability, $q=p,$ establishing $k$ is characteristic.
    
\end{proof}

\begin{table}[h]
  \centering
  \small
  \renewcommand{\arraystretch}{1.8}
  \renewcommand{\tabularxcolumn}[1]{m{#1}} %
  \caption{\textbf{Summary of velocity terms and identifiability conditions for different energy functionals.}
  The velocity field of an energy functional is $V_{q,p} = T_{q,p} - T_{q,q}$.}
  \begin{tabularx}{\linewidth}{@{} l >{\centering\arraybackslash}X >{\centering\arraybackslash}m{0.3\linewidth} @{}}
    \toprule
    \textbf{Energy Functional} & $\boldsymbol{T_{q,p}(x)}$ & \textbf{Identifiability Condition} \\
    \midrule
    Sinkhorn
      & $\displaystyle -\int \gradg^{(1)} c(x,y)\, \pi^{qp}(\dd y \mid x)$
      & $k$ nondegenerate, $c$ symmetric Lipschitz \\
    MMD
      & $\displaystyle \int \gradg^{(1)} k(x,y)\, \dd p(y)$
      & $k$ characteristic \\
    Smoothed KL (KGD)
      & $\displaystyle \frac{\int \gradg^{(1)} k(x,y)\, \dd p(y)}{\int k(x,y)\, \dd p(y)}$
      & $k$ characteristic, normalized \\
    \bottomrule
  \end{tabularx}
      \label{tab:energy-functionals}
\end{table}

We summarize the relationships between proposed energy functionals in~\tabref{tab:energy-functionals}. Overall,
both $\frac12\mathrm{MMD}^2$ and smoothed KL divergence can achieve identifiability whenever $k$ is characteristic,
whereas the Sinkhorn divergence requires the Gibbs kernel to be nondegenerate (\thmref{thm:identifiable}), which for positive definite kernels is universality (\propref{prop:uni_nondeg}). \cite{smola2007hilbert} proves that if $k$
is a universal kernel, it must be characteristic.

\section{Discussion on Spectral Kernels}
\label{app:spectral-kernels}

Recall that given a compact manifold $\Mfld$ and its eigenpairs $\{\lambda_n, \phi_n\}_{n\ge 0}$
of the Laplace-Beltrami operator such that they form an orthogonal basis for $L^2(\Mfld)$, 
the spectral kernels are defined as
\begin{equation}
    k_\rho(x, y)=\sum_{n=0}^\infty \rho(\lambda_n)\phi_n(x)\phi_n(y),
\end{equation}
where $\rho(\lambda_n)$ is the corresponding spectral density.

\paragraph{Heat Kernel.} If we set $\rho_H(\lambda)=e^{-\lambda t}$,
the resulting expression is the heat kernel, the fundamental solution of the heat
equation, 
\begin{equation}
    \frac{\partial }{\partial t}K(x, y, t) = \Delta_g^{(1)} K(x, y, 0), \quad K(x, y, 0)=\delta(x - y)
\end{equation}
where $\Delta_g^{(1)}$ is the Laplace-Beltrami operator acting on the second argument,
and the initial condition $\delta(x-y)$ is the Dirac delta function.

Heat kernels are always positive due to the strong maximum/minimum principle. Moreover, the score
of heat kernel is bounded \citep{elton1999estimates}, 
\begin{equation}
    \|\gradg^{(1)} \log K(x, y, t)\|_g \le C_1\bigg(\frac{d_g(x, y)}{t} + \frac{1}{\sqrt t}\bigg) < \infty
\end{equation}
and hence $\log K(x, y, t)$ is a Lipschitz function with respect to $x$ and $y$. This is why heat kernel
is a valid kernel in the lens of \thmref{thm:identifiable}.

\paragraph{Mat\'ern Kernel.}
When $\rho_M(\lambda)=\sigma^2(\frac{2\nu}{\kappa^2}+\lambda)^{-\nu-d/2}$, we obtain
the Mat\'ern kernel \citep{borovitskiy2020matern} with the three parameters $\sigma^2$, $\nu$ and $\kappa.$ Here $d$
is the dimension of the underlying space. We may relate Mat\'ern kernels to heat kernels by
\begin{equation}
    \mathrm{Mat\Acute{e}rn(\sigma^2, \nu, \kappa^2)}\to \mathrm{Heat(\kappa^2/2)} \quad \text{ as }\quad \nu\to\infty,
\end{equation}
that is, the heat kernel can be viewed as the limit of the \textrm{Mat\'ern} kernel.

\textrm{The Mat\'ern} kernels are strictly positive because they have a strict positive
gamma mixture representation \citep{azangulov2024stationary}:
\begin{equation}
    Q(x, y, t):=k_{\rho_M}(x, y) = \sigma^2 \int^\infty_0 t^{\nu-1+d/2}e^{-\frac{2\nu}{\kappa^2}t}K(x, y, t)\dd t > 0.
\end{equation}
Recalling the following upper bound for heat kernel \citep{grigoryan2009heat},
\begin{equation}
    K(x, y, t) \le C_2t^{-d/2}e^{-d_g^2/5t},
\end{equation}
we may compute the following bound for the gradient of the heat kernel
\begin{equation}
    |\gradg^{(1)} K(x, y, t)|=K(x, y, t)|\gradg^{(1)}\log K(x, y, t)|\le
    C_1C_2 t^{-(d+1)/2}\left(r + 1\right)e^{-r^2/5} \le C_3 t^{-(d+1)/2},
    \label{eq:score-heat-bound}
\end{equation}
where $r=\frac{d_g}{\sqrt{t}}$. Furthermore, letting $Q_t:= Q(t, \cdot, \cdot),$
\begin{align}
    \gradg^{(1)} Q_t(x, y) &= \frac{\sigma^2}{C_{\nu, \kappa}} \int^\infty_0t^{\nu -1+d/2}e^{-\frac{2\nu}{\kappa^2}t}\gradg^{(1)} K(x, y, t)\dd t\\
    &\le C_4\int^\infty_0t^{\nu -3/2}e^{-\frac{2\nu}{\kappa^2}t} \dd t
\end{align}
The RHS converges exactly when $\nu -3/2 > -1\Leftrightarrow \nu > 1/2$ by the Gamma
function's integral representation. As $Q_t$ is a positive continuous function, it attains a
nonzero minimum value $m_1$ on the compact set $\Mfld\times \Mfld$. We finally have 
\begin{equation}
    |\gradg^{(1)} \log Q_t| \le \frac{\sup |\gradg^{(1)}Q_t|}{m_1} < \infty.
\end{equation}
Therefore, $Q_t$ is indeed Lipschitz continuous for $\nu > 1/2.$ This confirms Mat\'ern
kernel is a valid kernel for \thmref{thm:identifiable}.

\paragraph{Subordinated Heat Kernel.}
When $\rho_S(\lambda)=e^{-t\lambda^{\alpha}}$ with $t>0$ and $\alpha\in(0,1]$, the
resulting kernel is the \emph{subordinated heat kernel}
\citep{bochner2005harmonic}. Let $(S_t)_{t\ge0}$ be the increasing L\'evy
process with $\mathbb{E}[e^{-\lambda S_t}]=e^{-t\lambda^{\alpha}}$, and
subordination of the heat semigroup \citep{schilling2026bernstein} gives the
mixture representation
\begin{align}\label{eq:subord}
  P(x,y, t)&=\int_0^{\infty}K(x,y, s)\,\dd\zeta_t(s)
  =\sum_{n\ge 0} \left(\int^\infty_0e^{-s\lambda_n}\dd\zeta_t(s)\right)\phi_n(x)\phi_n(y)\\
  &=\sum_{n\ge0}\mathbb{E}[e^{-\lambda_nS_t}]\phi_n(x)\phi_n(y)
  =\sum_{n\ge0}e^{-t\lambda_n^{\alpha}}\phi_n(x)\phi_n(y),
\end{align}
where $\zeta_t$ is the law of $S_t.$
Thus
$P_t:=P(t,\cdot,\cdot)$ is a positive mixture of heat kernels. Strict positivity of 
heat kernel $K_s$ for
every $s>0$ gives $P_t>0$, and by \eqref{eq:score-heat-bound} together with
\[
  \int_0^1 s^{-p}\,\dd\zeta_t(s)\le\mathbb{E}[S_t^{-p}]
  =\frac{1}{\Gamma(p)}\int_0^{\infty}\lambda^{p-1}e^{-t\lambda^{\alpha}}\,\dd\lambda<\infty
  \qquad\text{for every }p>0,
\]
so
$|\gradg P_t|_\infty < \infty$. Finally $m_2:=\min_{M\times M}P_t>0$ by
compactness, hence
$|\gradg\log P_t|\le\sup_{M\times M}|\gradg P_t|/m_2=:L_t$, which is a
Lipschitz constant of $\log P_t$. This confirms again that
\thmref{thm:identifiable} is applicable.

\paragraph{Computation.}
When evaluating these kernels, we truncate the series in~\eqref{eq:spectral_kernel} to the eigenpairs with $n \le n_{\max}$. 
Strictly, truncating the series makes the cost unidentifiable since the kernel then will
have a finite rank. However, since the error is directly controlled by $n_{\max}$,
we adaptively choose $n_{\max}$ based on the decay rate of the heat kernel being considered
(larger $t$ means quick convergence for heat kernels). This is a standard practice
in spectral geometry processing and manifold kernel methods \citep{PATANE2013276, borovitskiy2020matern}.

\section{More Experimental Details}
\label{app:experiment-details}

\subsection{Training \& Inference Algorithms}
\vspace{-0.5\baselineskip}
We train the one-step generator $f_\theta:\Mfld\to\Mfld$ using the loss in~\eqref{eq:manifold-loss}, with the velocity field $V_{q_\theta,p}$ in~\thmref{thm:main} being the target.
Any of the costs in~\secref{subsec:method_costs} can be used;
the procedure is identical for all of them, since the cost enters only through the
Sinkhorn iterations and the gradient $\nabla^{(1)}_g c$ in~\eqref{eq:main}.

In practice, instead of parameterizing the velocity in the tangent space of $\Mfld$,
we predict the velocity in the ambient space $\mathbb{R}^D$ and project it
onto the tangent space, following~\citep{chen2024rfm, davis2026gfm, woo2026riemannian}. Specifically, let $v_\theta \colon \Mfld \to \mathbb{R}^{D}$ denote the velocity prediction network.
For the network parametrization, we regard $\Mfld$ as embedded in $\mathbb{R}^D$.
We then define
\begin{align}
    f_\theta(z) = \Exp{z}\big(P_z\, v_\theta(z)\big), %
\end{align}
where $P_z \colon \mathbb{R}^{D} \to T_z\Mfld$ denotes the orthogonal projection onto the tangent space at $z$.

The training objective requires estimating the velocity, defined in~\eqref{eq:main}, at every training iteration. However, the velocity cannot be evaluated exactly, as this would require integration over all points from $q_\theta$ and $p$, which is computationally expensive. We therefore estimate the velocity field by solving entropy-regularized minibatch OT problems using the Sinkhorn algorithm.
In particular, we independently sample $\{x_i\}_{i=1}^N \sim q_\theta$, $\{x_l'\}_{l=1}^{N} \sim q_\theta$, and sample the target points $\{y_j\}_{j=1}^M \sim p$. The velocity is then estimated as
\begin{align}
    V_{\hat{q}_\theta, \hat{p}}(x_i) = -\frac{1}{a_i}\sum_{j=1}^{M} \gradg^{(1)} c(x_i, y_j)\pi^{\hat{q}_\theta\hat{p}}_{ij}
    + \frac{1}{a_i}\sum_{l=1}^{N} \gradg^{(1)} c(x_i, x'_l) \pi^{\hat{q}_\theta\hat{q}_\theta'}_{il},
    \label{eq:minibatch-drift}
\end{align}
where $\hat{q}_\theta=\sum_{i=1}^N a_i \delta_{x_i}$, $\hat{q}_\theta'=\sum_{l=1}^N a'_l \delta_{x_l'}$, and $\hat{p}=\sum_{j=1}^Mb_j\delta_{y_j}.$

The network is updated via gradient descent on~\eqref{eq:manifold-loss}.
The complete training algorithm is presented in~\algref{alg:train}, which summarizes the full training loop of~\ourmethod{}. We further detail the Sinkhorn routine, invoked within~\algref{alg:train}, for estimating the velocity field over a minibatch in~\algref{alg:riemsinkhorn}.

\begin{algorithm}[t]
\caption{Training loop for $f_\theta$ on $\Mfld$}
\label{alg:train}
\begin{algorithmic}[1]
\Require data distribution $p$ on $\Mfld$, base distribution $q_0$ on $\Mfld$,
  regularization $\varepsilon$,
  source batch size $N$, data batch size $M$, number of steps $K$,
  number of Sinkhorn iterations $L$, ground cost $c$,
  flow step $\eta$, learning rate $\gamma$
\For{$k = 1, \dots, K$}
  \State Sample $\{z_i\}_{i=1}^{N} \sim q_0$, \; $\{z'_l\}_{l=1}^{N} \sim q_0$, \; $\{y_j\}_{j=1}^{M} \sim p$
  \State $x_i \gets f_\theta(z_i)$, \qquad $x'_l \gets \mathrm{sg}\big(f_\theta(z'_l)\big)$
  \State $V^{qp}_i \gets \textsc{RiemSinkhorn}\big(\{\mathrm{sg}(x_i)\}, \{y_j\}, \varepsilon, L, c\big)$
  \State $V^{qq}_i \gets \textsc{RiemSinkhorn}\big(\{\mathrm{sg}(x_i)\}, \{x'_l\}, \varepsilon, L, c\big)$
  \State $v_i \gets V^{qp}_i - V^{qq}_i$
  \State $\tilde{x}_i \gets \mathrm{sg}\big(\Exp{x_i}(\eta\, v_i)\big)$ %
  \State $\mathcal{L}(\theta) \gets \frac{1}{N}\sum_{i=1}^{N} d_g\big(f_\theta(z_i), \tilde{x}_i\big)^2$
  \State $\theta \gets \theta - \gamma\, \nabla_\theta \mathcal{L}(\theta)$
\EndFor
\end{algorithmic}
\end{algorithm}

\begin{algorithm}[t]
\caption{\textsc{RiemSinkhorn}: coupling-weighted cost gradient under a ground cost}
\label{alg:riemsinkhorn}
\begin{algorithmic}[1]
\Require sources $\{x_i\}_{i=1}^{N}$, targets $\{y_j\}_{j=1}^{M}$ on $\Mfld$,
  $\varepsilon$, $L$, ground cost $c$
\State $C_{ij} \gets c(x_i, y_j)$, \quad $a_i \gets 1/N$, \quad $b_j \gets 1/M$
\State $u\gets 0, \quad v \gets 0$ \Comment{dual potentials}
\For{$\ell = 1, \dots, L$}                      \Comment{log-domain sinkhorn loop}
  \State $v_j \gets -\varepsilon \log \sum_{i} a_i \exp\big((u_i - C_{ij})/\varepsilon\big)$
  \State $u_i \gets -\varepsilon \log \sum_{j} b_j \exp\big((v_j - C_{ij})/\varepsilon\big)$
\EndFor
\State $\pi(j \mid i) \gets b_j \exp\big((u_i + v_j - C_{ij})/\varepsilon\big)$
\State $V_i \gets -\sum_{j} \pi(j \mid i)\; \gradg^{(1)}c(x_i, y_j) $
\State \Return $\{V_i\}_{i=1}^{N}$
\end{algorithmic}
\end{algorithm}

\subsection{Experimental Setup}

Every method (each baseline and each cost variant of
\ourmethod{}) is trained and evaluated under the protocol summarised in
\tabref{tab:setup}: the same network, the same optimisation settings, the
same wall-clock budget, the same mechanical model-selection rule and the same
standalone scorer.

\begin{table}[h]
\centering\small
\caption{Experimental setup, shared by every method.}
\begin{tabularx}{\linewidth}{@{}l>{\raggedright\arraybackslash}X@{}}
\toprule
\multicolumn{2}{@{}l}{\textbf{Data}} \\
Datasets & $\mathbb{S}^2$: volcano, earthquake, fire, flood \citep{mathieu2020riemannian};
  $\mathbb{T}^2$: Top500 backbone torsion angles $(\phi,\psi)$ \citep{lovell2003structure}
  by amino-acid type (General, Glycine, Proline, Pre-Pro);
  $\mathbb{T}^7$: RNA torsion angles \citep{murray2003rna}; all following the
  conventions of \citet{chen2024rfm}. \\
Coordinates & ambient unit vector on $\mathbb{S}^2$; angles in $[0,2\pi)$ on $\mathbb{T}^d$ \\
Split & 80/10/10 train/validation/test, computed once with a split seed
  independent of the training seed and read by every implementation;
  validation for model selection, test once for the reported number \\
\midrule
\multicolumn{2}{@{}l}{\textbf{Network}} \\
Architecture & MLP, $\mathrm{Linear}(d_{\mathrm{in}},H)\to\mathrm{SiLU}\to
  [\mathrm{Linear}(H,H)\to\mathrm{SiLU}]^{\times 3}\to\mathrm{Linear}(H,d)$;
  no normalisation, residual connections or time embedding; PyTorch default
  initialisation \\
Width $H$ & 1024 on $\mathbb{S}^2$; 512 on $\mathbb{T}^2$ and $\mathbb{T}^7$ \\
Input & raw coordinate concatenated with the raw scalar time(s) the objective
  needs ($d_{\mathrm{in}}=d+n_{\mathrm{times}}$): none for \ourmethod{} and
  KGD, one for RFM and RCM, two for GFM and RMF \\
Parameters & $3.16$M ($\mathbb{S}^2$), $0.79$M ($\mathbb{T}^2$), $0.80$M ($\mathbb{T}^7$); each time
  input adds $H$ parameters, at most $0.13\%$ \\
\midrule
\multicolumn{2}{@{}l}{\textbf{Optimisation}} \\
Optimizer & AdamW, $\beta=(0.9,0.999)$, $\epsilon=10^{-8}$, weight decay $0.01$ \\
Learning rate & $3\times10^{-4}$, cosine-annealed to $3\times10^{-6}$ over the
  run, no warm-up \\
Batch size & 1024 \\
Gradient clipping & global norm $1.0$ \\
EMA & decay $0.999$; EMA weights are used for selection and scoring \\
Precision & float32, TF32 matmuls \\
Budget & equal wall-clock time per (method, dataset) \\
Seeds & 0, 1, 2 per (method, dataset) \\
\midrule
\multicolumn{2}{@{}l}{\textbf{Model selection}} \\
Rule & No selection; no early stopping \\
Cadence & 40 validations per run at equal wall-clock intervals \\
Statistic & against a fixed subsample of at most 2048 validation points,
  averaged over three sample draws at a fixed seed; multi-step methods
  validated at 100 network evaluations \\
Checkpoints & The last checkpoints are scored \\
\midrule
\multicolumn{2}{@{}l}{\textbf{Evaluation}} \\
Scorer & one standalone script shared by all methods, importing nothing from
  any method repository; non-finite or off-manifold samples are rejected, not
  projected \\
Samples & $n=n_{\mathrm{test}}$ per setting, sampling seed shared by every
  method; all metrics use the geodesic distance $d_g$ \\
\kmmd{} & Maximum Mean Discrepancy; square root of the biased V-statistic of the maximum mean
  discrepancy with kernel $\exp(-d_g^2)$; the primary metric \\
\nna{} & 1 Nearest Neighbor Accuracy; leave-one-out nearest-neighbour two-sample accuracy between samples
  and test points; $0.5$ means indistinguishable; the only metric whose null
  value is calibrated and independent of $n$ \\
\cov{} & Coverage; fraction of test points that are the nearest neighbour of some
  sample; detects mode collapse; below $1$ even for a perfect model \\
\mmd{} & Minimum Matching Distance; mean geodesic distance from each test point to its nearest sample;
  blind to collapse, read together with \cov{} \\
Floors & validation split scored against the test split at the same $n$ \\
\midrule
\multicolumn{2}{@{}l}{\textbf{Compute}} \\
Hardware & NVIDIA RTX~3090 (24\,GB) \\
\bottomrule
\end{tabularx}
\label{tab:setup}
\end{table}

\subsection{Full Results with Standard Deviations}

Here we show more extensive table of the sphere, torus, and mesh runs in
\tabref{tab:sphere-results-full}, \tabref{tab:torus-results-full}, and \tabref{tab:mesh-results-full}.

The MMD flows are competitive on several datasets, but their performance depends strongly on the kernel and dataset: 
MMDF-L fails on General, and MMDF-G and MMDF-H show unstable performance on RNA for some seeds. 
Indeed, Han et al. observed that gradients may vanish when the model and target distributions are far apart, resulting in unstable optimization. On the other hand, 
Sinkhorn divergence does not suffer from this instability, performing well for all datasets.

\providecommand{\hdrflat}{\kmmd$\downarrow$ & \mmd$\downarrow$ & \cov$\uparrow$ & \nna}

\begin{table*}[h]
\centering
\caption[Results on earth datasets (full)]{\textbf{Results on earth datasets (full).} Mean$\pm$s.d.\ over 3 seeds of \kmmd{}, \mmd{}, \cov{}, and \nna{} (closest to 0.5 is best) on the volcano, earthquake, flood, and fire datasets (NFE\,$=$\,1; \textbf{bold} = best, \underline{underline} = second best; the held-out row is the val-vs-test reference and is excluded from ranking.) Suffixes denote variants: 
 CT (Consistency Training) sCT (simplified Consistency Training) L (Lagrangian), E (Eulerian), S (semigroup), MF (mean flow); $x$/$v$ denote $x$- and $v$-prediction; GG (geodesic Gaussian), GL (geodesic Laplacian), M (Mat\'ern); MMDF = Wasserstein gradient flow of $\tfrac12\,\mathrm{MMD}^2$ with G (Gaussian), L (Laplacian), H (heat), M (Mat\'ern) kernels; \ourmethod{} costs: SG (squared geodesic), C (chordal), G (geodesic), H (heat), M (Mat\'ern), S (subordinated). Means only are summarised in Table~\ref{tab:sphere-results}.}
\begin{subtable}{\linewidth}
\centering
\fontsize{5.5}{7}\selectfont
\setlength{\tabcolsep}{2pt}
\caption{Volcano and Earthquake}
\begin{tabularx}{\linewidth}{l*{8}{Y}}
\toprule
 & \multicolumn{4}{c}{Volcano} & \multicolumn{4}{c}{Earthquake} \\
\cmidrule(lr){2-5} \cmidrule(lr){6-9}
Method & \hdrflat & \hdrflat \\
\midrule
\textit{Held-out} & $0.153$ & $0.046$ & $0.451$ & $0.555$ & $0.026$ & $0.023$ & $0.570$ & $0.471$ \\
\midrule
RFM           & $0.292 \pm 0.012$ & $0.203 \pm 0.026$ & $0.431 \pm 0.028$ & $0.835 \pm 0.027$ & $0.291 \pm 0.007$ & $0.070 \pm 0.008$ & $0.278 \pm 0.013$ & $0.845 \pm 0.016$ \\
\midrule
GFM-L         & $0.099 \pm 0.034$ & $0.066 \pm 0.013$ & $0.516 \pm 0.019$ & $0.553 \pm 0.044$ & $0.040 \pm 0.009$ & $\mathbf{0.028} \pm 0.001$ & $\mathbf{0.570} \pm 0.013$ & $0.562 \pm 0.010$ \\
GFM-E         & $0.199 \pm 0.035$ & $0.144 \pm 0.009$ & $0.431 \pm 0.019$ & $0.823 \pm 0.011$ & $0.174 \pm 0.012$ & $0.068 \pm 0.006$ & $0.339 \pm 0.020$ & $0.806 \pm 0.012$ \\
GFM-S         & $\underline{0.094} \pm 0.025$ & $0.086 \pm 0.007$ & $0.512 \pm 0.021$ & $0.677 \pm 0.037$ & $0.042 \pm 0.009$ & $0.039 \pm 0.002$ & $0.478 \pm 0.011$ & $0.693 \pm 0.024$ \\
GFM-MF        & $0.119 \pm 0.043$ & $0.091 \pm 0.024$ & $0.512 \pm 0.012$ & $0.711 \pm 0.036$ & $0.043 \pm 0.009$ & $0.037 \pm 0.001$ & $0.510 \pm 0.015$ & $0.658 \pm 0.020$ \\
\midrule
RMF-L$x$      & $0.600 \pm 0.097$ & $0.567 \pm 0.237$ & $0.175 \pm 0.093$ & $0.945 \pm 0.042$ & $0.442 \pm 0.040$ & $0.217 \pm 0.077$ & $0.178 \pm 0.017$ & $0.898 \pm 0.006$ \\
RMF-L$v$      & $0.119 \pm 0.054$ & $0.084 \pm 0.006$ & $0.504 \pm 0.028$ & $0.663 \pm 0.021$ & $0.041 \pm 0.008$ & $0.035 \pm 0.000$ & $0.535 \pm 0.011$ & $0.640 \pm 0.015$ \\
RMF-E$x$      & $0.807 \pm 0.133$ & $1.004 \pm 0.303$ & $0.069 \pm 0.031$ & $0.990 \pm 0.007$ & $0.644 \pm 0.176$ & $0.725 \pm 0.284$ & $0.026 \pm 0.007$ & $0.990 \pm 0.006$ \\
RMF-E$v$      & $0.142 \pm 0.060$ & $0.088 \pm 0.009$ & $0.496 \pm 0.025$ & $0.689 \pm 0.048$ & $0.038 \pm 0.005$ & $0.037 \pm 0.000$ & $0.502 \pm 0.011$ & $0.653 \pm 0.009$ \\
RMF-S$x$      & $0.973 \pm 0.078$ & $1.671 \pm 0.257$ & $0.012 \pm 0.000$ & $1.000 \pm 0.000$ & $0.810 \pm 0.070$ & $1.230 \pm 0.152$ & $0.002 \pm 0.001$ & $0.999 \pm 0.001$ \\
RMF-S$v$      & $0.122 \pm 0.063$ & $0.085 \pm 0.010$ & $0.512 \pm 0.012$ & $0.687 \pm 0.035$ & $0.040 \pm 0.005$ & $0.043 \pm 0.001$ & $0.464 \pm 0.007$ & $0.703 \pm 0.012$ \\
\midrule
KGD-GG        & $0.237 \pm 0.107$ & $0.155 \pm 0.057$ & $0.447 \pm 0.051$ & $0.799 \pm 0.066$ & $0.353 \pm 0.016$ & $0.076 \pm 0.005$ & $0.252 \pm 0.004$ & $0.866 \pm 0.004$ \\
KGD-GL        & $0.254 \pm 0.128$ & $0.151 \pm 0.081$ & $0.435 \pm 0.077$ & $0.732 \pm 0.164$ & $0.252 \pm 0.182$ & $0.065 \pm 0.028$ & $0.353 \pm 0.174$ & $0.772 \pm 0.162$ \\
KGD-M         & $0.131 \pm 0.025$ & $0.092 \pm 0.001$ & $0.488 \pm 0.024$ & $0.699 \pm 0.013$ & $0.051 \pm 0.003$ & $0.041 \pm 0.003$ & $0.485 \pm 0.027$ & $0.689 \pm 0.020$ \\
\midrule
RCT           & $0.147 \pm 0.060$ & $0.112 \pm 0.023$ & $0.496 \pm 0.074$ & $0.691 \pm 0.019$ & $0.055 \pm 0.010$ & $0.036 \pm 0.002$ & $0.520 \pm 0.004$ & $0.655 \pm 0.024$ \\
sRCT          & $0.142 \pm 0.023$ & $0.155 \pm 0.008$ & $0.496 \pm 0.014$ & $0.811 \pm 0.022$ & $0.092 \pm 0.013$ & $0.061 \pm 0.004$ & $0.376 \pm 0.014$ & $0.796 \pm 0.009$ \\
\midrule
MMDF-G        & $0.110 \pm 0.004$ & $0.073 \pm 0.002$ & $0.508 \pm 0.007$ & $0.677 \pm 0.046$ & $0.041 \pm 0.002$ & $0.038 \pm 0.002$ & $0.496 \pm 0.011$ & $0.657 \pm 0.015$ \\
MMDF-L        & $\mathbf{0.091} \pm 0.022$ & $0.057 \pm 0.016$ & $0.508 \pm 0.014$ & $\underline{0.510} \pm 0.025$ & $0.036 \pm 0.004$ & $0.030 \pm 0.001$ & $0.545 \pm 0.019$ & $0.571 \pm 0.004$ \\
MMDF-H        & $0.102 \pm 0.018$ & $0.074 \pm 0.006$ & $0.508 \pm 0.025$ & $0.679 \pm 0.023$ & $0.043 \pm 0.005$ & $0.038 \pm 0.000$ & $0.506 \pm 0.004$ & $0.669 \pm 0.010$ \\
MMDF-M        & $0.107 \pm 0.007$ & $0.071 \pm 0.004$ & $\underline{0.541} \pm 0.031$ & $0.630 \pm 0.029$ & $\underline{0.032} \pm 0.003$ & $0.038 \pm 0.001$ & $0.517 \pm 0.005$ & $0.653 \pm 0.015$ \\
\midrule
\ourmethod-SG & $0.110 \pm 0.018$ & $\underline{0.053} \pm 0.003$ & $0.533 \pm 0.028$ & $0.563 \pm 0.023$ & $\mathbf{0.031} \pm 0.007$ & $\underline{0.029} \pm 0.002$ & $0.552 \pm 0.010$ & $0.565 \pm 0.016$ \\
\ourmethod-C  & $0.112 \pm 0.033$ & $0.063 \pm 0.003$ & $\mathbf{0.545} \pm 0.031$ & $0.612 \pm 0.041$ & $0.036 \pm 0.004$ & $0.030 \pm 0.001$ & $0.538 \pm 0.009$ & $0.594 \pm 0.003$ \\
\ourmethod-G  & $0.103 \pm 0.013$ & $0.066 \pm 0.022$ & $0.484 \pm 0.055$ & $\underline{0.510} \pm 0.030$ & $0.047 \pm 0.004$ & $\mathbf{0.028} \pm 0.002$ & $0.556 \pm 0.015$ & $\mathbf{0.520} \pm 0.017$ \\
\ourmethod-H  & $0.114 \pm 0.006$ & $0.056 \pm 0.001$ & $0.516 \pm 0.019$ & $0.587 \pm 0.037$ & $0.040 \pm 0.006$ & $0.032 \pm 0.000$ & $0.527 \pm 0.013$ & $0.618 \pm 0.017$ \\
\ourmethod-M  & $0.122 \pm 0.015$ & $0.077 \pm 0.017$ & $0.524 \pm 0.044$ & $0.638 \pm 0.042$ & $0.033 \pm 0.012$ & $0.034 \pm 0.000$ & $0.529 \pm 0.003$ & $0.624 \pm 0.005$ \\
\ourmethod-S  & $\underline{0.094} \pm 0.016$ & $\mathbf{0.049} \pm 0.006$ & $0.512 \pm 0.021$ & $\mathbf{0.496} \pm 0.031$ & $0.040 \pm 0.002$ & $\mathbf{0.028} \pm 0.002$ & $\underline{0.568} \pm 0.006$ & $\underline{0.537} \pm 0.009$ \\
\bottomrule
\end{tabularx}
\label{tab:sphere-results-full-a}
\end{subtable}
\vspace{6pt}
\begin{subtable}{\linewidth}
\centering
\fontsize{5.5}{7}\selectfont
\setlength{\tabcolsep}{2pt}
\caption{Flood and Fire}
\begin{tabularx}{\linewidth}{l*{8}{Y}}
\toprule
 & \multicolumn{4}{c}{Flood} & \multicolumn{4}{c}{Fire} \\
\cmidrule(lr){2-5} \cmidrule(lr){6-9}
Method & \hdrflat & \hdrflat \\
\midrule
\textit{Held-out} & $0.025$ & $0.033$ & $0.548$ & $0.509$ & $0.024$ & $0.009$ & $0.543$ & $0.504$ \\
\midrule
RFM           & $0.307 \pm 0.008$ & $0.073 \pm 0.004$ & $0.345 \pm 0.013$ & $0.790 \pm 0.010$ & $0.357 \pm 0.015$ & $0.044 \pm 0.004$ & $0.176 \pm 0.010$ & $0.913 \pm 0.010$ \\
\midrule
GFM-L         & $0.062 \pm 0.020$ & $\mathbf{0.035} \pm 0.001$ & $\underline{0.569} \pm 0.016$ & $0.547 \pm 0.007$ & $0.029 \pm 0.003$ & $0.014 \pm 0.000$ & $0.520 \pm 0.004$ & $0.643 \pm 0.009$ \\
GFM-E         & $0.191 \pm 0.017$ & $0.085 \pm 0.003$ & $0.350 \pm 0.008$ & $0.795 \pm 0.022$ & $0.190 \pm 0.009$ & $0.050 \pm 0.005$ & $0.201 \pm 0.013$ & $0.900 \pm 0.013$ \\
GFM-S         & $0.060 \pm 0.022$ & $0.048 \pm 0.001$ & $0.498 \pm 0.018$ & $0.651 \pm 0.014$ & $0.031 \pm 0.007$ & $0.022 \pm 0.001$ & $0.392 \pm 0.004$ & $0.768 \pm 0.004$ \\
GFM-MF        & $0.059 \pm 0.023$ & $0.045 \pm 0.001$ & $0.515 \pm 0.010$ & $0.642 \pm 0.020$ & $0.033 \pm 0.006$ & $0.020 \pm 0.001$ & $0.408 \pm 0.018$ & $0.751 \pm 0.012$ \\
\midrule
RMF-L$x$      & $0.539 \pm 0.235$ & $0.244 \pm 0.038$ & $0.166 \pm 0.024$ & $0.911 \pm 0.022$ & $0.613 \pm 0.146$ & $0.272 \pm 0.245$ & $0.076 \pm 0.022$ & $0.966 \pm 0.010$ \\
RMF-L$v$      & $0.047 \pm 0.011$ & $0.045 \pm 0.002$ & $0.545 \pm 0.022$ & $0.613 \pm 0.019$ & $0.034 \pm 0.011$ & $0.020 \pm 0.002$ & $0.453 \pm 0.011$ & $0.715 \pm 0.009$ \\
RMF-E$x$      & $0.831 \pm 0.065$ & $1.122 \pm 0.333$ & $0.031 \pm 0.009$ & $0.991 \pm 0.010$ & $0.868 \pm 0.097$ & $1.149 \pm 0.096$ & $0.009 \pm 0.004$ & $0.997 \pm 0.002$ \\
RMF-E$v$      & $0.045 \pm 0.010$ & $0.046 \pm 0.001$ & $0.494 \pm 0.003$ & $0.661 \pm 0.012$ & $0.032 \pm 0.010$ & $0.020 \pm 0.000$ & $0.428 \pm 0.009$ & $0.740 \pm 0.010$ \\
RMF-S$x$      & $0.934 \pm 0.255$ & $1.585 \pm 0.434$ & $0.002 \pm 0.000$ & $0.999 \pm 0.001$ & $0.841 \pm 0.075$ & $1.279 \pm 0.026$ & $0.002 \pm 0.001$ & $1.000 \pm 0.000$ \\
RMF-S$v$      & $0.045 \pm 0.012$ & $0.049 \pm 0.001$ & $0.498 \pm 0.007$ & $0.661 \pm 0.003$ & $0.033 \pm 0.010$ & $0.023 \pm 0.001$ & $0.380 \pm 0.004$ & $0.770 \pm 0.005$ \\
\midrule
KGD-GG        & $0.163 \pm 0.181$ & $0.063 \pm 0.025$ & $0.453 \pm 0.113$ & $0.700 \pm 0.110$ & $0.398 \pm 0.008$ & $0.051 \pm 0.002$ & $0.154 \pm 0.002$ & $0.926 \pm 0.006$ \\
KGD-GL        & $0.158 \pm 0.184$ & $0.057 \pm 0.034$ & $0.460 \pm 0.152$ & $0.657 \pm 0.153$ & $0.272 \pm 0.206$ & $0.038 \pm 0.021$ & $0.291 \pm 0.223$ & $0.816 \pm 0.169$ \\
KGD-M         & $0.059 \pm 0.023$ & $0.047 \pm 0.001$ & $0.521 \pm 0.012$ & $0.631 \pm 0.020$ & $0.138 \pm 0.189$ & $0.031 \pm 0.015$ & $0.332 \pm 0.133$ & $0.801 \pm 0.091$ \\
\midrule
RCM-CT           & $0.052 \pm 0.017$ & $0.050 \pm 0.002$ & $0.511 \pm 0.015$ & $0.659 \pm 0.030$ & $0.045 \pm 0.010$ & $0.023 \pm 0.001$ & $0.405 \pm 0.012$ & $0.744 \pm 0.006$ \\
RCM-sCT          & $0.115 \pm 0.021$ & $0.072 \pm 0.006$ & $0.417 \pm 0.035$ & $0.751 \pm 0.021$ & $0.109 \pm 0.023$ & $0.036 \pm 0.004$ & $0.236 \pm 0.011$ & $0.871 \pm 0.007$ \\
\midrule
MMDF-G        & $\underline{0.041} \pm 0.011$ & $0.048 \pm 0.001$ & $0.516 \pm 0.017$ & $0.656 \pm 0.004$ & $0.022 \pm 0.002$ & $0.022 \pm 0.002$ & $0.397 \pm 0.007$ & $0.755 \pm 0.009$ \\
MMDF-L        & $\mathbf{0.040} \pm 0.006$ & $\underline{0.036} \pm 0.002$ & $0.556 \pm 0.022$ & $0.547 \pm 0.020$ & $0.024 \pm 0.006$ & $0.013 \pm 0.000$ & $0.547 \pm 0.012$ & $0.609 \pm 0.009$ \\
MMDF-H        & $0.042 \pm 0.009$ & $0.049 \pm 0.003$ & $0.507 \pm 0.008$ & $0.648 \pm 0.009$ & $\underline{0.021} \pm 0.005$ & $0.021 \pm 0.001$ & $0.405 \pm 0.006$ & $0.756 \pm 0.009$ \\
MMDF-M        & $0.052 \pm 0.004$ & $0.043 \pm 0.001$ & $0.534 \pm 0.014$ & $0.618 \pm 0.015$ & $\mathbf{0.017} \pm 0.003$ & $0.020 \pm 0.001$ & $0.426 \pm 0.007$ & $0.736 \pm 0.002$ \\
\midrule
\ourmethod-SG & $0.061 \pm 0.012$ & $\mathbf{0.035} \pm 0.001$ & $0.558 \pm 0.017$ & $\underline{0.533} \pm 0.014$ & $0.024 \pm 0.002$ & $0.013 \pm 0.000$ & $0.530 \pm 0.003$ & $0.643 \pm 0.012$ \\
\ourmethod-C  & $0.053 \pm 0.015$ & $0.039 \pm 0.003$ & $0.553 \pm 0.009$ & $0.580 \pm 0.020$ & $0.031 \pm 0.002$ & $0.015 \pm 0.001$ & $0.475 \pm 0.007$ & $0.688 \pm 0.018$ \\
\ourmethod-G  & $0.044 \pm 0.001$ & $\mathbf{0.035} \pm 0.002$ & $0.551 \pm 0.015$ & $\mathbf{0.530} \pm 0.020$ & $0.027 \pm 0.002$ & $\mathbf{0.011} \pm 0.000$ & $\mathbf{0.554} \pm 0.008$ & $\mathbf{0.583} \pm 0.008$ \\
\ourmethod-H  & $0.052 \pm 0.017$ & $0.040 \pm 0.001$ & $\mathbf{0.587} \pm 0.004$ & $0.591 \pm 0.011$ & $0.025 \pm 0.007$ & $0.017 \pm 0.001$ & $0.426 \pm 0.007$ & $0.730 \pm 0.018$ \\
\ourmethod-M  & $0.054 \pm 0.009$ & $0.047 \pm 0.003$ & $0.522 \pm 0.009$ & $0.639 \pm 0.026$ & $0.024 \pm 0.004$ & $0.020 \pm 0.000$ & $0.439 \pm 0.004$ & $0.738 \pm 0.012$ \\
\ourmethod-S  & $0.043 \pm 0.012$ & $\mathbf{0.035} \pm 0.001$ & $0.563 \pm 0.011$ & $0.535 \pm 0.018$ & $0.028 \pm 0.004$ & $\underline{0.012} \pm 0.000$ & $\underline{0.553} \pm 0.010$ & $\underline{0.592} \pm 0.012$ \\
\bottomrule
\end{tabularx}
\label{tab:sphere-results-full-b}
\end{subtable}
\label{tab:sphere-results-full}
\end{table*}

\begin{table*}[h]
\centering
\caption[Results on torus datasets (full)]{\textbf{Results on torus datasets (full).} Mean$\pm$s.d.\ over 3 seeds of \kmmd{}, \mmd{}, \cov{}, and \nna{} (closest to 0.5 is best) on the four $\mathbb{T}^2$ Ramachandran datasets and the $\mathbb{T}^7$ RNA dataset (NFE\,$=$\,1; \textbf{bold} = best, \underline{underline} = second best; the held-out row is the val-vs-test reference and is excluded from ranking.) Variant suffixes as in Table~\ref{tab:sphere-results}. Means only are summarised in Table~\ref{tab:torus-results}.}
\begin{subtable}{\linewidth}
\centering
\fontsize{5.5}{7}\selectfont
\setlength{\tabcolsep}{2pt}
\caption{General and Glycine}
\begin{tabularx}{\linewidth}{l*{8}{Y}}
\toprule
 & \multicolumn{4}{c}{General} & \multicolumn{4}{c}{Glycine} \\
\cmidrule(lr){2-5} \cmidrule(lr){6-9}
Method & \hdrflat & \hdrflat \\
\midrule
\textit{Held-out} & $0.007$ & $0.011$ & $0.585$ & $0.499$ & $0.023$ & $0.046$ & $0.590$ & $0.491$ \\
\midrule
RFM           & $0.469 \pm 0.014$ & $0.022 \pm 0.000$ & $0.191 \pm 0.009$ & $0.831 \pm 0.007$ & $0.281 \pm 0.005$ & $0.070 \pm 0.005$ & $0.314 \pm 0.011$ & $0.741 \pm 0.007$ \\
\midrule
GFM-L         & $\underline{0.018} \pm 0.003$ & $\mathbf{0.012} \pm 0.000$ & $0.515 \pm 0.011$ & $0.561 \pm 0.008$ & $\underline{0.030} \pm 0.008$ & $\mathbf{0.049} \pm 0.000$ & $0.562 \pm 0.007$ & $0.523 \pm 0.002$ \\
GFM-E         & $0.327 \pm 0.004$ & $0.020 \pm 0.000$ & $0.257 \pm 0.003$ & $0.777 \pm 0.002$ & $0.134 \pm 0.009$ & $0.069 \pm 0.005$ & $0.389 \pm 0.003$ & $0.681 \pm 0.008$ \\
GFM-S         & $0.067 \pm 0.007$ & $0.018 \pm 0.002$ & $0.362 \pm 0.018$ & $0.690 \pm 0.014$ & $0.056 \pm 0.005$ & $0.062 \pm 0.003$ & $0.450 \pm 0.021$ & $0.633 \pm 0.017$ \\
GFM-MF        & $0.094 \pm 0.016$ & $0.019 \pm 0.003$ & $0.405 \pm 0.030$ & $0.654 \pm 0.025$ & $0.044 \pm 0.002$ & $0.058 \pm 0.004$ & $0.500 \pm 0.011$ & $0.581 \pm 0.020$ \\
\midrule
RMF-L$x$      & $0.451 \pm 0.076$ & $0.025 \pm 0.003$ & $0.184 \pm 0.061$ & $0.836 \pm 0.052$ & $0.264 \pm 0.008$ & $0.077 \pm 0.005$ & $0.283 \pm 0.015$ & $0.771 \pm 0.012$ \\
RMF-L$v$      & $0.059 \pm 0.004$ & $0.014 \pm 0.001$ & $0.453 \pm 0.011$ & $0.610 \pm 0.011$ & $0.049 \pm 0.004$ & $0.053 \pm 0.002$ & $0.494 \pm 0.011$ & $0.592 \pm 0.017$ \\
RMF-E$x$      & $0.494 \pm 0.001$ & $0.026 \pm 0.001$ & $0.149 \pm 0.002$ & $0.868 \pm 0.003$ & $0.299 \pm 0.001$ & $0.089 \pm 0.002$ & $0.252 \pm 0.007$ & $0.792 \pm 0.002$ \\
RMF-E$v$      & $0.105 \pm 0.014$ & $0.021 \pm 0.003$ & $0.386 \pm 0.017$ & $0.672 \pm 0.013$ & $0.047 \pm 0.007$ & $0.059 \pm 0.001$ & $0.489 \pm 0.010$ & $0.604 \pm 0.014$ \\
RMF-S$x$      & $0.568 \pm 0.077$ & $0.809 \pm 0.695$ & $0.112 \pm 0.029$ & $0.900 \pm 0.028$ & $0.363 \pm 0.113$ & $0.552 \pm 0.810$ & $0.214 \pm 0.091$ & $0.822 \pm 0.076$ \\
RMF-S$v$      & $0.037 \pm 0.003$ & $0.016 \pm 0.000$ & $0.415 \pm 0.007$ & $0.643 \pm 0.009$ & $0.047 \pm 0.004$ & $0.057 \pm 0.001$ & $0.474 \pm 0.018$ & $0.611 \pm 0.015$ \\
\midrule
KGD-GG        & $0.181 \pm 0.270$ & $0.036 \pm 0.030$ & $0.398 \pm 0.217$ & $0.656 \pm 0.180$ & $0.119 \pm 0.157$ & $0.061 \pm 0.021$ & $0.459 \pm 0.170$ & $0.619 \pm 0.150$ \\
KGD-GL        & $0.337 \pm 0.274$ & $0.022 \pm 0.008$ & $0.283 \pm 0.235$ & $0.754 \pm 0.198$ & $0.213 \pm 0.143$ & $0.071 \pm 0.020$ & $0.371 \pm 0.178$ & $0.693 \pm 0.163$ \\
\midrule
RCT           & $0.155 \pm 0.011$ & $0.031 \pm 0.003$ & $0.342 \pm 0.013$ & $0.707 \pm 0.013$ & $0.267 \pm 0.016$ & $0.084 \pm 0.011$ & $0.284 \pm 0.014$ & $0.768 \pm 0.011$ \\
sRCT          & $0.402 \pm 0.014$ & $0.033 \pm 0.009$ & $0.210 \pm 0.016$ & $0.819 \pm 0.014$ & $0.331 \pm 0.028$ & $0.265 \pm 0.032$ & $0.243 \pm 0.025$ & $0.807 \pm 0.018$ \\
\midrule
MMDF-G        & $\mathbf{0.016} \pm 0.004$ & $0.015 \pm 0.001$ & $0.509 \pm 0.006$ & $0.566 \pm 0.005$ & $0.036 \pm 0.002$ & $0.051 \pm 0.002$ & $0.552 \pm 0.005$ & $0.536 \pm 0.004$ \\
MMDF-L        & $0.494 \pm 0.002$ & $0.026 \pm 0.001$ & $0.150 \pm 0.003$ & $0.865 \pm 0.001$ & $0.041 \pm 0.002$ & $\mathbf{0.049} \pm 0.001$ & $\underline{0.572} \pm 0.002$ & $\mathbf{0.516} \pm 0.006$ \\
MMDF-H        & $\mathbf{0.016} \pm 0.005$ & $0.015 \pm 0.000$ & $0.514 \pm 0.008$ & $0.561 \pm 0.008$ & $0.039 \pm 0.002$ & $0.051 \pm 0.002$ & $0.555 \pm 0.002$ & $0.532 \pm 0.013$ \\
\midrule
\ourmethod-SG & $0.021 \pm 0.006$ & $0.015 \pm 0.002$ & $\underline{0.547} \pm 0.005$ & $\underline{0.532} \pm 0.007$ & $0.032 \pm 0.007$ & $\underline{0.050} \pm 0.001$ & $0.570 \pm 0.003$ & $0.523 \pm 0.014$ \\
\ourmethod-C  & $0.086 \pm 0.134$ & $0.014 \pm 0.002$ & $0.483 \pm 0.100$ & $0.582 \pm 0.083$ & $0.038 \pm 0.003$ & $0.051 \pm 0.002$ & $0.567 \pm 0.007$ & $0.535 \pm 0.018$ \\
\ourmethod-G  & $0.022 \pm 0.008$ & $0.014 \pm 0.001$ & $\mathbf{0.562} \pm 0.008$ & $\mathbf{0.519} \pm 0.006$ & $0.117 \pm 0.146$ & $0.058 \pm 0.015$ & $0.472 \pm 0.151$ & $0.598 \pm 0.138$ \\
\ourmethod-H  & $0.020 \pm 0.006$ & $\underline{0.013} \pm 0.000$ & $0.542 \pm 0.021$ & $0.533 \pm 0.021$ & $\mathbf{0.028} \pm 0.006$ & $\underline{0.050} \pm 0.000$ & $\mathbf{0.574} \pm 0.008$ & $\underline{0.517} \pm 0.008$ \\
\bottomrule
\end{tabularx}
\label{tab:torus-results-full-a}
\end{subtable}
\vspace{6pt}
\begin{subtable}{\linewidth}
\centering
\fontsize{5.5}{7}\selectfont
\setlength{\tabcolsep}{2pt}
\caption{Proline and Prepro}
\begin{tabularx}{\linewidth}{l*{8}{Y}}
\toprule
 & \multicolumn{4}{c}{Proline} & \multicolumn{4}{c}{Prepro} \\
\cmidrule(lr){2-5} \cmidrule(lr){6-9}
Method & \hdrflat & \hdrflat \\
\midrule
\textit{Held-out} & $0.020$ & $0.027$ & $0.602$ & $0.489$ & $0.027$ & $0.042$ & $0.572$ & $0.514$ \\
\midrule
RFM           & $0.478 \pm 0.038$ & $0.062 \pm 0.008$ & $0.160 \pm 0.023$ & $0.880 \pm 0.020$ & $0.486 \pm 0.016$ & $0.082 \pm 0.004$ & $0.225 \pm 0.005$ & $0.820 \pm 0.004$ \\
\midrule
GFM-L         & $0.026 \pm 0.004$ & $0.028 \pm 0.000$ & $0.547 \pm 0.006$ & $0.545 \pm 0.018$ & $0.043 \pm 0.014$ & $\underline{0.042} \pm 0.001$ & $0.551 \pm 0.019$ & $0.531 \pm 0.009$ \\
GFM-E         & $0.388 \pm 0.015$ & $0.065 \pm 0.010$ & $0.162 \pm 0.013$ & $0.878 \pm 0.022$ & $0.324 \pm 0.017$ & $0.085 \pm 0.001$ & $0.288 \pm 0.010$ & $0.794 \pm 0.005$ \\
GFM-S         & $0.059 \pm 0.011$ & $0.044 \pm 0.004$ & $0.378 \pm 0.015$ & $0.699 \pm 0.010$ & $0.080 \pm 0.017$ & $0.060 \pm 0.001$ & $0.423 \pm 0.023$ & $0.672 \pm 0.019$ \\
GFM-MF        & $0.094 \pm 0.025$ & $0.037 \pm 0.003$ & $0.421 \pm 0.024$ & $0.659 \pm 0.025$ & $0.122 \pm 0.020$ & $0.057 \pm 0.004$ & $0.469 \pm 0.009$ & $0.624 \pm 0.008$ \\
\midrule
RMF-L$x$      & $0.560 \pm 0.061$ & $0.209 \pm 0.144$ & $0.090 \pm 0.045$ & $0.945 \pm 0.038$ & $0.346 \pm 0.032$ & $0.068 \pm 0.002$ & $0.311 \pm 0.024$ & $0.757 \pm 0.024$ \\
RMF-L$v$      & $0.063 \pm 0.009$ & $0.029 \pm 0.002$ & $0.498 \pm 0.019$ & $0.586 \pm 0.017$ & $0.080 \pm 0.012$ & $0.051 \pm 0.002$ & $0.507 \pm 0.020$ & $0.594 \pm 0.007$ \\
RMF-E$x$      & $0.580 \pm 0.001$ & $0.116 \pm 0.009$ & $0.094 \pm 0.004$ & $0.942 \pm 0.005$ & $0.561 \pm 0.006$ & $0.122 \pm 0.008$ & $0.146 \pm 0.015$ & $0.901 \pm 0.014$ \\
RMF-E$v$      & $0.091 \pm 0.028$ & $0.040 \pm 0.003$ & $0.401 \pm 0.013$ & $0.676 \pm 0.019$ & $0.106 \pm 0.026$ & $0.056 \pm 0.003$ & $0.453 \pm 0.032$ & $0.643 \pm 0.017$ \\
RMF-S$x$      & $0.612 \pm 0.044$ & $0.202 \pm 0.125$ & $0.091 \pm 0.009$ & $0.944 \pm 0.016$ & $0.637 \pm 0.128$ & $0.841 \pm 1.257$ & $0.113 \pm 0.062$ & $0.923 \pm 0.040$ \\
RMF-S$v$      & $0.031 \pm 0.002$ & $0.042 \pm 0.002$ & $0.369 \pm 0.019$ & $0.701 \pm 0.012$ & $0.049 \pm 0.012$ & $0.050 \pm 0.003$ & $0.511 \pm 0.008$ & $0.584 \pm 0.022$ \\
\midrule
KGD-GG        & $0.029 \pm 0.011$ & $0.028 \pm 0.001$ & $0.540 \pm 0.019$ & $0.554 \pm 0.014$ & $0.071 \pm 0.048$ & $0.123 \pm 0.138$ & $0.552 \pm 0.012$ & $0.540 \pm 0.007$ \\
KGD-GL        & $0.026 \pm 0.007$ & $\underline{0.027} \pm 0.001$ & $\mathbf{0.576} \pm 0.016$ & $0.526 \pm 0.010$ & $0.380 \pm 0.301$ & $0.100 \pm 0.049$ & $0.294 \pm 0.248$ & $0.768 \pm 0.220$ \\
\midrule
RCM-CT           & $0.138 \pm 0.021$ & $0.044 \pm 0.007$ & $0.377 \pm 0.024$ & $0.715 \pm 0.013$ & $0.332 \pm 0.140$ & $0.083 \pm 0.009$ & $0.287 \pm 0.069$ & $0.791 \pm 0.063$ \\
RCM-sCT          & $0.464 \pm 0.032$ & $0.077 \pm 0.011$ & $0.139 \pm 0.021$ & $0.904 \pm 0.011$ & $0.351 \pm 0.041$ & $0.094 \pm 0.017$ & $0.283 \pm 0.052$ & $0.776 \pm 0.037$ \\
\midrule
MMDF-G        & $\underline{0.020} \pm 0.001$ & $0.029 \pm 0.000$ & $0.527 \pm 0.018$ & $0.560 \pm 0.002$ & $0.037 \pm 0.005$ & $0.048 \pm 0.005$ & $0.541 \pm 0.009$ & $0.553 \pm 0.006$ \\
MMDF-L        & $0.021 \pm 0.002$ & $\underline{0.027} \pm 0.001$ & $0.564 \pm 0.005$ & $\underline{0.524} \pm 0.002$ & $\mathbf{0.025} \pm 0.001$ & $\mathbf{0.040} \pm 0.001$ & $\mathbf{0.588} \pm 0.015$ & $\mathbf{0.505} \pm 0.022$ \\
MMDF-H        & $\underline{0.020} \pm 0.001$ & $0.030 \pm 0.000$ & $0.506 \pm 0.016$ & $0.580 \pm 0.010$ & $0.033 \pm 0.004$ & $0.046 \pm 0.001$ & $0.541 \pm 0.000$ & $0.545 \pm 0.016$ \\
\midrule
\ourmethod-SG & $0.023 \pm 0.001$ & $\mathbf{0.026} \pm 0.001$ & $0.571 \pm 0.018$ & $\mathbf{0.508} \pm 0.021$ & $\underline{0.027} \pm 0.008$ & $\underline{0.042} \pm 0.001$ & $\underline{0.580} \pm 0.011$ & $\underline{0.510} \pm 0.006$ \\
\ourmethod-C  & $\mathbf{0.018} \pm 0.004$ & $\underline{0.027} \pm 0.001$ & $0.563 \pm 0.014$ & $\underline{0.524} \pm 0.010$ & $0.042 \pm 0.005$ & $\underline{0.042} \pm 0.001$ & $0.568 \pm 0.008$ & $0.516 \pm 0.021$ \\
\ourmethod-G  & $0.029 \pm 0.003$ & $0.028 \pm 0.001$ & $\underline{0.574} \pm 0.014$ & $0.536 \pm 0.019$ & $0.110 \pm 0.056$ & $0.053 \pm 0.008$ & $0.526 \pm 0.071$ & $0.562 \pm 0.069$ \\
\ourmethod-H  & $0.023 \pm 0.003$ & $\mathbf{0.026} \pm 0.001$ & $0.567 \pm 0.002$ & $0.532 \pm 0.009$ & $0.062 \pm 0.052$ & $0.060 \pm 0.032$ & $0.562 \pm 0.013$ & $0.518 \pm 0.013$ \\
\bottomrule
\end{tabularx}
\label{tab:torus-results-full-b}
\end{subtable}
\label{tab:torus-results-full}
\end{table*}

\begin{table*}[h]
\centering
\ContinuedFloat
\begin{subtable}{0.7\linewidth}
\centering
\scriptsize
\setlength{\tabcolsep}{2pt}
\caption{RNA}
\begin{tabularx}{\linewidth}{l*{4}{Y}}
\toprule
 & \multicolumn{4}{c}{RNA} \\
\cmidrule(lr){2-5}
Method & \hdrflat \\
\midrule
\textit{Held-out} & $0.036$ & $0.384$ & $0.588$ & $0.504$ \\
\midrule
RFM           & $0.559 \pm 0.000$ & $1.665 \pm 0.157$ & $0.205 \pm 0.002$ & $0.942 \pm 0.004$ \\
\midrule
GFM-L         & $0.102 \pm 0.013$ & $0.433 \pm 0.004$ & $\underline{0.571} \pm 0.010$ & $0.572 \pm 0.017$ \\
GFM-E         & $0.540 \pm 0.002$ & $0.822 \pm 0.074$ & $0.313 \pm 0.008$ & $0.888 \pm 0.006$ \\
GFM-S         & $0.183 \pm 0.008$ & $0.493 \pm 0.002$ & $0.473 \pm 0.012$ & $0.756 \pm 0.006$ \\
GFM-MF        & $0.558 \pm 0.001$ & $1.195 \pm 0.233$ & $0.248 \pm 0.021$ & $0.926 \pm 0.007$ \\
\midrule
RMF-L$x$      & $0.564 \pm 0.007$ & $2.219 \pm 0.723$ & $0.150 \pm 0.088$ & $0.971 \pm 0.024$ \\
RMF-L$v$      & $0.195 \pm 0.008$ & $0.473 \pm 0.003$ & $0.545 \pm 0.001$ & $0.669 \pm 0.021$ \\
RMF-E$x$      & $0.490 \pm 0.007$ & $0.660 \pm 0.033$ & $0.343 \pm 0.007$ & $0.872 \pm 0.006$ \\
RMF-E$v$      & $0.336 \pm 0.008$ & $0.526 \pm 0.007$ & $0.449 \pm 0.011$ & $0.790 \pm 0.008$ \\
RMF-S$x$      & $0.560 \pm 0.000$ & $1.760 \pm 0.160$ & $0.209 \pm 0.011$ & $0.939 \pm 0.002$ \\
RMF-S$v$      & $0.094 \pm 0.006$ & $0.441 \pm 0.002$ & $0.498 \pm 0.001$ & $0.644 \pm 0.017$ \\
\midrule
KGD-GG        & $0.560 \pm 0.000$ & $1.759 \pm 0.052$ & $0.204 \pm 0.007$ & $0.948 \pm 0.004$ \\
KGD-GL        & $0.559 \pm 0.000$ & $1.580 \pm 0.278$ & $0.213 \pm 0.003$ & $0.940 \pm 0.007$ \\
\midrule
RCM-CT           & $0.416 \pm 0.004$ & $0.522 \pm 0.006$ & $0.407 \pm 0.004$ & $0.809 \pm 0.016$ \\
RCM-sCT          & $0.544 \pm 0.001$ & $0.850 \pm 0.011$ & $0.311 \pm 0.009$ & $0.889 \pm 0.001$ \\
\midrule
MMDF-G        & $0.400 \pm 0.276$ & $1.360 \pm 0.534$ & $0.290 \pm 0.123$ & $0.861 \pm 0.136$ \\
MMDF-L        & $\mathbf{0.043} \pm 0.003$ & $0.449 \pm 0.007$ & $0.565 \pm 0.006$ & $0.535 \pm 0.013$ \\
MMDF-H        & $0.396 \pm 0.283$ & $1.400 \pm 0.632$ & $0.308 \pm 0.167$ & $0.856 \pm 0.146$ \\
\midrule
\ourmethod-SG & $0.105 \pm 0.001$ & $0.437 \pm 0.009$ & $0.551 \pm 0.006$ & $\underline{0.519} \pm 0.010$ \\
\ourmethod-C  & $\underline{0.046} \pm 0.005$ & $\mathbf{0.418} \pm 0.003$ & $0.565 \pm 0.012$ & $0.532 \pm 0.012$ \\
\ourmethod-G  & $0.062 \pm 0.004$ & $\underline{0.428} \pm 0.008$ & $\mathbf{0.580} \pm 0.006$ & $\mathbf{0.508} \pm 0.011$ \\
\ourmethod-H  & $0.104 \pm 0.003$ & $0.438 \pm 0.008$ & $0.552 \pm 0.010$ & $0.520 \pm 0.008$ \\
\bottomrule
\end{tabularx}
\label{tab:torus-results-full-c}
\end{subtable}
\end{table*}

\providecommand{\hdrflatkmmdmmdcovnnatime}{\kmmd$\downarrow$ & \mmd$\downarrow$ & \cov$\uparrow$ & \nna & Time\,(s)$\downarrow$}

\begin{table*}[h]
\centering
\caption[Mesh results under RFM's own training budget (full)]{\textbf{Mesh results under RFM's own training budget (full).} We report \kmmd{}, \mmd{}, \cov{}, \nna{} (closest to 0.5 is best), and sampling time on the Bunny and Spot datasets (\textbf{bold} = best, \underline{underline} = second best; the held-out row is the val-vs-test reference and is excluded from ranking). RFM is trained for its shipped default of 100k iterations and sampled with 1{,}000 steps of its projected Euler integrator (RFM$_{1000}$); \ourmethod-H uses the heat cost, and is sampled with NFE\,$=$\,1. Time is the wall time to draw 2000 samples on one RTX 3090.}
\begin{subtable}{\linewidth}
\centering
\scriptsize
\setlength{\tabcolsep}{2pt}
\begin{tabularx}{\linewidth}{l*{10}{Y}}
\toprule
 & \multicolumn{5}{c}{Bunny ($k=10$)} & \multicolumn{5}{c}{Bunny ($k=50$)} \\
\cmidrule(lr){2-6} \cmidrule(lr){7-11}
Method & \hdrflatkmmdmmdcovnnatime & \hdrflatkmmdmmdcovnnatime \\
\midrule
\textit{Held-out} & $0.020$ & $0.014$ & $0.583$ & $0.493$ & --- & $0.025$ & $0.014$ & $0.583$ & $0.498$ & --- \\
\midrule
RFM$_{1000}$ & $\underline{0.048}$ & $\mathbf{0.015}$ & $\mathbf{0.569}$ & $\mathbf{0.514}$ & $\underline{49.5}$ & $\mathbf{0.028}$ & $\mathbf{0.016}$ & $\mathbf{0.565}$ & $\mathbf{0.516}$ & $\underline{48.7}$ \\
\midrule
\ourmethod-H & $\mathbf{0.030}$ & $\mathbf{0.015}$ & $\underline{0.558}$ & $\underline{0.537}$ & $\mathbf{0.16}$ & $\underline{0.034}$ & $\underline{0.017}$ & $\underline{0.544}$ & $\underline{0.550}$ & $\mathbf{0.13}$ \\
\bottomrule
\end{tabularx}
\caption{Bunny ($k=10$) and Bunny ($k=50$)}
\label{tab:mesh-clock-full-a}
\end{subtable}
\vspace{6pt}
\begin{subtable}{\linewidth}
\centering
\scriptsize
\setlength{\tabcolsep}{2pt}
\begin{tabularx}{\linewidth}{l*{10}{Y}}
\toprule
 & \multicolumn{5}{c}{Spot ($k=10$)} & \multicolumn{5}{c}{Spot ($k=50$)} \\
\cmidrule(lr){2-6} \cmidrule(lr){7-11}
Method & \hdrflatkmmdmmdcovnnatime & \hdrflatkmmdmmdcovnnatime \\
\midrule
\textit{Held-out} & $0.012$ & $0.016$ & $0.587$ & $0.500$ & --- & $0.014$ & $0.016$ & $0.584$ & $0.515$ & --- \\
\midrule
RFM$_{1000}$ & $\mathbf{0.016}$ & $\mathbf{0.017}$ & $\mathbf{0.561}$ & $\mathbf{0.522}$ & $\underline{54.8}$ & $\underline{0.023}$ & $\mathbf{0.017}$ & $\underline{0.540}$ & $\mathbf{0.549}$ & $\underline{54.7}$ \\
\midrule
\ourmethod-H & $\underline{0.018}$ & $\underline{0.021}$ & $\underline{0.493}$ & $\underline{0.601}$ & $\mathbf{0.24}$ & $\mathbf{0.015}$ & $\underline{0.019}$ & $\mathbf{0.549}$ & $\underline{0.559}$ & $\mathbf{0.24}$ \\
\bottomrule
\end{tabularx}
\caption{Spot ($k=10$) and Spot ($k=50$)}
\label{tab:mesh-clock-full-b}
\end{subtable}
\label{tab:mesh-results-full}
\end{table*}

\end{document}